\setlist[enumerate]{leftmargin=2em,noitemsep,topsep=0pt,parsep=0pt,partopsep=0pt}
\setlist[itemize]{noitemsep,topsep=0pt,parsep=0pt,partopsep=0pt} 
\pgfplotsset{compat=1.17}
\newcommand{\betweenline}[3]{\draw [black,->] (#1, {f(#1)}) node[dot]{} -- node[nodelbl] {#2} (#1, {#3(#1)}) ;}
\newcommand{\deltaline}[3]{\draw [black,<->] (#1, {f(-0.3413)}) -- node[nodelbl] {#2} (#1, {clipping(#1,#3)}) ;}
\pgfmathsetmacro\targets{-0.5}
\pgfmathsetmacro\targete{0.5}
\pgfmathsetmacro\domains{-1}
\pgfmathsetmacro\domaine{1}
\pgfmathsetmacro\domainxmax{1.1}
\pgfmathsetmacro\domainxdomain{2}
\pgfmathsetmacro\flblpos{0.32}
\pgfmathsetmacro\illscale{0.60}
\newcommand{\illustration}[4]{{\begin{tikzpicture}[declare function={
	f(\x)=-sin((12*\x-4)*180/3.141592653589793)*(6*\x-2)^2;
  h_rnd(\x)= 0 + (\x<1) * 1.2 * exp(-1/(1-(\x)^2)+1);
  h_sq(\x)= 0 + (\x<1) * 1.2;
	agg(\x)=f(\x)-34.55205898257652-7;
  aggt(\x)=f(\x)-min((abs(\x)-0.5)*4,1)*(34.55205898257652+10);
  clipping(\x,\d)=and(\targets<\x, \x<\targete) * f(\x) + or(\targets>\x, \x>\targete) * min(f(\x),15.908745487888613-\d);
  f_rnd(\x)=f(\x)-34.55205898257652*h_rnd((\x+0.856363)/0.15);
  f_sq(\x)=f(\x)-34.55205898257652*h_sq((\x+0.856363)/0.15);
  },
    dot/.style = {
      solid,
      draw,
      fill = white,
      circle,
      inner sep = 0pt,
      minimum size = 4pt
    },
    nodelbl/.style = {
      scale=\illscale,
      rectangle,
      fill = white,
      inner xsep=0cm, 
      inner ysep=1pt,
      midway,
    },]
\begin{axis}[
    samples=512, 
    domain=\domains:\domaine,
    ymin=#3, ymax=60,
    xmax=\domainxmax,
    axis lines=left,
    y=2cm/#4,
    x=7.75cm/\domainxdomain,
    grid=none,
    ticks=none,
    compat=newest,
    thick,
    axis line style = thick,
    axis on top=true,
]
	
	\addplot [black,smooth,name path=F] {f(x)} node[right,pos=\flblpos] {$f$};

  #1
  
  \draw [black,dotted,semithick] (\pgfkeysvalueof{/pgfplots/xmin}, {f(-0.3413)}) -- (-0.3413, {f(-0.3413)});
  
  #2

	\path[name path=ax_x] (\pgfkeysvalueof{/pgfplots/xmin},\pgfkeysvalueof{/pgfplots/ymin}) -- (\pgfkeysvalueof{/pgfplots/xmax},\pgfkeysvalueof{/pgfplots/ymin});

  \addplot [black,opacity=0,smooth,name path=F_limited,domain=\targets:\targete] {f(x)};
  \addplot [black,opacity=0,smooth,name path=ax_x_limited,domain=\targets:\targete] {#3};
	\addplot [gray!60] fill between [of=F_limited and ax_x_limited];
	
  \coordinate (0s) at (\targets,\pgfkeysvalueof{/pgfplots/ymin});
  \coordinate (0e) at (\targete,\pgfkeysvalueof{/pgfplots/ymin});
	\coordinate (As) at (\targets,{f(\targets)});
	\coordinate (Ae) at (\targete,{f(\targete)});
	\node at (barycentric cs:0s=1,0e=1,Ae=1,As=1) {$\targetr$};
\end{axis}
  \path
  ([shift={(-5\pgflinewidth,-5\pgflinewidth)}]current bounding box.south west);
\end{tikzpicture}}}
\let\@myref\ref
\newcommand{\refsec}[1]{Sec.\,\@myref{#1}}
\newcommand{\refseq}[1]{Sec.\,\@myref{#1}}
\newcommand{\refig}[1]{Fig.\,\@myref{#1}}
\newcommand{\refigs}[2]{Fig.\,\@myref{#1}-\@myref{#2}}
\newcommand{\reftbl}[1]{Table \@myref{#1}}
\newcommand{\refstep}[1]{Step \@myref{#1}}
\newcommand{\refalgo}[1]{Alg. \@myref{#1}}
\newcommand{\refchap}[1]{Chapter \@myref{#1}}
\newcommand{\reflst}[1]{List \@myref{#1}}
\newcommand{\refeq}[1]{(\@myref{#1})}
\newcommand{\refthm}[1]{Thm. \@myref{#1}}
\newcommand{\refcor}[1]{Cor. \@myref{#1}}
\newcounter{list}[section]
\theoremstyle{plain}
\theoremstyle{plain}
\theoremstyle{plain}
\theoremstyle{plain}
\newtheorem{thm}{\protect\theoremname}
\theoremstyle{plain}
\newtheorem{cor}{\protect\corollaryname}  
\theoremstyle{definition}
\theoremstyle{definition}
\theoremstyle{definition}
\providecommand{\claimname}{Claim}
\providecommand{\lemmaname}{Lemma}
\providecommand{\propositionname}{Proposition}
\providecommand{\theoremname}{Theorem}
\providecommand{\corollaryname}{Corollary} 
\providecommand{\definitionname}{Definition}
\providecommand{\assumptionname}{Assumption}
\providecommand{\remarkname}{Remark}
\newcommand{\todo}[1]{{\bf \color{red} [TODO] #1 }}
\newcommand{\kMat}{k_{\text{Mat}}}
\newcommand{\xvtilde}{\widetilde{\mathbf{x}}}
\newcommand{\xv}{\mathbf{x}}
\newcommand{\yv}{\mathbf{y}}
\newcommand{\Fc}{\mathcal{F}}
\newcommand{\RR}{\mathbb{R}}
\newcommand{\Rc}{\mathcal{R}}
\newcommand{\Iv}{\mathbf{I}}
\newcommand{\Kv}{\mathbf{K}}
\newcommand{\kv}{\mathbf{k}}
\newcommand{\bzero}{\boldsymbol{0}}
\newcommand{\ie}{i.e.\xspace}
\newcommand{\eg}{e.g.\xspace}
\newcommand{\ft}{\tilde{f}}
\newcommand{\targetr}{\Rc_{\rm target}}
\newcommand{\N}{\mathcal{N}}
\newcommand{\fmin}{f_{\min}}
\newcommand{\fmax}{f_{\max}}
\newcommand{\matern}{Mat\'ern\xspace}
\newcommand{\rbf}{RBF\xspace}
\newcommand\maternx [2]{Mat\'ern\,$\sfrac{#1}{#2}$\xspace}
\newcommand{\gpucb}{GP-UCB\xspace}
\newcommand{\maxvar}{MaxVar\xspace}
\newcommand{\noprefit}{Online-Learned Kernel\xspace}
\newcommand{\defense}{Defense\xspace}
\newcommand{\dynamic}{Dynamic\xspace}
\newcommand{\synthetic}{Synthetic1D\xspace}
\newcommand{\forrester}{Forrester1D\xspace}
\newcommand{\levy}{Levy1D\xspace}
\newcommand{\levyhard}{Levy-Hard1D\xspace}
\newcommand{\bohachevsky}{Bohachevsky2D\xspace}
\newcommand{\bohachevskyhard}{Bohachevsky-Hard2D\xspace}
\newcommand{\branin}{Branin2D\xspace}
\newcommand{\camelback}{Camelback2D\xspace}
\newcommand{\hartmann}{Hartmann6D\xspace}
\newcommand{\robotsmall}{Robot3D\xspace}
\newcommand{\robotlarge}{Robot4D\xspace}
\newcommand{\npsynthetic}{Synthetic1D-Online\xspace}
\newcommand{\npforrester}{Forrester1D-Online\xspace}
\newcommand{\npbohachevsky}{Bohachevsky2D-Online\xspace}
\newcommand\levydef [2]{Levy1D $\left(C=#1,\eta^2=#2\right)$\xspace}
\newcommand{\varforrester}{MaxVar-Forrester1D\xspace}
\newcommand{\varcamelback}{MaxVar-Camelback2D\xspace}
\newcommand{\varrobotsmall}{MaxVar-Robot3D\xspace}
\newcommand{\random}{$\mathsf{Random}$\xspace}
\newcommand{\noattack}{$\mathsf{No~Attack}$\xspace}
\newcommand{\aggressivesubtraction}{$\mathsf{Aggressive~Subtraction}$\xspace}
\newcommand{\clipping}{$\mathsf{Clipping}$\xspace}
\newcommand{\subtractionrnd}{$\mathsf{Subtraction~Rnd}$\xspace}
\newcommand{\shortsq}{$\mathsf{Sq}$\xspace}
\newcommand{\subtractionsq}{$\mathsf{Subtraction~Sq}$\xspace}
\DeclareMathOperator*{\successrate}{Success-Rate}
\DeclareMathOperator*{\normalizedcost}{Normalized-Cost}
\newcommand{\intersection}{\cap}
\newcommand{\braces}[1]{{\left\{#1\right\}}}
\newcommand{\parens}[1]{{\left(#1\right)}}
\newcommand{\sqparens}[1]{{\left[#1\right]}}
\newcommand{\bars}[1]{{\left|#1\right|}}
\newcommand{\dbars}[1]{{\left\|#1\right\|}}
\newcommand{\defun}[1]{%
\makeatletter
\expandafter\def\csname the#1\endcsname{\text{\it #1}}
\expandafter\def\csname #1\endcsname ##1{\csname the#1\endcsname\left(##1\right)}%
\makeatother
}
\newcommand{\defsetop}[2]{%
\makeatletter
 \expandafter\def\csname #1\endcsname ##1##2##3{%
  \expandafter\def\csname #1arg\endcsname{##1}%
  \expandafter\def\csname #1set\endcsname{##2}%
  \expandafter\def\csname #1cond\endcsname{##3}%
  \braces{##1##2\mid #2 ##3}%
 }%
\makeatother%
}
\def\_{\\[-0.3em]}
\newcommand{\newheuristic}[2]{%
 \def#1{%
  \ifmmode%
  h^\text{#2}\xspace%
  \else%
  \text{#2}\xspace%
  \fi%
 }%
}
\newheuristic{\lmcut}{LMcut}
\newheuristic{\mands}{M\&S}
\newheuristic{\pdb}{PDB}
\newheuristic{\ff}{FF}
\newheuristic{\ce}{CEA}
\newheuristic{\cg}{CG}
\newheuristic{\ad}{add}
\newheuristic{\lc}{LC}
\newcommand{\newUnitCostHeuristic}[2]{%
 \def#1{%
  \ifmmode%
  \hat{h}^\text{#2}\xspace%
  \else%
  \text{#2}\xspace%
  \fi%
 }%
}
\newUnitCostHeuristic{\lmcuto}{LMcut}
\newUnitCostHeuristic{\mandso}{M\&S}
\newUnitCostHeuristic{\ffo}{FF}
\newUnitCostHeuristic{\ceo}{CEA}
\newUnitCostHeuristic{\cgo}{CG}
\newUnitCostHeuristic{\ado}{add}
\newUnitCostHeuristic{\gco}{GoalCount}
\newUnitCostHeuristic{\lco}{LC}
\def\ref{\todo{Do not use ``ref'' directly!}}
\icmltitlerunning{Adversarial Attacks on Gaussian Process Bandits}
\begin{document}

\twocolumn[
\icmltitle{Adversarial Attacks on Gaussian Process Bandits}

\icmlsetsymbol{equal}{*}

\begin{icmlauthorlist}
\icmlauthor{Eric Han}{soc}
\icmlauthor{Jonathan Scarlett}{soc,ids}
\end{icmlauthorlist}

\icmlaffiliation{soc}{School of Computing, National University of Singapore}
\icmlaffiliation{ids}{Department of Mathematics \& Institute of Data Science, National University of Singapore}

\icmlcorrespondingauthor{Eric Han}{eric{\textunderscore}han@nus.edu.sg}
\icmlcorrespondingauthor{Jonathan Scarlett}{scarlett@comp.nus.edu.sg}

\icmlkeywords{Machine Learning, Online Learning \& Bandits, Adversarial Learning \& Robustness, ICML}

\vskip 0.3in
]

\printAffiliationsAndNotice{}  

\begin{abstract}
Gaussian processes (GP) are a widely-adopted tool used to sequentially optimize black-box functions, where evaluations are costly and potentially noisy. Recent works on GP bandits have proposed to move beyond random noise and devise algorithms robust to {\em adversarial attacks}. This paper studies this problem from the attacker's perspective, proposing various adversarial attack methods with differing assumptions on the attacker's strength and prior information. Our goal is to understand adversarial attacks on GP bandits from theoretical and practical perspectives. We focus primarily on \emph{targeted} attacks on the popular GP-UCB algorithm and a related elimination-based algorithm, based on adversarially perturbing the function $f$ to produce another function $\ft$ whose optima are in some target region $\targetr$. Based on our theoretical analysis, we devise both white-box attacks (known $f$) and black-box attacks (unknown $f$), with the former including a Subtraction attack and Clipping attack, and the latter including an Aggressive subtraction attack. We demonstrate that adversarial attacks on GP bandits can succeed in forcing the algorithm towards $\targetr$ even with a low attack budget, and we test our attacks' effectiveness on a diverse range of objective functions.
\end{abstract}

\section{Introduction}

Gaussian Processes (GPs) are commonly used to sequentially optimize unknown objective functions whose evaluations are costly.
This method has successfully been applied to a litany of applications, such as hyperparameter tuning \cite{snoek2012practical,swersky2013multi},
robotics \cite{jaquier2020bayesian}, recommender systems \cite{10.1145/2645710.2645733} and more.  In many of these applications, corruptions in the measurements are not sufficiently well captured by random noise alone.  For instance, one may be faced with rare outliers (e.g., due to equipment failures), or bad actors may influence the observations (e.g., malicious users in recommender systems).

These uncertainties have been addressed via the consideration of an \emph{adversary} in the GP bandit optimization problem; 
function observations are not only subject to random noise, but also adversarial noise. 
With this additional requirement, the optimization not only becomes more robust to uncertainty, but can also maintain robustness in the presence of malicious adversaries.
The notion of adversarial attacks on bandit algorithms appears to have been inspired by the extensive literature on adversarial attacks on deep neural networks \cite{42503}, although the associated approaches are generally different.

\subsection{Related Work}

A myriad of GP optimization methods have been developed in the literature to overcome the various forms of uncertainty, and achieve some associated notion of robustness:
\begin{itemize}
    \item \textit{Presence of outliers}, where some function evaluations are highly unreliable \cite{Mar18}.
    \item \textit{Random perturbations to sampled points}, where the sampled points are subject to random uncertainty \cite{Bel17,Nog16}.
    \item \textit{Adversarial perturbations to the final point}, where the final recommendation $x$ may be perturbed up to some level $\delta$ \cite{Ber10,Bog18}.
    \item \textit{Adversarial perturbations to samples}, where the observations are adversarially corrupted up to some maximum budget \cite{Bog20}.
\end{itemize}
These methods are primarily focused on proposing methods that defend against the proposed uncertainty model to improve robustness for GP optimization.
There has been minimal work studying the problem from an attacker's perspective in the literature, 
\ie, what kinds of attacks would be successful against non-robust algorithms.
We examine such a perspective in this work, focusing on adversarial perturbations to samples \cite{Bog20}.

Our study is related to that of attacks on stochastic linear bandits \cite{Gar20}, 
but we move to the GP setting which is inherently non-linear and poses substantial additional challenges.  We focus in particular popular algorithms into selecting from a specific set of (typically suboptimal) {\em target actions} as much as possible, using GP-UCB \cite{Sri09} and a related elimination-based algorithm as representative examples.

Prior to \cite{Gar20}, analogous works studied attacks and defenses for multi-armed bandits \cite{Jun18,Lyk18}.
These are less related here, since they assume finite domains with independent arms.

\subsection{Contributions}

The main contributions of this paper are as follows:
\begin{enumerate}
    \item We theoretically characterize conditions under which an adversarial attack can succeed against GP-UCB or elimination even with an attack budget far smaller than the time horizon, both in the cases of the function being known (white-box attack) and unknown (black-box attack) to the attacker.
    \item We present various attacks inspired by our analysis:
        \begin{enumerate}
          \item with knowledge of the function: Subtraction Attack (two variants), Clipping Attack.
          \item without knowledge of the function: Aggressive Subtraction Attack (two variants).
        \end{enumerate}
    We demonstrate the effectiveness of these attacks via experiments on a diverse range of objective functions.
\end{enumerate}
More broadly, we believe that our work fills an important gap in the literature by moving beyond finite domains and/or linear functions, and giving the first study of robust GP bandits from the attacker's perspective.

\section{Setup}

We consider the setup proposed in \cite{Bog20}, described as follows.
The player seeks to maximize an unknown function $f(\xv)$ over $\xv \in D$, 
and we model the smoothness of $f$ by assuming that it has RKHS norm at most $B$ according to some kernel $k$, 
\ie, $f \in \Fc_k(B)$ where $\Fc_k(B) = \{ f \,:\, \|f\|_k \le B\}$.  
We focus primarily on the widely-adopted \matern kernel, defined as
\begin{equation}
    \kMat(x,x') = \dfrac{2^{1-\nu}}{\Gamma(\nu)} \bigg(\dfrac{\sqrt{2\nu}\dbars{x - x'}}{l}\bigg)^{\nu}  J_{\nu}\bigg(\dfrac{\sqrt{2 \nu}\dbars{x - x'}}{l} \bigg), \label{eq:kMat}
\end{equation}
where $l>0$ denotes the length-scale, 
$\nu > 0$ is the smoothness parameter, 
and $J_{\nu}$ denotes the modified Bessel function.
This kernel provides useful properties that we can exploit in our theoretical analysis, 
and is also one of the most widely-adopted kernels in practice. 
In addition, it closely resembles the squared exponential (SE) kernel in the case that $\nu$ is large \cite{Ras06}.

In the non-corrupted setting (\eg, see \cite{Sri09,Cho17}), 
the player samples $\xv_t$ and observes $y_t = f(\xv_t) + z_t$, 
where $z_t \sim \N(0,\sigma^2)$ is random noise.  
In the presence of adversarial noise, we consider corrupted observations taking the form
\begin{equation}
    y_t  = f(\xv_t) + c_t + z_t,
\end{equation}
where $z_t \sim \N(0,\sigma^2)$, and $c_t$ is adversarial noise injected at time $t$. 
We consider $c_t$ as being chosen by an {\em adversary} or {\em attacker}.
In order to make the problem meaningful, the adversary's power should be limited, so we constrain
\begin{equation}
    \sum_{t=1}^n \bars{c_t} \le C \label{eq:budget}
\end{equation}
for some total corruption budget $C$.  
Following \cite{Bog20}, we adopt the same definition of regret as the uncorrupted setting: $R_T = \sum_{t=1}^T r_t$, where $r_t = f(\xv^*) - f(\xv_t)$, and 
where $\xv^*$ is any maximizer of $f$.  

We note that $c_t$ could also potentially be considered as part of the objective when defining regret; 
the two notions are closely related, and differ by at most $O(C)$ \cite{Lyk18,Bog20}.
For all of our results, this connection ensures that a successful attack (\ie linear regret in $T$) with respect to the above notion implies the same with respect to the alternative regret notion.

\subsection{Knowledge Available to the Adversary}

Naturally, the ability to attack/defend in the preceding setup may vary significantly depending on what is assumed to be known to the adversary. 
For instance, the adversary may or may not know $f$, know $\xv_t$, know which algorithm the player is using, and so on. 
In addition, as noted in the literature on robust bandit problems (\eg, \cite{Lyk18,Bog20a}), 
one may consider the case that the player can randomize $\xv_t$, and the adversary knows the distribution but not the specific choice.\footnote{In such a case, $c_t$ in \refeq{eq:budget} is typically replaced by $\max_{\xv} |c_t(\xv)|$, where $c_t(\cdot)$ is the adversary's corruption function at time $t$.}

In this paper, our focus is on attacking widely-adopted deterministic algorithms such as GP-UCB \cite{Sri09}, 
so we do not consider such randomization.
In this case, knowing the ``distribution of $\xv_t$'' becomes equivalent to knowing $\xv_t$, 
and we assume that such knowledge is available throughout the paper. 
We consider both the cases of $f$ being known and unknown to the attacker.

\subsection{Targeted vs.~Untargeted Attacks}

At this stage, we find it useful to distinguish between two types of attack.
In an {\em untargeted attack}, the adversary's sole aim is to make the player's cumulative regret as high as possible (e.g., $R_T = \Omega(T)$). 
In contrast, in a {\em targeted attack}, the adversary's goal is to make the player choose actions in a particular region $\targetr \subseteq D$. 
This generalizes the finite-arm notion of seeking to make the player pull a particular target arm \cite{Jun18}.

Of course, a targeted attack can also be used to ensure a large regret: 
If $\targetr$ satisfies the property that every $\xv \in \targetr$ has $r(\xv) = \Omega(1)$, 
then any attack that forces $\Omega(T)$ selections in $\targetr$ also ensures $R_T = \Omega(T)$.

More generally, to assess the performance of a targeted attack, we define the quantity after $T$ rounds
\begin{equation}
    N^{\rm target}_T = \sum_{t=1}^T \braces{ \xv_t \in \targetr },
\end{equation}
counting the number of arm pulls within the target region.

\section{Theoretical Study} \label{sec:theory}

This section introduces some attacks and gives conditions under which they provably succeed even when the budget $C$ is small compared to the time horizon $T$.
These results are not only of interest in their own right, 
but will also motivate several of our other attacks (without theory) in \refsec{sec:attacks}. 

Motivated by successful attacks on bandit problems \cite{Jun18,Gar20}, 
we adopt the idea of perturbing the function {\em outside} the $\targetr$ in a manner such that the perturbed function's maximizer is in the $\targetr$.
Then, a (non-robust) optimization algorithm will steer towards $\Rc_{\rm target}$ and stay there, with any points sampled in $\Rc_{\rm target}$ remaining unperturbed.

This idea is somewhat trickier to implement in the GP bandit setting than the finite-arm or linear bandit setting, and the details are given below.

\subsection{Optimization Algorithms} \label{sec:opt_algs}

Our attack methods can be applied to any GP bandit algorithm, and \refthm{thm:main} below states general conditions under which the attack succeeds. As two specific examples, we will consider the following widely-used algorithms that are representative of broader techniques in the literature:
\begin{itemize}
    \item \emph{GP-UCB} \cite{Sri09}: The $t$-th point is selected according to \begin{equation}
        \xv_t = \argmax_{\xv \in D} \mu_{t-1}(\xv) + \beta^{1/2}_t \sigma_{t-1}(\xv),
    \end{equation}
    for some suitably-chosen exploration parameter $\beta_t$, 
    where $\mu_{t-1}(\cdot)$ and $\sigma_{t-1}(\cdot)$ are the posterior mean and standard deviation after sampling $t-1$ points \cite{Ras06}.
    \item \emph{MaxVar + Elimination} \cite{contal2013parallel}: At each time, define the set of {\em potential maximizers} $M_t$ to contain all points whose UCB 
    (\ie, $\mu_{t-1}(\xv) + \beta^{1/2}_t \sigma_{t-1}(\xv)$) is at least as high as the highest LCB (\ie, $\mu_{t-1}(\xv) - \beta^{1/2}_t \sigma_{t-1}(\xv)$).
    Then, select the point in $M_t$ with the highest posterior variance.
\end{itemize}
For both of these algorithms, the exploration parameter $\beta_t$ is set to the value used in theoretical studies (e.g., \cite{Cho17}):
\begin{equation}
    \beta_t^{1/2} = B + \sigma \lambda^{-1/2} \sqrt{2(\lgamma_{t-1} + \ln(1 / \delta))}.
\end{equation}
where $\lambda$ is a free parameter (e.g., $\lambda = 1$), 
$\delta$ is the target error probability, 
and $\gamma_t$ is the maximum information gain at time $t$ \cite{Sri09}. 
The latter quantity scales as $\gamma_t = \tilde{O}\big( T^{\frac{d}{2\nu + d}} \big)$ for the \matern kernel \cite{Vak20a}.

The reason for focusing on these algorithms is that they have well-known guarantees on the regret in the uncorrupted setting, 
and such guarantees turn out to be a key ingredient in establishing the success of attacks in the corrupted setting. 
While these algorithms are far from exhaustive, they serve as representative examples of general non-robust algorithms.

\subsection{Conditions for a Successful Attack}

Our main theoretical result motivating our attacks is stated as follows.
This result applies to {\em any} algorithm that has non-robust guarantees of not playing too many strictly suboptimal actions.  The asymptotic notation $o(1)$ is defined with respect to the limit $T \to \infty$.

\begin{thm} \label{thm:main}
    Consider an adversary that performs an attack shifting the original function $f \in \Fc_k(B)$ to another function $\tilde{f}$ (\ie, set $c_t = \tilde{f}(\xv_t) - f(\xv_t)$ at time $t$ as long as the corruption budget permits it).
    Suppose that the following properties hold for some $\Delta > 0$ and $B_0 > 0$:
    \begin{enumerate}[label=(\roman*)]
        \item For any $\xv$ that is $\Delta$-optimal for $\tilde{f}$, it holds that both $\xv \in \targetr$ and $\tilde{f}(\xv) = f(\xv)$;
        \item For all $\xv \in D$, it holds that $|f(\xv) - \tilde{f}(\xv)| \le B_0$;
        \item It holds that $\|\tilde{f}\|_k \le B$ (\ie, $\tilde{f} \in \Fc_k(B)$).
    \end{enumerate}
    In addition, suppose that in the absence of an adversary, the optimization algorithm guarantees, for any $f \in \Fc_k$ and with probability at least $1-\delta$, that at most $N_0$ played actions are $\Delta$-suboptimal, for some $N_0$ depending on $(T,\Delta,\delta)$.

    Then, in the presence of the adversary, with probability at least $1-\delta$, the attack succeeds in forcing $T(1-o(1))$ actions to be played from $\targetr$, while using an attack budget $C$ of at most $B_0 N_0$.
\end{thm}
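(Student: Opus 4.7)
The plan is to reduce the adversarial game to a non-adversarial game played against $\tilde{f}$, and then invoke the stated non-robust guarantee directly. The main technical subtlety is an apparent circularity: the non-robust guarantee against $\tilde{f}$ (valid since $\tilde{f} \in \Fc_k(B)$ by condition (iii)) implies a small total corruption cost, but the reduction itself requires that the corruption budget never runs out. I would untangle this with a stopping-time argument.

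Define $\tau$ as the first round at which executing $c_t = \tilde{f}(\xv_t) - f(\xv_t)$ would exceed the budget $C = B_0 N_0$. For every $t < \tau$ the algorithm observes $y_t = \tilde{f}(\xv_t) + z_t$, so its trajectory up to round $\tau-1$ is indistinguishable from a hypothetical non-corrupted game with true function $\tilde{f}$. Coupling the two games and applying the assumed non-robust guarantee to $\tilde{f}$ yields an event $E$ of probability at least $1-\delta$ on which at most $N_0$ of the hypothetical actions over the full horizon $T$ are $\Delta$-suboptimal for $\tilde{f}$.

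I would then argue on event $E$ that $\tau > T$. By the contrapositive of (i), any $\xv_t$ with $\tilde{f}(\xv_t) \ne f(\xv_t)$ must be $\Delta$-suboptimal for $\tilde{f}$; hence the rounds that actually draw on the budget form a subset of the $\Delta$-suboptimal plays, of total size at most $N_0$. By (ii) each such round costs at most $B_0$, so the cumulative corruption is at most $B_0 N_0 = C$, which forces $\tau > T$ and confirms that real and hypothetical trajectories coincide throughout the horizon.

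With the two trajectories identified on $E$, a second application of the contrapositive of (i) gives that every play outside $\targetr$ is $\Delta$-suboptimal for $\tilde{f}$, and is therefore counted among the at-most-$N_0$ suboptimal plays. This yields $N^{\rm target}_T \ge T - N_0 = T(1-o(1))$ provided $N_0 = o(T)$, which is indeed the case for GP-UCB and the elimination-based algorithm under the \matern kernel via the sublinear $\gamma_T$ rate cited earlier. The step I expect to require the most care is the stopping-time coupling itself, since the non-robust guarantee is a trajectory-level high-probability statement rather than a per-round one, so one must be careful to instantiate it on the coupled $\tilde{f}$-process before using it to certify that the budget suffices.
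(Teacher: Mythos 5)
Your proposal is correct and takes essentially the same route as the paper's proof: reduce the corrupted run to an uncorrupted run on $\tilde{f}$ (legitimate by property (iii)), use property (i) to conclude that every $\Delta$-optimal play for $\tilde{f}$ lies in $\targetr$ and costs nothing, and use property (ii) together with the $N_0$ bound to cap the spent budget at $B_0 N_0$ and the off-target plays at $N_0 = o(T)$. Your explicit stopping-time coupling to rule out premature budget exhaustion is a careful elaboration of a step the paper leaves implicit, not a genuinely different argument.
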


\begin{proof}
By the first assumed property, whenever the algorithm plays actions that are $\Delta$-optimal {\em with respect to $\tilde{f}$}, 
it holds that $c_t = 0$, and no budget is spent.
The third property allows us to bound the number of actions failing to satisfy such a property by $N_0$, 
where this bound holds with probability at least $1-\delta$ by the assumption on the algorithm.
The second property ensures that each action uses an amount of budget satisfying $|c_t| \le B_0$, and the result follows.
\end{proof}

Combining this result with recently-established results from \cite{Cai21}, 
we obtain the following corollary for the GP-UCB and elimination algorithms. 

\begin{cor} \label{cor:main}
    Under the setup of \refthm{thm:main}, 
    with probability at least $1-\delta$ (for $\delta$ used in the algorithms), 
    the attack succeeds in forcing $T(1-o(1))$ actions to be played from $\targetr$ against GP-UCB 
    (respectively, the elimination algorithm) using an attack budget $C$ of at most $B_0 N_{\max}(\Delta,T)$ (respectively, $B_0 N'_{\max}(\Delta)$), where
    \begin{align}
        N_{\max}(\Delta,T) &= \max \Big\{ N \,:\, N\leq \frac{C_1\gamma_N\beta_T}{\Delta^2} \Big\}, \label{eq:Nmax} \\
        N'_{\max}(\Delta) &= \max \Big\{ N \,:\, N\leq \frac{4C_1\gamma_N\beta_N}{\Delta^2} \Big\}, \label{eq:Nmax'}
    \end{align}
    with $C_1  = \frac{8\lambda^{-1}}{\log(1+\lambda^{-1})}$.
\end{cor}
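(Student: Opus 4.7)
The plan is to reduce the corollary to Theorem~\ref{thm:main} by supplying an explicit value of $N_0$ for each of the two algorithms. The theorem says: if, in the uncorrupted setting, the algorithm is guaranteed (with probability $\ge 1-\delta$) to play at most $N_0$ actions that are $\Delta$-suboptimal with respect to the adversary's perturbed function $\tilde f$, then the attack succeeds with a budget of at most $B_0 N_0$. Hypotheses (i)--(iii) of the theorem are taken as given in the corollary setup; conditions (ii) and (iii) ensure that $\tilde f \in \Fc_k(B)$ so that the uncorrupted guarantee applied to $\tilde f$ is valid. So the remaining task is purely to import a suitable bound on the count of $\Delta$-suboptimal plays for GP-UCB and for MaxVar + Elimination.

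Next, I would cite the recently-established per-gap suboptimality bounds of \cite{Cai21}, which hold with probability at least $1-\delta$ under the very same confidence-bound setting used to choose $\beta_t$ in \refsec{sec:opt_algs}. Their argument combines the standard confidence-bound inequality, namely that the instantaneous regret at round $t$ is controlled by $2\beta_t^{1/2}\sigma_{t-1}(\xv_t)$, with the information-gain inequality $\sum_{t=1}^N \sigma_{t-1}^2(\xv_t) \le C_1\gamma_N$ where $C_1 = \tfrac{8\lambda^{-1}}{\log(1+\lambda^{-1})}$. Isolating the rounds whose instantaneous regret exceeds $\Delta$ and squaring yields, for GP-UCB, the implicit inequality $N \le C_1\gamma_N\beta_T/\Delta^2$, and for the elimination algorithm the slightly tighter-in-$\beta$ inequality $N \le 4C_1\gamma_N\beta_N/\Delta^2$. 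Taking the largest $N$ consistent with each bound recovers exactly $N_{\max}(\Delta,T)$ in \refeq{eq:Nmax} and $N'_{\max}(\Delta)$ in \refeq{eq:Nmax'}.

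Plugging these into Theorem~\ref{thm:main} immediately gives the claim: for GP-UCB the attack forces $T(1-o(1))$ plays inside $\targetr$ with budget $\le B_0 N_{\max}(\Delta,T)$, and analogously for elimination with $N'_{\max}(\Delta)$. The probability accounting is consistent: the single event of probability $\ge 1-\delta$ on which the confidence bands contain $\tilde f$ is the same event used by \cite{Cai21} to obtain the per-gap bounds, so no union bound is needed.

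The main (mild) obstacle is bookkeeping rather than mathematics: one must verify that the Cai--Scarlett bounds are stated in a form that counts \emph{every} $\Delta$-suboptimal play (as required by hypothesis~(i) of Theorem~\ref{thm:main}), not just the cumulative regret, and that they apply to $\tilde f$ using the same $\beta_t$-schedule the player would have chosen for $f$; this is fine because $\tilde f$ has the same RKHS-norm bound $B$ by~(iii). A secondary sanity check is that $N_{\max}$ and $N'_{\max}$ are indeed $o(T)$ for the \matern kernel, which follows from $\gamma_T = \tilde O\bigl(T^{d/(2\nu+d)}\bigr)$ and polylog growth of $\beta_T$, giving a solution to the implicit inequality of order $\tilde O\bigl(T^{d/(2\nu+d)}\bigr) = o(T)$; this confirms the attack uses sublinear budget while dominating the horizon.
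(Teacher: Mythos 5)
Your proposal is correct and follows essentially the same route as the paper: the corollary is proved by instantiating Theorem~\ref{thm:main} with $N_0$ taken from the bounds of \cite{Cai21} on the number of $\Delta$-suboptimal actions played by GP-UCB (giving $N_{\max}(\Delta,T)$) and by the elimination algorithm (giving $N'_{\max}(\Delta)$), on the same probability-$1-\delta$ event. Your additional sketch of how \cite{Cai21} derives those counts, and the checks that the guarantee applies to $\tilde f\in\Fc_k(B)$ and that the budget is $o(T)$ for the \matern kernel, go beyond the paper's short citation-based proof but do not change the argument.
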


\begin{proof}
This result follows readily by combining \refthm{thm:main} with existing bounds on the number of $\Delta$-suboptimal actions (i.e., having $f(\xv) < \max_{\xv'} f(\xv')-\Delta$) from \cite{Cai21}; 
in the standard (non-adversarial) GP bandit setting, we have the following:
\begin{enumerate}[label=(\roman*)]
    \item For \gpucb, the number of actions chosen in $T$ rounds that are $\Delta$-suboptimal is at most $N_{\max}$ with probability at least $1-\delta$. 
    \item For the elimination algorithm, this can further be reduced to $N'_{\max}$. 
\end{enumerate}\end{proof}

While the preceding results are phrased as though the adversary had perfect knowledge of $f$, 
we will highlight further special cases below where this need not be the case.

As discussed in \cite{Cai20}, as long as the smoothness parameter $\nu$ is not too small, $N_{\max}(\Delta,T))$ scales slowly with $T$ (e.g., $\sqrt{T}$ or $T^{0.1}$), and $N'_{\max}(\Delta)$ has no dependence on $T$ at all. 
Hence, for such sufficiently smooth functions with the \matern kernel, 
\refthm{thm:main} gives conditions under which an attack succeeds with a far smaller budget than the time horizon.
Note that as $\nu$ grows large for the \matern kernel, $N_{\max}$ behaves as $\frac{T^{\epsilon'}}{\Delta^{2+\epsilon}}$ for some small $\epsilon$ and $\epsilon'$, 
and $N’_{\max}$ behaves as $\frac{1}{\Delta^{2+\epsilon}}$ for some small $\epsilon$.

Moreover, in view of \refthm{thm:main}, the stronger the guarantee on the number of $\Delta$-suboptimal actions in the uncorrupted setting, the smaller the attack budget will be in a counterpart of \refcor{cor:main}.  For instance, if we were to attack the recent {\em batched} elimination algorithm of \cite{Li21}, the required budget would be sublinear in $T$ for all $\nu > \frac{1}{2}$, regardless of $d$.  However, we prefer to focus on the above non-batched algorithm, since it simpler and more standard in the literature.

\paragraph{Sufficiency vs. Necessity.}
We note that \refthm{thm:main} only states sufficient conditions for a successful attack; 
analogous necessary conditions are difficult, and appear to be absent even in simpler settings studied previously (e.g., linear bandits).  We note that 
condition (ii) is not always necessary, 
because the two functions could differ drastically at some far-away suboptimal point that is never queried.
Condition (iii) is also not always necessary, because \clipping succeeds without it.
We believe that condition (i) is “closest to necessary”, 
but even so, although the attacker’s budget would get exhausted without it, 
it could be the case that the “damage has already been done”, and the attack still succeeds.

\paragraph{The Role of RKHS Norm.}
As noted by a reviewer, the role of RKHS on the attack success is subtle. 
On the one hand, a higher RKHS norm could be associated with more local optima that the attacker can exploit.
Furthermore, it could additionally help the attacker if they can perturb $f$ using a constant fraction of the total budget $B$. 
On the other hand, if one asks which functions are the hardest to attack, then a higher $B$ implies more flexibility in coming with such a function. 
In general, we expect that there is no direct correspondence between $B$ and the attack difficulty; 
in particular, among functions with a large RKHS norm, some are easier to attack, and some are harder.

\subsection{Applications to Specific Scenarios} \label{sec:specific}

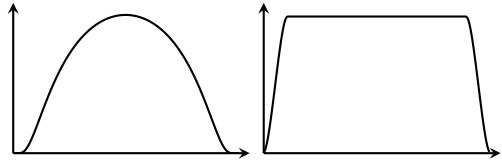
\begin{figure}[t!]
    \centering
    \begin{tikzpicture}[declare function={
        h(\x)= 0 + (\x<1) * exp(-1/(1-(\x)^2));
      },
        dot/.style = {
          solid,
          draw,
          fill = white,
          circle,
          inner sep = 0pt,
          minimum size = 4pt
        },
        nodelbl/.style = {
          scale=\illscale,
          rectangle,
          fill = white,
          inner xsep=0cm, 
          inner ysep=1pt,
          midway,
        },]
    \begin{axis}[
        samples=512, 
        domain=-1:1,
        ymin=0, ymax=0.4,
        xmax=\domainxmax,
        axis lines=left,
        y=2cm/0.4,
        x=3cm/\domainxdomain,
        grid=none,
        ticks=none,
        compat=newest,
        thick,
        axis line style = thick,
        axis on top=true,
    ]
        \addplot [black,smooth,name path=F] {h(x)}; 
    \end{axis}
    \path
    ([shift={(-5\pgflinewidth,-5\pgflinewidth)}]current bounding box.south west);
    \end{tikzpicture}
    \begin{tikzpicture}[declare function={
        h(\x)= 0 + (\x<1) * exp(-1/(1-(\x)^2));
      },
        dot/.style = {
          solid,
          draw,
          fill = white,
          circle,
          inner sep = 0pt,
          minimum size = 4pt
        },
        nodelbl/.style = {
          scale=\illscale,
          rectangle,
          fill = white,
          inner xsep=0cm, 
          inner ysep=1pt,
          midway,
        },]
    \begin{axis}[
        samples=512, 
        domain=-1:1,
        ymin=0, ymax=1.1,
        xmax=\domainxmax,
        axis lines=left,
        y=2cm/1.1,
        x=3cm/\domainxdomain,
        grid=none,
        ticks=none,
        compat=newest,
        thick,
        axis line style = thick,
        axis on top=true,
    ]
        \addplot [black,smooth,name path=CBUMP] table [x=x, y=y, col sep=comma] {CBUMP.csv};
    \end{axis}
    \path
    ([shift={(-5\pgflinewidth,-5\pgflinewidth)}]current bounding box.south west);
    \end{tikzpicture}
    \caption{Illustration of the various function constructions -- bump function (left) and convolved bump function (right).\label{fig:appendix-bump}}
\end{figure}

Before we discuss two natural approaches to ensuring the conditions in \refthm{thm:main},
we describe some useful function constructions from the existing literature (see \refig{fig:appendix-bump}):
\begin{itemize}
    \item \emph{Bump function in spatial domain}: It is known from \cite{Bul11,Cai20} that the function
    \begin{equation}
        h(\xv) = \exp\bigg(\frac{-1}{1-\|\xv\|^2}\bigg) \boldsymbol{1}\braces{ \|\xv\| \le 1 }, \label{eq:h_bump}
    \end{equation}
    has bounded RKHS norm under the \matern kernel, and is non-negative with bounded support and maximum at $\xv = \bzero$. 
    Moreover, we can form a scaled version of $h$ with height $\epsilon$ and support of radius $w$, 
    and the resulting RKHS norm scales as $O\big(\frac{\epsilon}{w^\nu}\big)$ 
    (\ie, it can be made at most $B$ with $w = \Theta\big(\big( \frac{\epsilon}{B} \big)^{1/\nu}\big)$).
    \item \emph{Convolved bump function}: Following \cite{Cai20}, we consider taking the width-$w$ height-$1$ bump function from the previous dot point, and convolving it with a function that equals $1$ for $\xv \in S$, and $0$ for $\xv \notin S$, 
    where $S$ is some compact subset of $\RR^d$. Note that in \citep[Lemma 6]{Cai20}, $S$ was ball-shaped, but the analysis extends to the general case. Then, after suitable scaling to make the new function's maximum height equal to some value $\tau$, we obtain a function $g(\xv)$ satisfying the following:
    \begin{itemize}
        \item If the ball of radius $w$ around $\xv$ is contained in $S$, then $g(\xv) = \tau$;
        \item If the ball of radius $w$ around $\xv$ is completely outside $S$, then $g(\xv) = 0$;
        \item In all cases in between these, $g(\xv) \in (0,\tau)$.
    \end{itemize}
    Moreover, the RKHS norm satisfies $\|g\|_k \le O\big( \frac{\tau \cdot {\rm vol}(S)}{w^{\nu}} \big)$ \cite{Cai20}, so if ${\rm vol}(S) = O(1)$ we can ensure an RKHS norm of at most $B$ with a choice of the form $w = \Theta\big( \big( \frac{\tau}{B} \big)^{1/\nu} \big)$.
\end{itemize}

We now discuss two general attack approaches, which we will build on later in \refsec{sec:attacks}.

\paragraph{Approach 1.} To guarantee that $\|\tilde{f}\|_k \le B$, it is useful to adopt the decomposition
\begin{equation}
    \|\tilde{f}\|_k \le \|f\|_k + \| \tilde{f} - f \|_k,
\end{equation}
which follows from the triangle inequality.  Hence, if the ``RKHS norm budget'' $B$ is not entirely used up by $f$ (\eg, $\|f\|_k = \frac{B}{2}$), then we can maintain $\|\tilde{f}\|_k \le B$ by setting
\begin{equation}
    \tilde{f}(\xv) = f(\xv) - h(\xv)
\end{equation}
for some $h(\xv)$ with suitably bounded norm (\eg, $\|h\|_k \le \frac{B}{2}$). 
The convolved bump function described above (which approximates a rectangular function) is useful for constructing $h$.

This approach is immediately applicable to the case that the interior of $\targetr$ contains a local maximum, \eg, see \refig{fig:att_sub}.  In this case, we can simply form bumps to ``swallow'' any higher maxima (or maxima within $\Delta$ of the target peak).  As long as the convolved bump functions used for this purpose have a small enough height and transition width to maintain that $\|h\|_k \le B - \|f\|_k$, the resulting function will satisfy the conditions of \refthm{thm:main}.

However, finding the precise locations of such bumps and setting their parameters may be difficult, or even entirely impossible if $f$ is unknown, even if there is plenty of RKHS norm budget that can be utilized.  

\paragraph{Approach 2.} In view of the above discussion, we propose a more aggressive approach that seeks to push the function values downward equally (or approximately equally) for {\em all} points outside $\targetr$.  This can be done by taking a convolved bump function that covers the entire domain with some height $h_{\max}$ (\ie, $h(\xv) = -h_{\max}$), but then ``re-adding'' a bump that covers $\targetr$. 
Depending on which is more convenient, the transition region could lie either inside or outside $\targetr$.
See \refig{fig:att_aggr} for an illustration.

Once again, the preceding attack can be applied while maintaining the conditions \refthm{thm:main} in cases where $\targetr$ contains a local maximum, provided that the bumps used in its construction do not exceed the RKHS norm budget. 

\paragraph{Example Difficult Case.} \label{sec:difficult}
\begin{figure}[t!]
    \centering
    {
        \pgfmathsetmacro\targets{-1.}
        \pgfmathsetmacro\targete{-0.87}
        \pgfmathsetmacro\domains{-1}
        \pgfmathsetmacro\domaine{-0.3}
        \pgfmathsetmacro\domainxmax{-0.265}
        \pgfmathsetmacro\domainxdomain{0.7}
        \pgfmathsetmacro\flblpos{1}
        
        \illustration{
        }{
          \draw[] (-0.85636275, {f(-0.85636275)}) node[dot]{};
        }{-40}{100}
    }
    \caption{Difficult case where $f$ is increasing in $\targetr$.} \label{fig:difficult}
\end{figure}
To highlight the consideration of cases with local maxima in $\Rc_{\rm target}$ above, consider the 1D (counter-)example in \refig{fig:difficult}. 
Here, there is no apparent way to construct $h(\xv)$ to satisfy the conditions of \refthm{thm:main};
the function continues to increase when leaving $\targetr$, 
and performing any {\em smooth} perturbation to the function to push those values downward would either keep the maximizer outside $\targetr$, 
or amount to having $\Delta$-optimal points for $\tilde{f}$ such that $\tilde{f}(\xv) \ne f(\xv)$. 
Despite this difficulty, we will see that our attacks can still be effective in such situations experimentally, albeit requiring a larger attack budget.

\section{Attack Methods} \label{sec:attacks}

\subsection{Overview}
In this section, we introduce our main proposed adversarial attacks on GP bandits. 
Based on \refthm{thm:main}, we adopt the idea of perturbing the function $f$ to construct another function $\tilde{f}$ such that ideally the following properties hold for some set $\Rc_0 \subseteq \targetr$ 
(often taking the full set, $\Rc_0 = \targetr$):
\begin{enumerate}[label=(\roman*)]
    \item $\tilde{f}(\xv) = f(\xv)$ for all $\xv \in \Rc_0$;
    \item All maximizers of $\tilde{f}$ lie inside $\Rc_0$, and ideally all points outside $\targetr$ are suboptimal by at some strictly positive value $\Delta > 0$;
    \item $\tilde{f}$ satisfies the RKHS norm constraint, \ie, $\|\tilde{f}\|_k \le B$.
\end{enumerate}
We use the \synthetic objective function in \refeq{eq:1d} to illustrate the attack methods in \refigs{fig:att_sub}{fig:att_aggr}.

\subsection{Subtraction Attack (Known $f$)} \label{sec:att_subtract}

For this attack, we follow the ideas discussed in Approach 1 of \refsec{sec:specific}.  To maintain property (iii), 
\ie, $\tilde{f} \in \Fc_k(B)$, we assume that $f \in \Fc_k(B/2)$,\footnote{The factor $\frac{1}{2}$ can be replaced by other constants in $(0,1)$.} 
and seek to find a perturbation function $h \in \Fc_k(B/2)$ such that
\begin{equation}
    \tilde{f}(\xv) = f(\xv) - h(\xv) \label{eq:subtraction}
\end{equation}
satisfies properties (i) and (ii).  The triangle inequality then gives that $\|\tilde{f}\|_k \le B$, as desired for property (iii).

We let $h(\xv)$ be a sum of support-bounded, with the support lying entirely outside $\Rc_0$ designed to ``swallow'' the peaks outside $\Rc_0$.  For \subtractionrnd we construct these functions using the bump function $h_{\rm bump}(\xv) = \exp\big(\frac{-1}{1-\|\xv\|^2}\big) \boldsymbol{1}\braces{ \|\xv\| \le 1 }$ (see \refsec{sec:specific} for details), whereas for \subtractionsq we use the simpler indicator function $h_{\rm ind}(\xv) = \boldsymbol{1}\{ \|\xv\| \le 1 \}$.  Examples are shown in \refig{fig:att_sub}.

Based on \refsec{sec:theory}, \subtractionrnd has strong theoretical guarantees under fairly mild assumptions, but a disadvantage is requiring knowledge of $f$.  Moreover, there may be many ways to construct $h$ satisfying the requirements given, and finding a good choice may be difficult, particularly in higher dimensions where the function cannot be visualized.

\begin{figure}
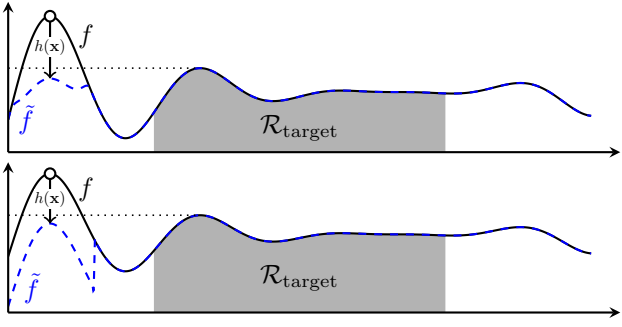

    \centering
    \illustration{
        \addplot [blue,dashed,smooth,name path=SUB_RND] {f_rnd(x)} node[right,pos=0] {$\tilde{f}$};
      }{
        \betweenline{-0.85636275}{$h(\xv)$}{f_rnd};
      }{-40}{100}
      
      \illustration{
        \addplot [blue,dashed,smooth,name path=SUB_SQ] {f_sq(x)} node[right,pos=0.05] {$\tilde{f}$};
      }{
        \betweenline{-0.85636275}{$h(\xv)$}{f_sq};
      }{-65}{125}
    \caption{Subtraction Attack for known $f$ with different $h(\xv)$ -- \subtractionrnd (top) and \shortsq (bottom).\label{fig:att_sub}} 
\end{figure}

\subsection{Clipping Attack (Known $f$)} \label{sec:att_clip}

The subtraction attack in \refsec{sec:att_subtract} was developed for the primary purpose of obtaining theoretical guarantees, but in practice, 
finding $h(\cdot)$ with the desired properties may be challenging.
Here we provide a more practical attack without theoretical guarantees, but with a similar general idea.
Specifically, we propose to directly attain properties (i) and (ii) above by setting
\begin{equation}
    \tilde{f}(\xv) = 
    \begin{cases}       
        f(\xv) & \xv \in \targetr \\
        \min\braces{f(\xv), f(\xvtilde^*) - \Delta} & \xv \notin \targetr,
    \end{cases}
\end{equation}
where $\xvtilde^* = \argmax_{\xv \in \targetr} f(\xv)$ (illustrated in \refig{fig:att_clip}).

Here, unlike for the subtraction attack, property (iii) does not hold, 
which is why the theoretical analysis does not follow. 
Despite this disadvantage (and the requirement of knowing $f$), 
this attack has the clear advantage of being simple and easy to implement, 
and we will see in \refsec{sec:experiments} that it can be highly effective in experiments.

\begin{figure}
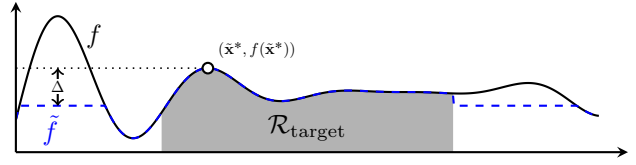

    \centering
    \illustration{
        \addplot [blue,dashed,name path=SUB_RND] {clipping(x,25)} node[below,pos=0.076] {$\tilde{f}$};
      }{
        \draw[] (-0.3413, {f(-0.3413)}) node[dot,label={north east,scale=\illscale:$\parens{\tilde{\xv}^*,f\parens{\tilde{\xv}^*}}$}]{};
        \deltaline{-0.85636275}{$\Delta$}{25};
      }{-40}{100}
    \caption{\clipping attack for known $f$. \label{fig:att_clip}}
\end{figure}

\subsection{Aggressive Subtraction Attack (Unknown $f$)} \label{sec:att_aggr}

In the case that $f$ is unknown, we propose to build on the idea of the subtraction attack, but to be highly aggressive and subtract {\em all} points outside $\targetr$ by roughly the same value $h_{\max}$.  This is a special case of \refeq{eq:subtraction}, but here we consider $h(\cdot)$ having a much wider support than described above, so we find it best to highlight separately.

To ensure that $\|h\|_k \le \frac{B}{2}$, we need to consider a ``transition region'' where $h(\cdot)$  transitions from zero to $h_{\max}$.  Overall, we are left with $h(\xv)$ equaling zero within $\Rc_0$, $h_{\max}$ outside $\targetr$, and intermediate values within $\targetr \setminus \Rc_0$. 

Again based on \refsec{sec:theory}, this attack has strong theoretical guarantees.  A notable advantage is not requiring precise knowledge of $f$; it suffices that $\Rc_{\rm target}$ has a suitable local maximum and $h_{\max}$ is large enough, but no specific details of the function need to be known.  In addition, this attack is fairly straightforward to implement, only requiring the selection of $h_{\max}$ and a transition region width.  

A disadvantage is that a higher budget $C$ may be needed in practice due to being highly aggressive.  On the other hand, similar aggressive approaches have shown success in related bandit problems \cite{Jun18,Gar20}.

\begin{figure}
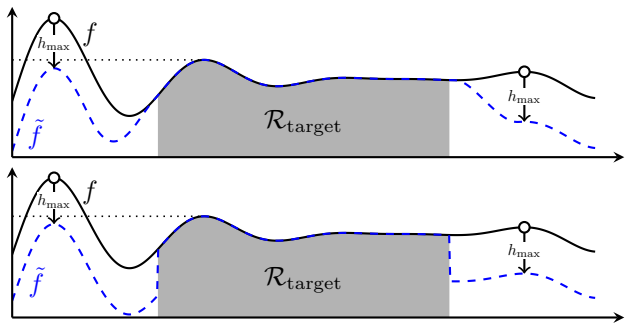

    \centering
    \illustration{
        \addplot [blue,smooth,name path=AGGT,dashed] table [x=x, y=y, col sep=comma] {AGGT.csv} node[right,pos=0.05] {$\tilde{f}$};
      }{
        \betweenline{-0.85636275}{$h_{\max}$}{agg};
        \betweenline{0.75718}{$h_{\max}$}{agg};
      }{-65}{125}
    \illustration{
        \addplot [blue,name path=AGG, dashed] {(\x<\targets)*agg(x) + and(\x>\targets, \x<\targete)*f(x) + (\x>\targete)*agg(x)} node[right,pos=0.05] {$\tilde{f}$};
      }{
        \betweenline{-0.85636275}{$h_{\max}$}{agg};
        \betweenline{0.75718}{$h_{\max}$}{agg};
      }{-75}{135}
    \caption{\aggressivesubtraction attack for unknown $f$ -- with ``transition region'' (top) and without (bottom).\label{fig:att_aggr}}
\end{figure}

\paragraph{Simplified Aggressive Subtraction Attack.} \label{sec:att_const}

In the same way that \subtractionsq simplifies \subtractionrnd, we can simplify the aggressive subtraction attack as follows, for some $h_{\max} > 0$:
\begin{equation}
    \tilde{f}(\xv) = 
    \begin{cases}       
        f(\xv) & \xv \in \targetr \\
        f(\xv) - h_{\max} & \xv \notin \targetr.
    \end{cases}
\end{equation}
This can again be implemented without knowledge of $f$.  

Similar to above, we would like to choose $h_{\max}$ large enough so that the maximizer of $\tilde{f}$ is in $\targetr$. 
This again has the disadvantage of potentially requiring a large corruption budget, 
but compared to the clipping attack, we gain the advantage of not needing to know $f$.  

Both versions of \aggressivesubtraction are illustrated in \refig{fig:att_aggr}.  In the rest of the paper, whenever we mention this attack, we are referring to the simplified variant.

\section{Experimental Results\label{sec:experiments}} 
For convenience, our experiments\footnote{The code is available at \url{https://github.com/eric-vader/Attack-BO}.} consider targeted attacks and do not constrain the attack to any particular budget;
we instead explore the trade-off between attack success rate (\ie, fraction of actions played that fall in the $\targetr$) and attack cost (e.g., sum of perturbation levels used).

To the best of our knowledge, our work is the first for this setting, so there are no existing attacks to compare against.
Hence, our focus is only comparing our proposed attacks to each other (\ie \subtractionrnd, \subtractionsq, \clipping, \aggressivesubtraction) along with two trivial baselines.
The baselines are \random, which perturbs the function evaluation with $\N\parens{\mu_a,{\sigma_a}^2}$ and,
\noattack which does not perturb the function evaluation at all.

We exclude \subtractionrnd, \subtractionsq from experiments on higher dimensional functions (\ie $\geq 3$),
due to the difficulty discussed in \refsec{sec:att_subtract}.

We compare the attacks on a variety of objective functions of up to $6$ dimensions.
For the experiments on synthetic functions, we manually add $z_t \sim \N\parens{0, 0.01^2}$ noise.
Detailed experimental details are given in \refsec{sec:addexp-details} (appendix) and our complete code is included in the supplementary.

\subsection{Setup}
In order to understand the behavior of each attack, we run each attack $300$ times, varying $30$ different hyperparameter choices (\ie $\Delta, h_{\max}, (\mu_a,{\sigma_a}), h(\xv)$) with $10$ differing conditions (initial points, instances of the objective function, and random seeds).
For both the \subtractionrnd and \subtractionsq attacks, we vary only the $y$-transformation hyperparameter, which we refer to as $h_{\max}$.  The different hyperparameter settings are chosen such that,  as much as possible, 
we aim to cover the entire spectrum of the behavior (starting with a lower success rate and ending with a high success rate) of the attack method for each experiment.
We run $100$ or $250$ optimization steps depending on the dimensionality, and adopt the \maternx{5}{2} kernel to match our theory; 
see \refsec{sec:kernel} and \refsec{sec:diffkernels} for further details.

We focus on attacking the GP-UCB algorithm and defer the elimination algorithm to \refsec{sec:addexp-details} (appendix), since GP-UCB is much more widespread
and \refcor{cor:main} indicates that it should be the harder algorithm to attack.  (One reason for this is that
\maxvar explicitly removes points from further consideration, so it can be forced to  permanently discard the true optimum.)
Following typical choices adopted in existing works' experiments (e.g., \cite{Sri09,rolland2018high}), we select the exploration parameter as $\beta_t = 0.5\log{\parens{2t}}$.
There are very few defense algorithms in the literature that are appropriate here; 
in particular, the main algorithm in \cite{Bog20} (Fast-Slow GP-UCB) was introduced for theoretical purposes and seemingly not intended to be practical.
However, \cite{Bog20} also shows that GP-UCB with {\em larger choices of $\beta_t$ depending on $C$} can be provably more robust, 
and we provide experimental support for this in \refsec{sec:defense_exp}.

\subsection{Metrics}
We measure the following up to iteration $t$, with $X_t = (\xv_1,\dotsc,\xv_t)$ and adversarial noise $A_t = (|c_1|,\dotsc,|c_t|)$:
\begin{itemize}
\item $\successrate\parens{t} = \frac{|\targetr \intersection X_t|}{t}$ is the proportion of actions played that lie within $\Rc_{\rm target}$.
\item $\normalizedcost\parens{t} = \sum_{a \in A_t}{\frac{a}{\fmax - \fmin}}$ is the sum over the history of adversarial noise, normalized by the function range for comparison across experiments.
\end{itemize}

\subsection{Synthetic Experiments}

\begin{figure*}[!t]
    \centering
    \subfloat[\aggressivesubtraction on \synthetic]{
        \includegraphics[width=0.32\textwidth]{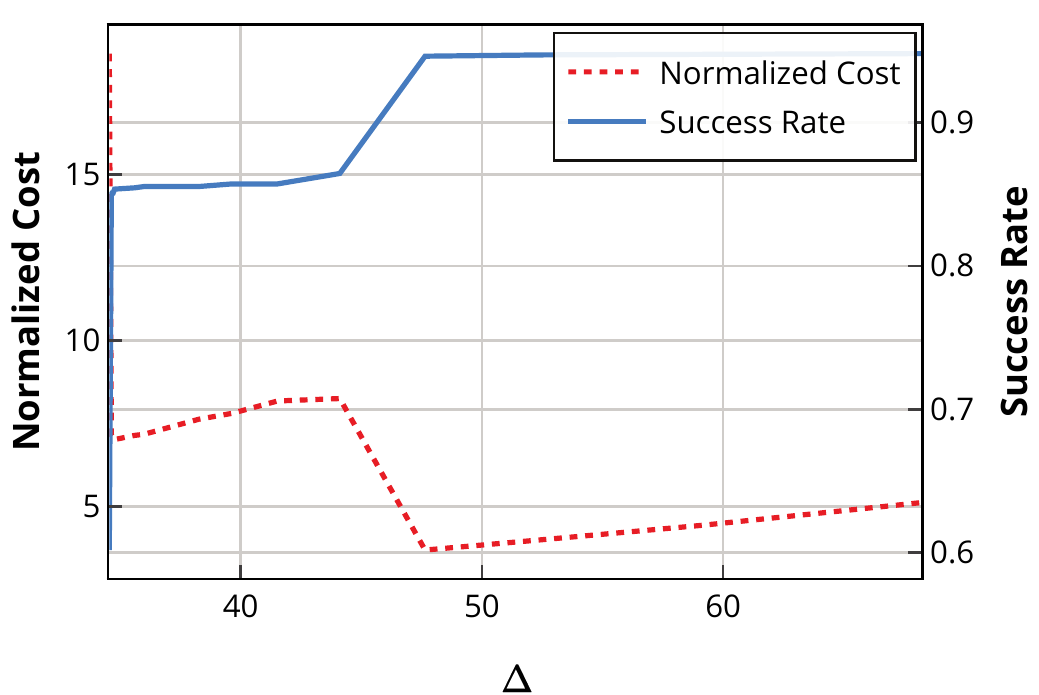} \label{fig:1d-aggressive-analysis}}
    \subfloat[\clipping on \forrester]{
        \includegraphics[width=0.32\textwidth]{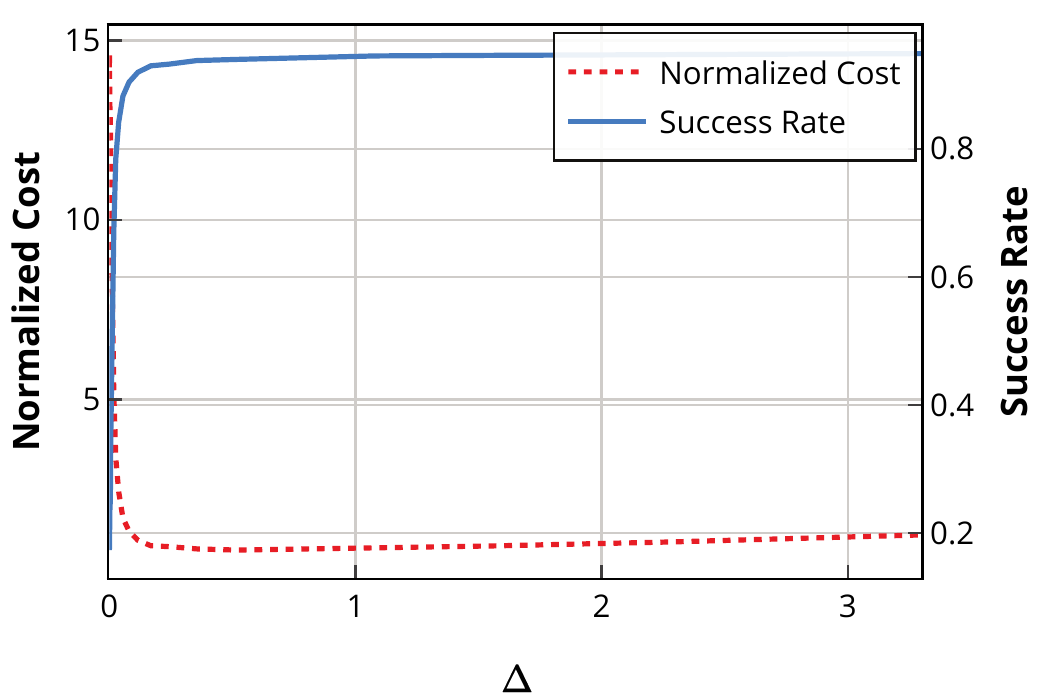} \label{fig:forrester-clipping-analysis}}
    \subfloat[\subtractionrnd on \levy]{
        \includegraphics[width=0.32\textwidth]{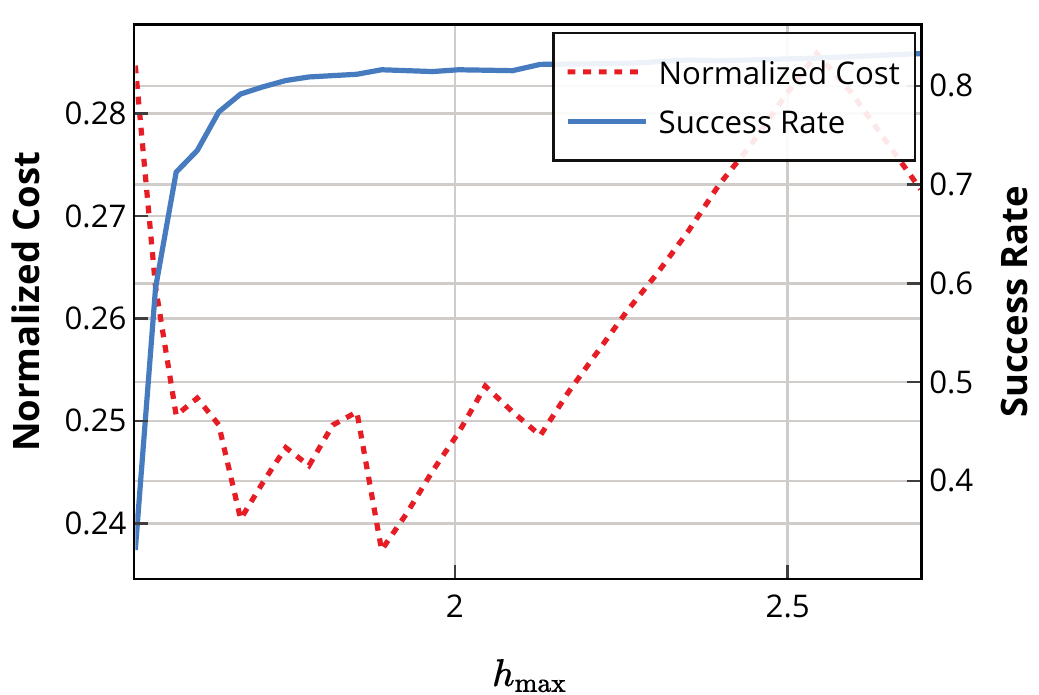} \label{fig:levy-subtractionrnd-analysis}}\\
    \caption{Effects of hyperparameters on the different attacks.\label{fig:hyperparameter}}
\end{figure*}

\begin{figure*}[!t]
    \centering
    \includegraphics[height=0.25cm]{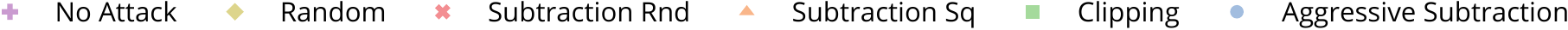} \label{fig:Legend}\\
    \subfloat[\synthetic]{
        \includegraphics[width=0.32\textwidth]{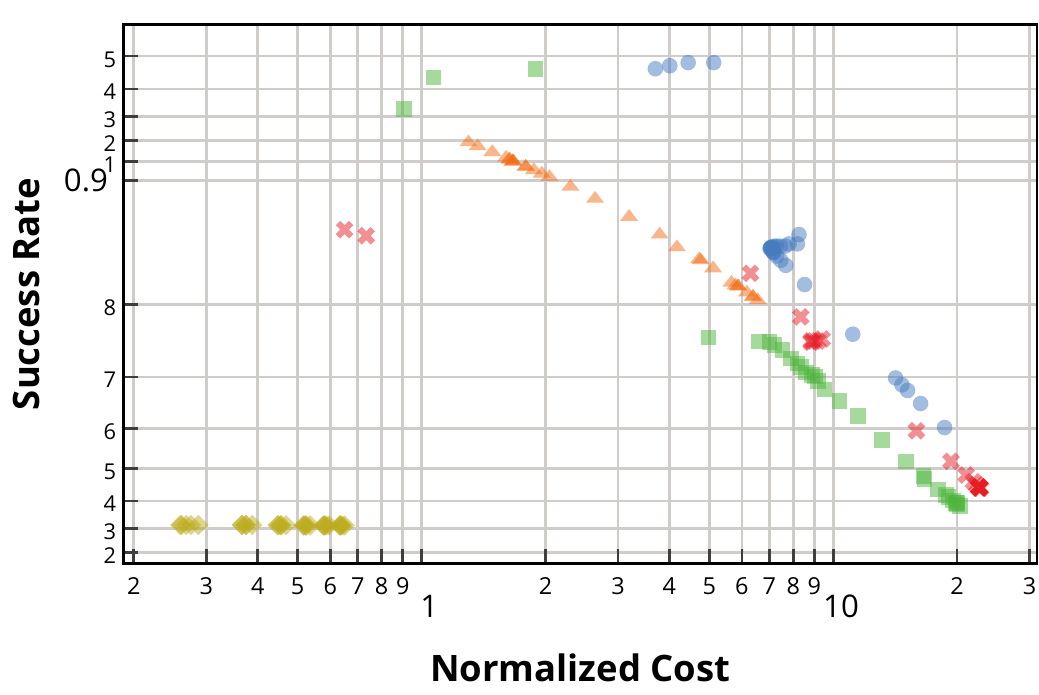} \label{fig:1d-overall}}
    \subfloat[\forrester]{
        \includegraphics[width=0.32\textwidth]{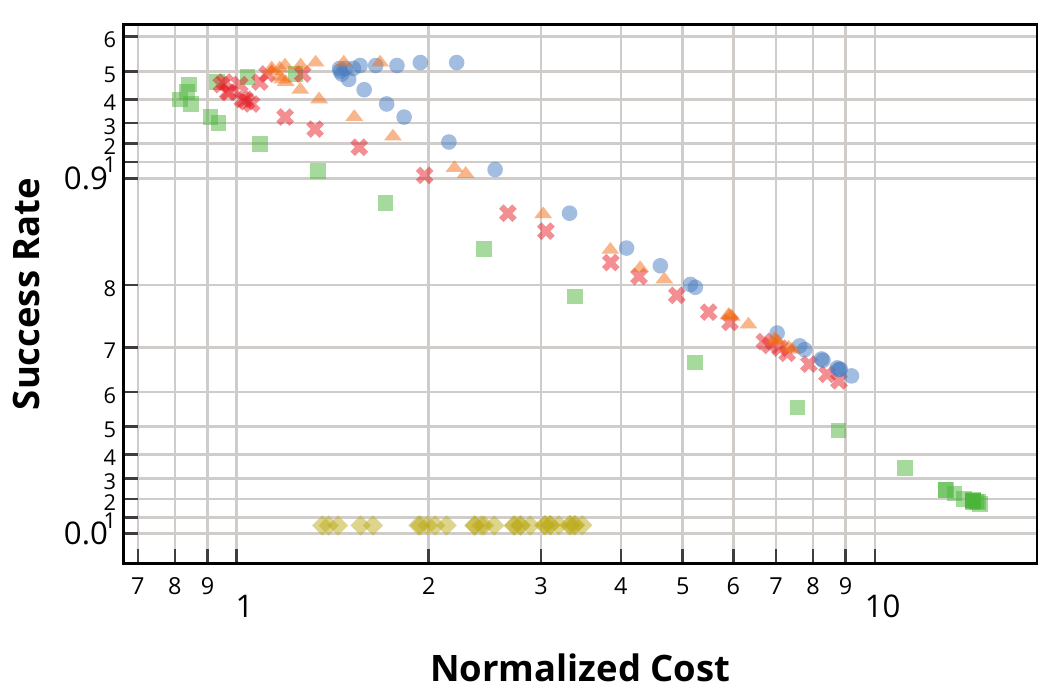}\label{fig:forrester-overall}}
    \subfloat[\levyhard]{
        \includegraphics[width=0.32\textwidth]{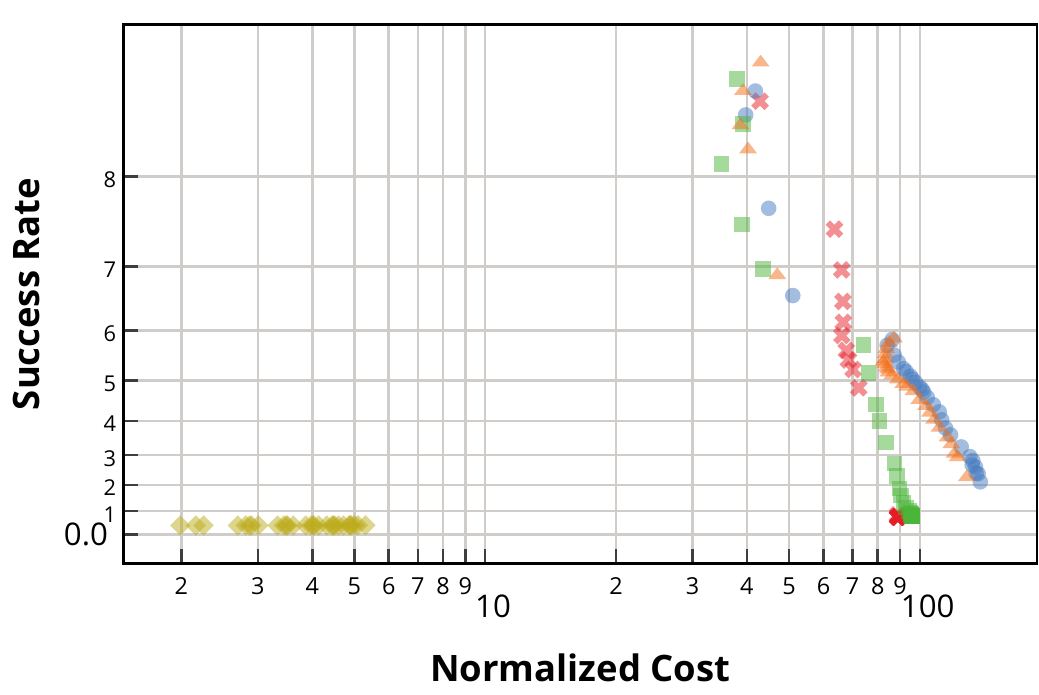} \label{fig:levyhard-overall}}\\
    \subfloat[\camelback]{
        \includegraphics[width=0.32\textwidth]{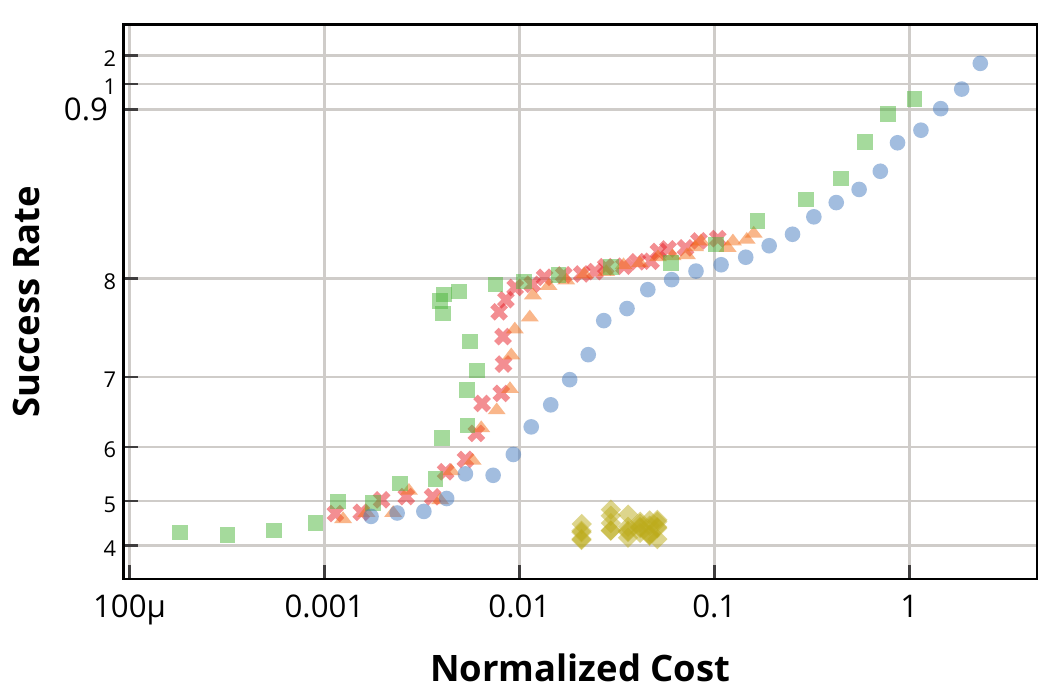} \label{fig:camelback-overall}}
    \subfloat[\bohachevsky]{
        \includegraphics[width=0.32\textwidth]{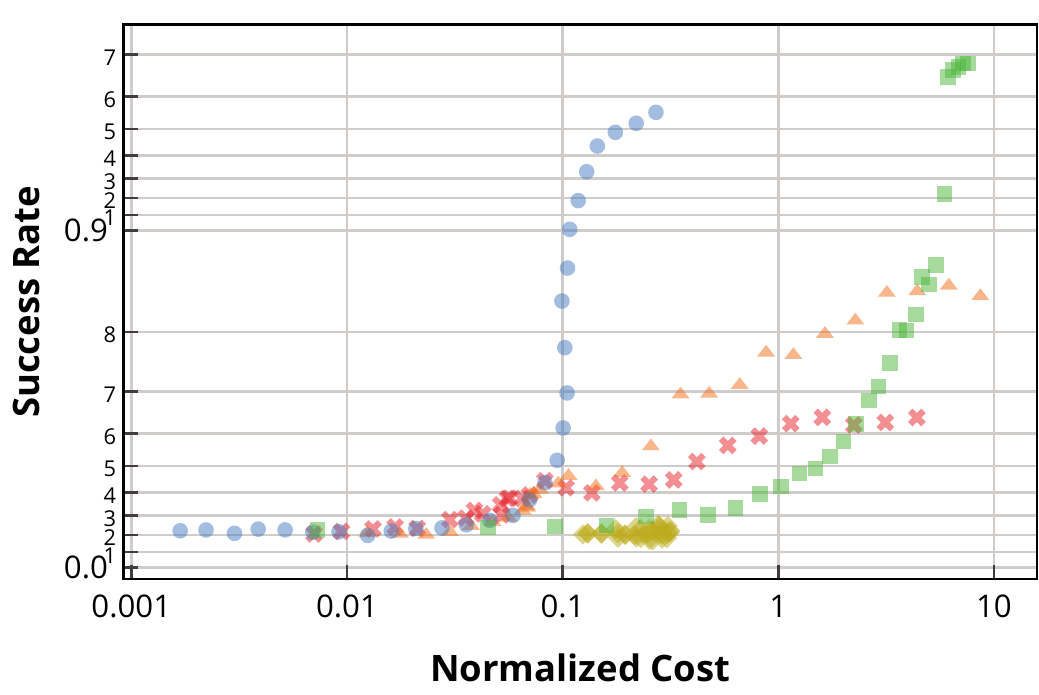} \label{fig:bohachevsky-overall}}
    \subfloat[\robotsmall]{
        \includegraphics[width=0.32\textwidth]{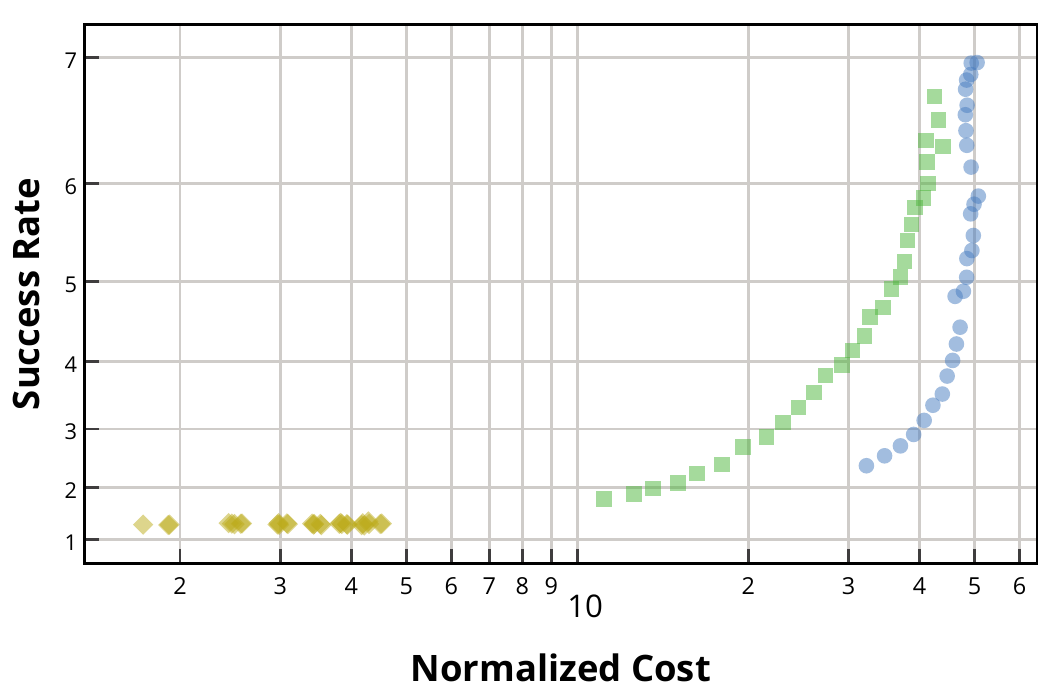} \label{fig:robot3d-overall}}
    \caption{Scatter plots of runs averaged over random seeds for various functions.\label{fig:overall}}
\end{figure*}

\begin{figure*}[!t]
    \centering
    \subfloat[\bohachevsky]{
        \includegraphics[width=0.32\textwidth]{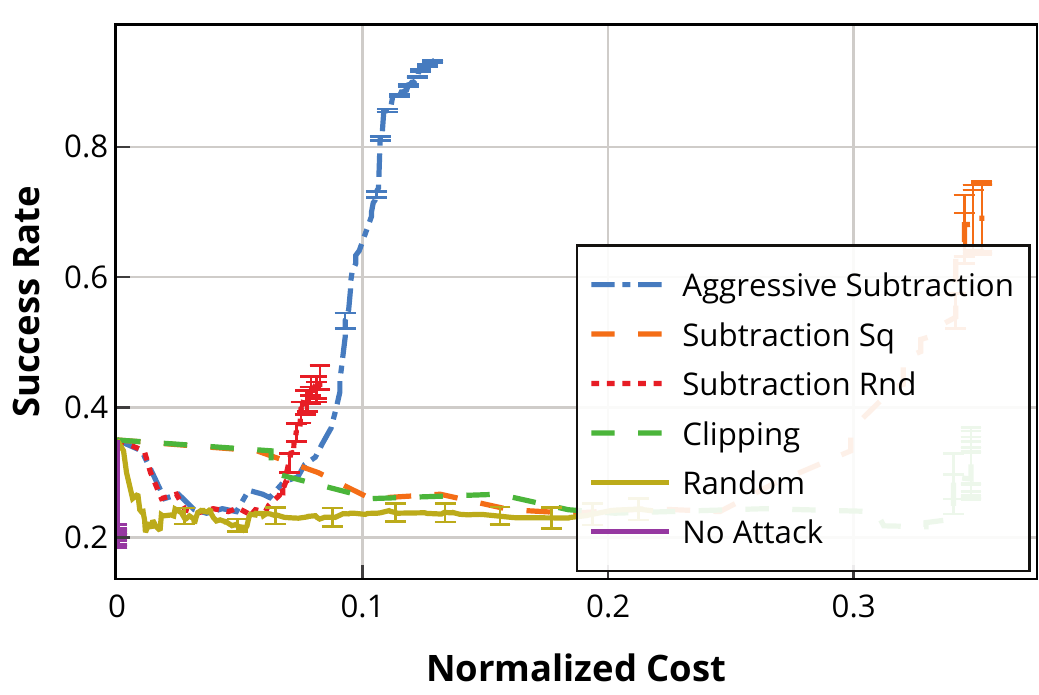} \label{fig:bohachevsky-bestparam}}
    \subfloat[\branin]{
        \includegraphics[width=0.32\textwidth]{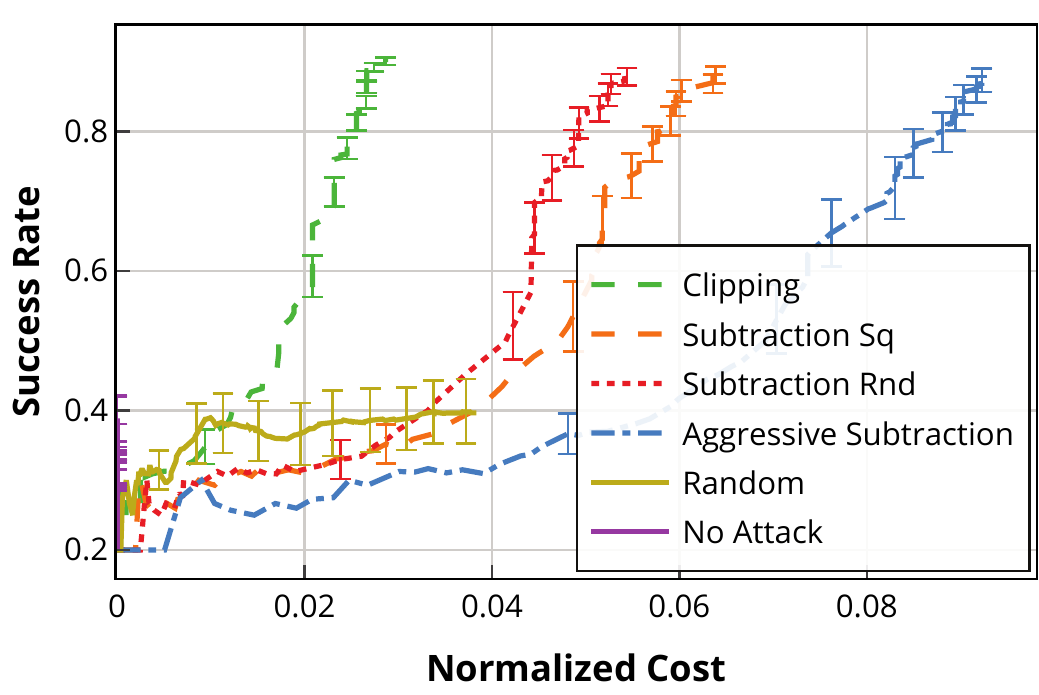} \label{fig:branin-bestparam}}
    \subfloat[\robotsmall]{
        \includegraphics[width=0.32\textwidth]{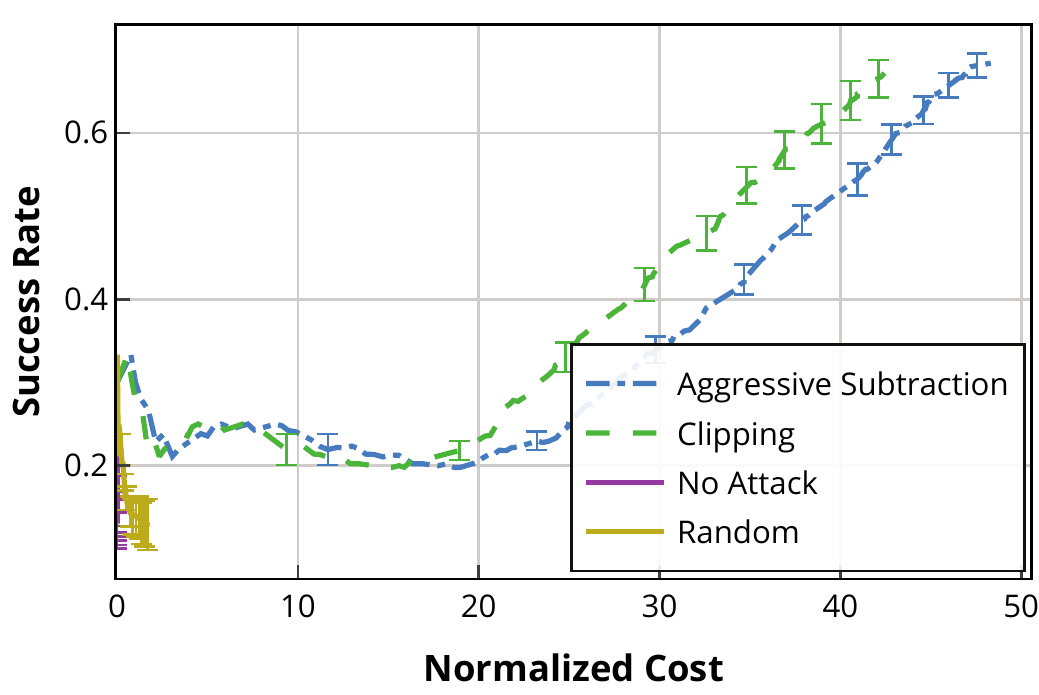} \label{fig:robot_pushing3d-bestparam}}\\
    \caption{Plots of each attack with the most efficient (best success rate over normalized cost) hyperparameter.\label{fig:bestparam}}
\end{figure*}

The synthetic functions are described and plotted in \refsec{sec:synth} (appendix). 
The performance of each attack method depends on the associated hyperparameter (e.g., $h_{\max}$ or $\Delta$).
Ideally, we want the best hyperparameter for the attack where it spends the least cost for the highest success rate.  One might expect that a higher cost is always associated with a higher success rate, 
but this is not quite the case, as we observe in \refig{fig:hyperparameter}.  The subtlety is that a poorly-chosen hyperparameter can be poor in both regards. 
Typically, what happens is that when the hyperparameter is at its `boundary value' (e.g., $\Delta=0$ in \refig{fig:att_clip}) the attack is not very successful, 
because GP-UCB explores both the clipped regions and $\parens{\tilde{\xv}^*,f\parens{\tilde{\xv}^*}}$. 
However, going carefully beyond the boundary value can quickly lead to a near-maximal success rate with a fairly modest cost. 
Perhaps less intuitively, even attacking more aggressively (e.g., increasing $\Delta$) sometimes maintains a similar cost-success trade-off, 
suggesting that it may be preferable to be `over-aggressive' than `under-aggressive'.

From \refig{fig:overall}, we observe that \clipping performs consistently well across the various functions, providing competitive success rates and costs.
Both \subtractionrnd and \subtractionsq tend to have a performance `in between' that of \clipping and \aggressivesubtraction.
Due to \subtractionrnd's smooth $h(\xv)$, \subtractionrnd tends to narrowly beat \subtractionsq in most experiments, e.g., see \refig{fig:forrester-overall}.
The trend is also observed in \levy, the easier version of \levyhard, as seen in \refig{fig:appendix-levy_Scatter_overall_avg_log_log_rev-Scatter-success_rate-norm_cum_c_t} (appendix).
However, these aforementioned methods require prior knowledge of $f$, whereas \aggressivesubtraction works well in practice without knowledge of $f$.

We additionally note in \refig{fig:overall} that across all of the functions considered, 
there typically exists a successful attack with a relatively low normalized cost, e.g., on the order of magnitude $1$ or less, 
though more difficult cases also exist (e.g., Levy-Hard1D).  This highlights the general lack of robustness of standard solutions to the GP bandit problem.

In \refig{fig:bestparam}, we plot the performance of each attack method with its most efficient hyperparameter.
The hyperparameter with the largest success rate over cost is the most efficient,
\begin{equation}
    \theta_\text{efficient} = \argmax_{\theta}{\frac{\successrate\parens{T}_\theta}{\normalizedcost\parens{T}_\theta}}\label{eq:efficient}.
\end{equation}
Considering the most efficient hyperparameter, 
\clipping continues to be successful and cost effective compared to others for most functions.
\subtractionrnd and \subtractionsq tend to perform in between \clipping and \aggressivesubtraction, e.g., see \refig{fig:branin-bestparam}. 

There are also some interesting cases where this trend is not observed;
\subtractionrnd and \subtractionsq struggle to achieve a high success rate in \camelback, 
while \aggressivesubtraction and \clipping achieve a higher success rate after spending more cost.
\camelback has a symmetric surface with multiple global and local optima in different regions,
which increases the difficulty in constructing $h(\xv)$ for the subtraction attacks.
Similarly, for the attacks with the most efficient hyperparameter in \bohachevsky, 
\aggressivesubtraction is the most efficient, spending the least but having the highest success rate.
Similar findings are observed in \refig{fig:bohachevsky-overall}.
\bohachevsky has a bowl shape with a single maximum, which is easy to optimize but hard to attack.

As a result, \bohachevsky increases throughout $\targetr$ and when leaving $\targetr$,
causing the attack to be especially difficult as explained in \refsec{sec:difficult} with \refig{fig:difficult}.
Hence, there is no obvious way to construct $h(\xv)$ for both \subtractionrnd and \subtractionsq (also for \camelback).
We believe that \clipping is less effective because it creates $\tilde{f}$ which is largely flat (equal to the clipped value), the algorithm may spend too long exploring this flat region.  In contrast, \aggressivesubtraction maintains the bowl shape (shifted downwards) outside $\targetr$, which the algorithm can more readily stop exploring due to believing it to be suboptimal.  Hence, \aggressivesubtraction performs best here in terms of success rate and cost.

\subsection{Robot Pushing Experiments}
We consider the deterministic robot pushing objective from \cite{Wan17}, to find the best pre-image to push an object towards a target location.
We test both the 4-dimensional variant $f_4\parens{r_x,r_y,r_t, r_\theta}$
and 3-dimensional variant $f_3$, where $r_\theta = \arctan\parens{\sfrac{r_y}{r_x}}$.
Both functions return the distance from the pushed object to the target location.
Here, $r_x,r_y \in \sqparens{-5,5}$ is the robot location, pushing duration $r_t\in\sqparens{1,30}$ and pushing angle $r_\theta\in\sqparens{0,2\pi}$.
From \refig{fig:robot3d-overall} and \refig{fig:robot_pushing3d-bestparam}, the results here exhibit similar findings to most results on synthetic experiments; in particular,
\clipping performs better than \aggressivesubtraction.

\subsection{Further Experiments} 
In \refsec{sec:addexp-details} (appendix), we present more details on the above findings, as well as exploring 
(i) different kernel choices, 
(ii) attacking the elimination algorithm, 
(iii) online vs.~offline kernel learning, 
(iv) a more robust variant of GP-UCB from \cite{Bog20}, and 
(v) a proof-of-concept attack that automatically adapts its hyperparameters.

\section{Conclusion}
We have studied the problem of adversarial attacks on GP bandits, providing both a theoretical understanding of when certain attacks are guaranteed to succeed, as well as a detailed experimental study of the cost and success rate of our attacks.  Possible future research directions include (i) further investigating more ``automated'' attacks that can adapt their hyperparameters online, and (ii) theoretical lower bounds on the attack budget, which are currently unknown even in the simpler linear bandit setting.

\section*{Acknowledgments.}
This work was supported by the Singapore National Research Foundation (NRF) under grant number R-252-000-A74-281.

\newpage
\clearpage

\fontsize{9.5pt}{10.5pt}
\selectfont

\clearpage
\appendix
\section{Additional Experimental Details/Results\label{sec:addexp-details}}

\subsection{Implementation Details\label{sec:implementation}}

Our implementation is based on Python 3.8, 
using Conda \cite{anaconda} and MLflow \cite{zaharia2018accelerating} to manage environments and experiments across multiple machines.
Our implementation uses several standard machine learning and scientific packages, such as GPy \cite{gpy2014}, NumPy \cite{harris2020array} and others.
The various libraries and the specific versions used can be found in our supplied code, in the Conda environment file attack{\textunderscore}bo.yml.

Implementations of the synthetic functions are from HPOlib2 \cite{eggensperger2013}.
We use the original authors' implementation \cite{Wan17} for the robot pushing objective functions (\robotsmall and \robotlarge).

\subsection{Synthetic Functions} \label{sec:synth}

For convenience, we explicitly append the dimensionality to the name of the function. Firstly, we consider the 1D function 
\begin{equation}
    f\parens{x} = \parens{6x-2}^2 \sin\parens{12x-4},\label{eq:1d}
\end{equation}
which we refer to as \synthetic.
Then, we compare the attacks on commonly used optimization synthetic function benchmarks  \cite{eggensperger2013}: 
\forrester, \levy, \levyhard, \bohachevsky, \bohachevskyhard, \branin, \camelback, \hartmann. 
We have chosen these benchmarks so that we attack under a diverse range of conditions. 
The 1D and 2D functions are illustrated in \refig{fig:appendix-Illustration}.
\levyhard and \bohachevskyhard are variants of \levy and \bohachevsky, but with different $\targetr$, chosen to increase the difficultly of the attack.

\subsection{Time Horizon, Kernel, and Optimization Algorithm} \label{sec:kernel}

For objective functions with $1$ or $2$ dimensions, we run the experiments with $10$ initial points and $100$ iterations. Due to the added difficulty in higher dimensions, we use $50$ initial points and $250$ iterations for the other experiments.

We adopt the \matern-$\sfrac{5}{2}$ kernel, and initialize the kernel parameters with values learned from data sampled from $f$ prior to the optimization:
\begin{itemize}
    \item For 1D and 2D functions, we randomly select $100$ points from a multi-dimensional grid where each dimension is evenly spaced;
    \item For higher-dimensional functions, we use a total of $1000$ points using a mix of strategies.
    We sample $500$ points using the same grid strategy as for the 1D and 2D functions, 
    as well as an additional $500$ random points sampled from a uniform distribution and are not confined to a grid.
\end{itemize}
The kernel parameters are then fixed, and not updated throughout the attack. 
This amounts to an ``idealized'' setting for the player in which the kernel is well-known even in advance. 
In \refsec{sec:online_kernel}, we additionally present results for the case that the kernel parameters are learned online.  

As mentioned in the main body, we focus primarily on attacking a standard form of \gpucb with $\beta_t = 0.5\log{\parens{2t}}$.  However, in \refsec{sec:defense_exp}, we additionally consider a variant of \gpucb with enlarged confidence bounds designed for improved robustness \cite{Bog20}.

\begin{table}
    \centering
    \begin{tabular}{lrcc} 
    \toprule
    & \textbf{Name} & \textbf{Dim.} & \textbf{\#Exps.} \\
    \midrule
    \parbox[t]{2mm}{\multirow{11}{*}{\rotatebox[origin=c]{90}{\gpucb Attack}}} 
    & \synthetic & 1 & 1510 \\
    & \forrester & 1 & 1510 \\
    & \levy & 1 & 1510 \\
    & \levyhard & 1 & 1510 \\
    & \bohachevsky & 2 & 1510 \\
    & \bohachevskyhard & 2 & 1510 \\
    & \branin & 2 & 1510 \\
    & \camelback & 2 & 1510 \\
    & \hartmann & 6 & 910 \\
    \cmidrule{2-4}
    & \robotsmall & 3 & 910 \\
    & \robotlarge & 4 & 910 \\
    \midrule
    \parbox[t]{2mm}{\multirow{4}{*}{\rotatebox[origin=c]{90}{Kernels}}} 
    & \branin-\rbf & 2 & 1510 \\
    & \branin-\maternx{3}{2} & 2 & 1510 \\
    & \camelback-\rbf & 2 & 1510 \\
    & \camelback-\maternx{3}{2} & 2 & 1510 \\
    \midrule
    \parbox[t]{2mm}{\multirow{3}{*}{\rotatebox[origin=c]{90}{\maxvar}}} 
    & \varforrester & 1 & 1510 \\
    & \varcamelback & 2 & 1510 \\
    \cmidrule{2-4}
    & \varrobotsmall & 3 & 910 \\
    \midrule
    \parbox[t]{2mm}{\multirow{3}{*}{\rotatebox[origin=c]{90}{Online}}}  
    & \npsynthetic & 1 & 1510 \\
    & \npforrester & 1 & 1510 \\
    & \npbohachevsky & 2 & 1510 \\
    \midrule
    \parbox[t]{2mm}{\multirow{9}{*}{\rotatebox[origin=c]{90}{\defense}}}
    & \levydef{0.5}{0.01} & 1 & 1510 \\
    & \levydef{0.5}{0.1} & 1 & 1510 \\
    & \levydef{0.5}{1} & 1 & 1510 \\
    & \levydef{2}{0.01} & 1 & 1510 \\
    & \levydef{2}{0.1} & 1 & 1510 \\
    & \levydef{2}{1} & 1 & 1510 \\
    & \levydef{8}{0.01} & 1 & 1510 \\
    & \levydef{8}{0.1} & 1 & 1510 \\
    & \levydef{8}{1} & 1 & 1510 \\
    \midrule
    \parbox[t]{2mm}{\multirow{3}{*}{\rotatebox[origin=c]{90}{Dynamic}}} 
    & \synthetic-\dynamic & 1 & 500 \\
    & \forrester-\dynamic & 1 & 500 \\
    & \levy-\dynamic & 1 & 500 \\
    \bottomrule
    \end{tabular}
    \caption{Summary of various experiments and functions used, sorted by type and dimensionality.} 
    \label{tab:appendix-dataset-summary}
\end{table}

\subsection{\gpucb Attack Experiments} \label{sec:gpucb}

\begin{table}
    \centering
    \begin{tabular}{rcc} 
    \toprule
    \textbf{Name} & \textbf{Centroid} & \textbf{Length} \\
    \midrule
    \synthetic & $(0)$ & $1$ \\
    \forrester & $(0.25)$ & $0.5$ \\
    \levy & $(-2.915)$ & $4$ \\
    \levyhard & $(-11.56)$ & $6.881$ \\
    \bohachevsky & $(55, 55)$ & $90$ \\
    \bohachevskyhard & $(75, 75)$ & $50$ \\
    \branin & $(-1, 11)$ & $8$ \\
    \camelback & $(0.090, -0.713)$ & $1.425$ \\
    \robotsmall & $(2.5, 2.5, 20)$ & $(5, 5, 20)$ \\
    \robotlarge & $(2.5, 2.5, 20, \sfrac{\pi}{2})$ & $(5, 5, 20, \pi)$ \\
    \hartmann & $(0.6, \cdots, 0.6)$ & $0.8$ \\
    \bottomrule
    \end{tabular}
    \caption{Summary of $\targetr$ parameters.}
    \label{tab:appendix-targetr}
\end{table}

\begin{figure*}[!ht]
    \centering
    \subfloat[\synthetic]{\includegraphics[width=0.32\textwidth]{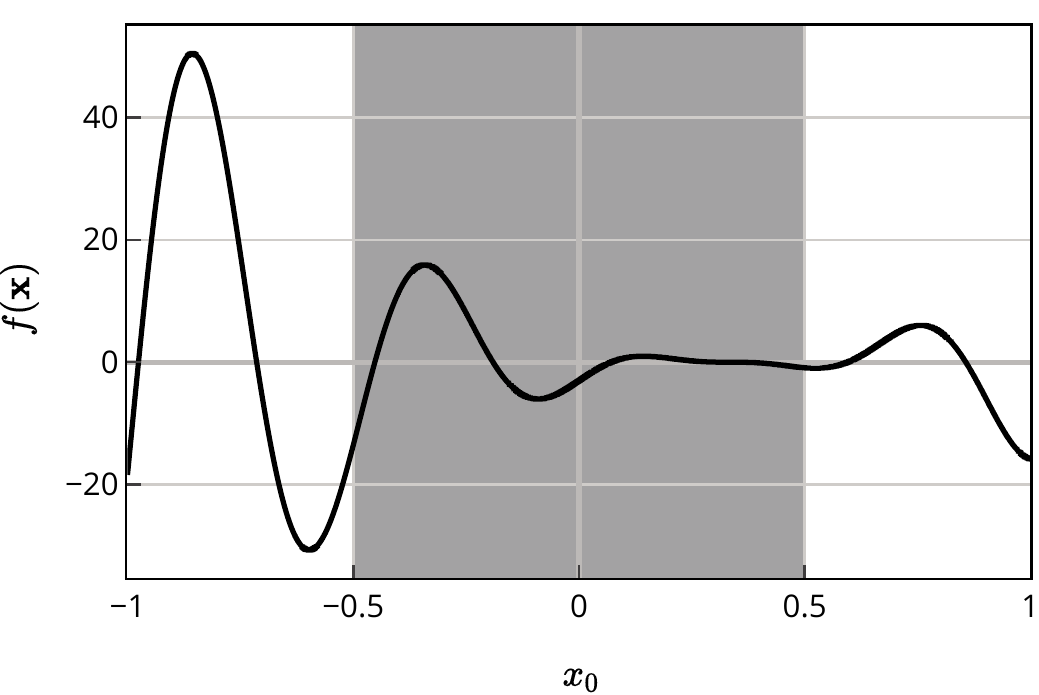}\label{fig:appendix-1d_Illustration_example-Illustration}}
    \subfloat[\forrester]{\includegraphics[width=0.32\textwidth]{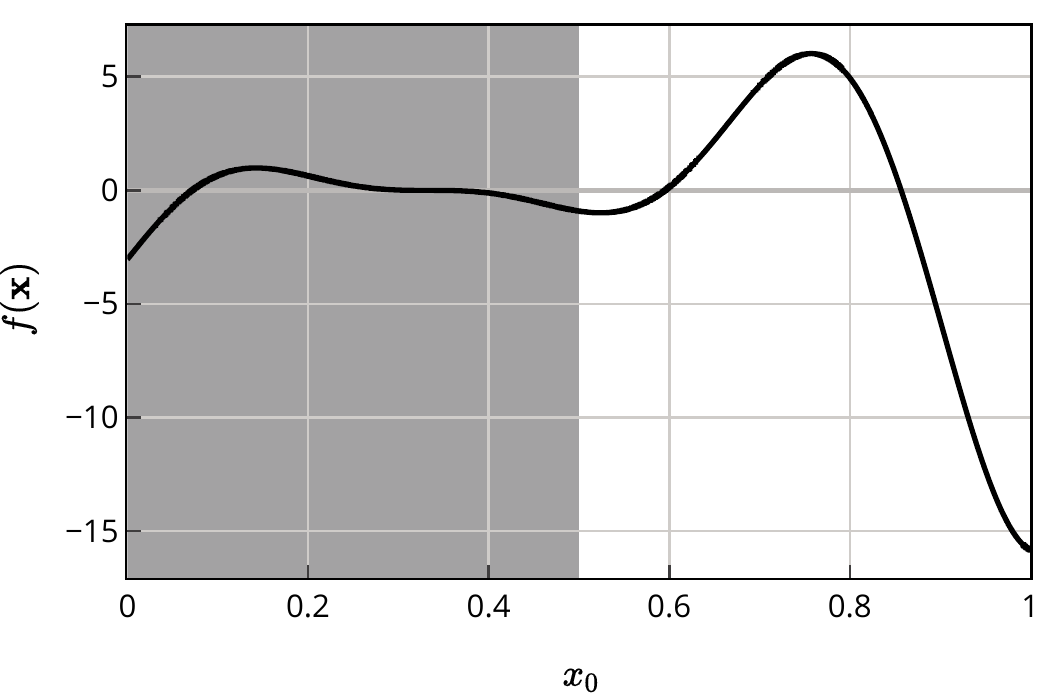}\label{fig:appendix-forrester_Illustration_example-Illustration}}
    \subfloat[\levy]{\includegraphics[width=0.32\textwidth]{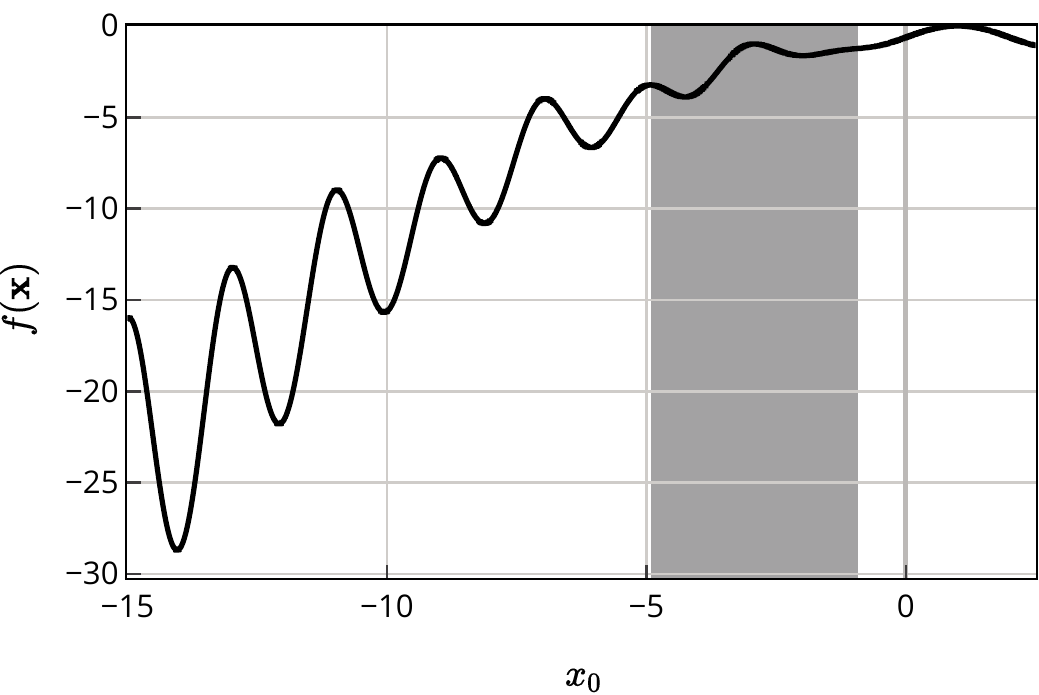}\label{fig:appendix-levy_Illustration_example-Illustration}}\\
    \subfloat[\levyhard]{\includegraphics[width=0.32\textwidth]{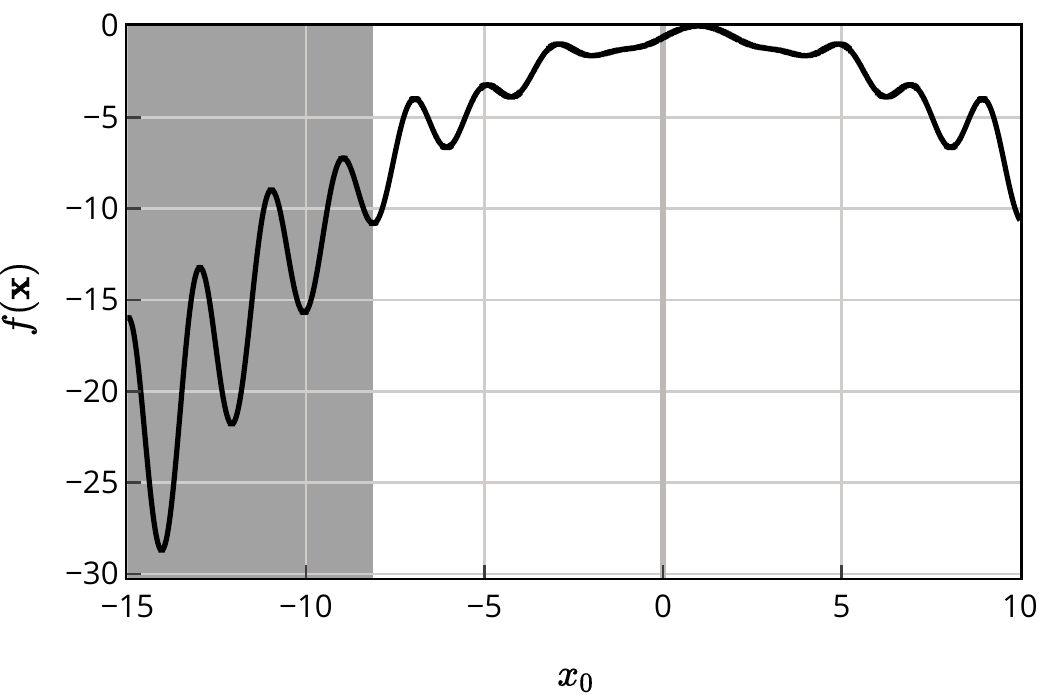}\label{fig:appendix-levy_hard_Illustration_example-Illustration}}
    \subfloat[\bohachevsky]{\includegraphics[width=0.32\textwidth]{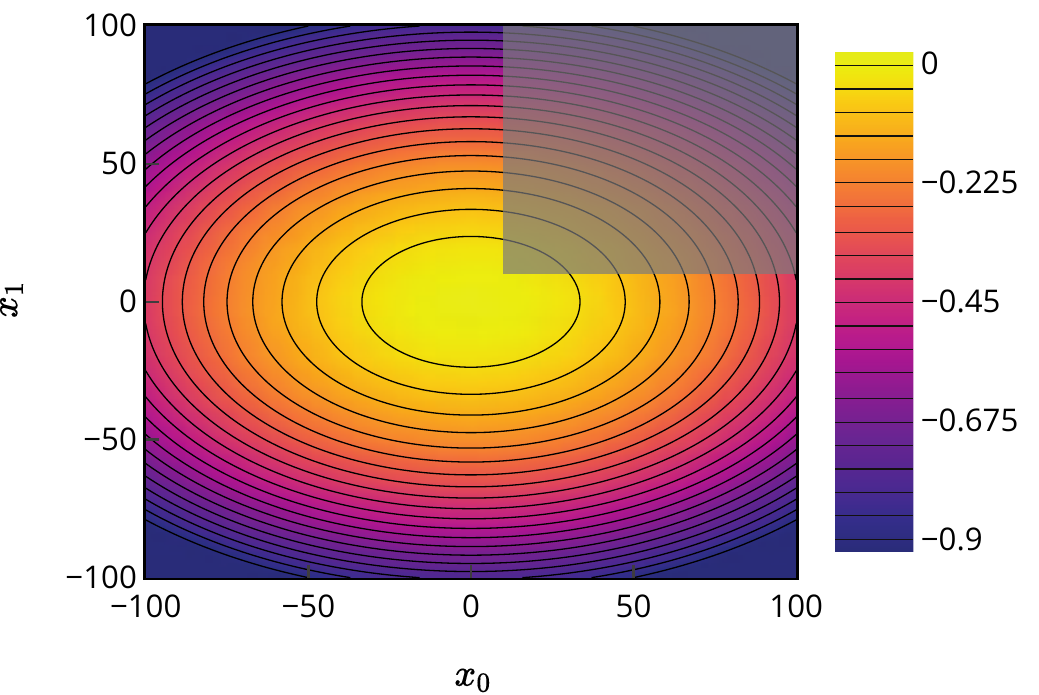}\label{fig:appendix-bohachevsky_Illustration_example-Illustration}}
    \subfloat[\bohachevskyhard]{\includegraphics[width=0.32\textwidth]{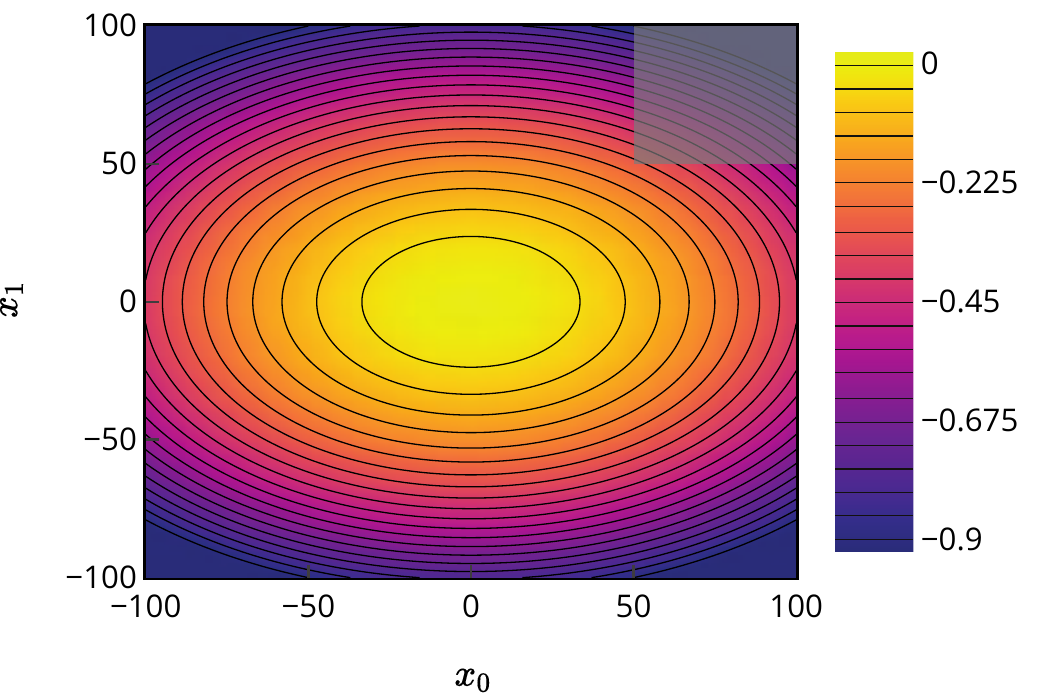}\label{fig:appendix-bohachevsky_hard_Illustration_example-Illustration}}\\
    \subfloat[\branin]{\includegraphics[width=0.32\textwidth]{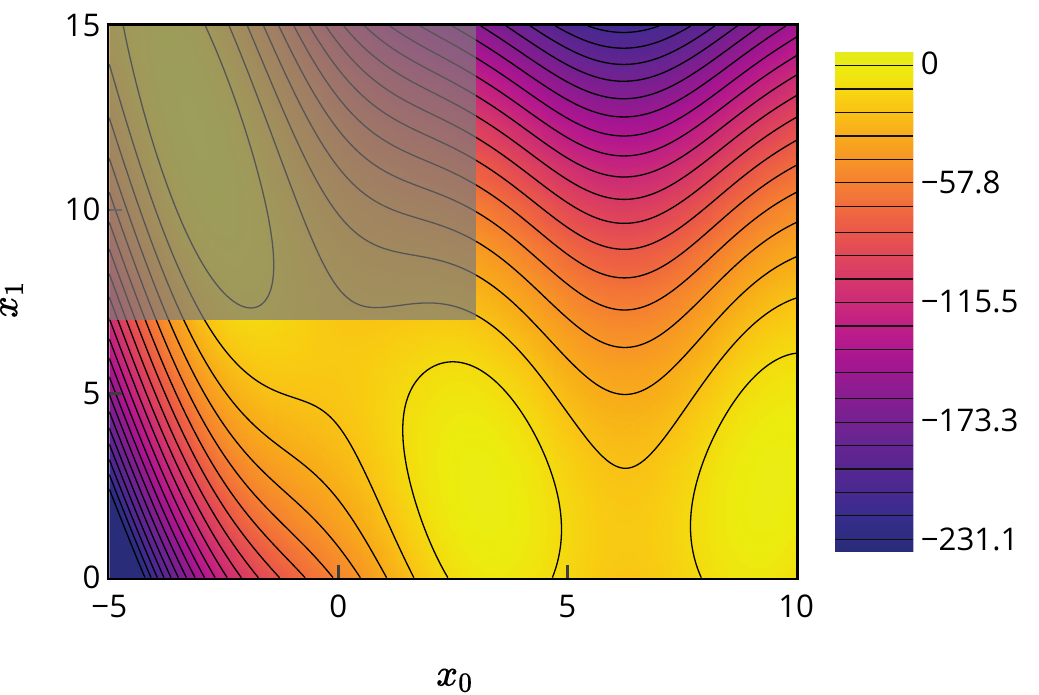}\label{fig:appendix-branin_Illustration_example-Illustration}}
    \subfloat[\camelback]{\includegraphics[width=0.32\textwidth]{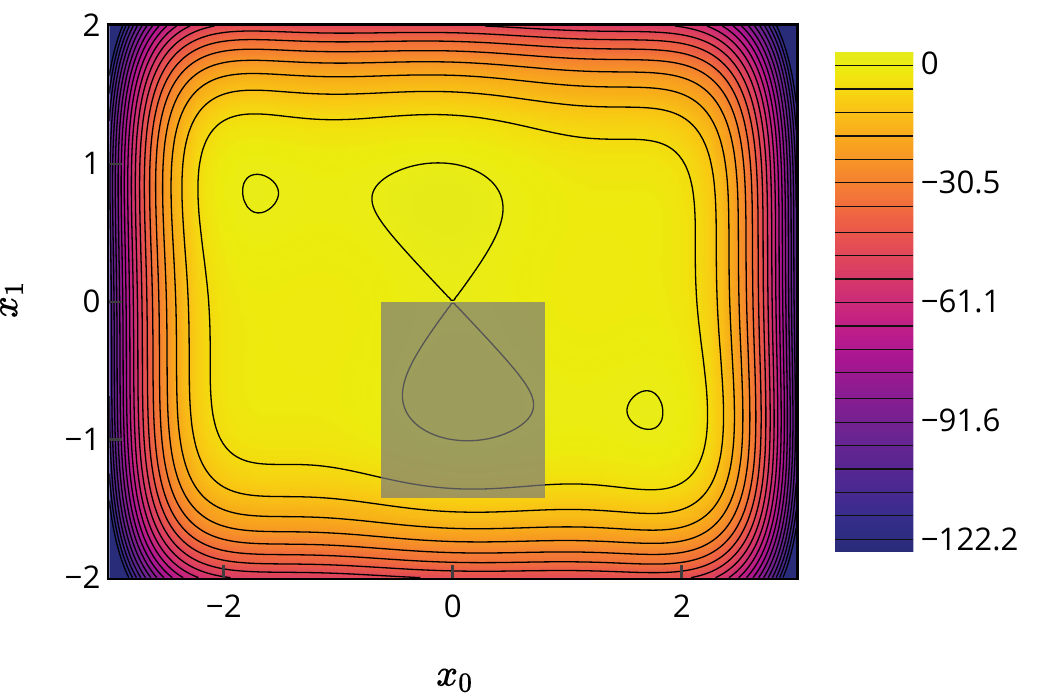}\label{fig:appendix-camelback_Illustration_example-Illustration}}
    \caption{Illustrations for 1D and 2D experiments; $\targetr$ is shaded in each plot.}
    \label{fig:appendix-Illustration}
\end{figure*}

{\bf Discussion of functions used.} 
As seen in \refig{fig:appendix-Illustration}, each target region $\targetr$ is rectangular, and defined by its centroid and length;
the choices for each experiment are given in \reftbl{tab:appendix-targetr}.

\synthetic and \forrester are relatively simple examples where the second-best peak falls inside of the $\targetr$ and there are few local maxima.
\levy is an experiment with a periodic function, which increases the difficulty by the introduction of multiple local maxima.
\levyhard uses the same function, but with a different domain and $\targetr$
such that the second-best peak falls outside of $\targetr$.
\bohachevsky has a characteristic bowl shape with a single maximum, which is easy to optimize but difficult to attack.
Here, we use a variant of the \bohachevsky function that is scaled by a constant factor to better manage the function range.
\bohachevskyhard uses the same \bohachevsky function but with a smaller $\targetr$ to further increase the difficulty.  Specifically, the function increases throughout $\targetr$ and when leaving $\targetr$,
causing the attack to be especially difficult as explained in \refsec{sec:difficult}.

\branin has $3$ global maxima, and $\targetr$ contains one of the global maxima.
\camelback has $2$ global maxima and several local maxima, and $\targetr$ contains one of the global maxima.
As for an example with higher dimensionality,
\hartmann is a $6$-dimensional function with a global maximum lying outside $\targetr$.

Finally, we attack the $3$-dimensional and $4$-dimensional functions -- \robotsmall and \robotlarge respectively \cite{Wan17}. 
In both of these experiments, $\targetr$ is again rectangular, but with different lengths in each dimension.

{\bf Discussion of hyperparameters used.} We consider $30$ hyperparameter configurations (e.g., choices of $\Delta$ or $h_{\max}$) for each attack,
choosing the relevant minimum and maximum values so that we sufficiently cover the entire spectrum of the behavior, 
e.g., configurations with low to high success rates.
From \refigs{fig:appendix-AggressiveSubtractionAttack}{fig:appendix-SubtractionAttackSq}, 
we can observe that higher costs may not associate with a higher success rate; this is also discussed in the main body.

\begin{figure}
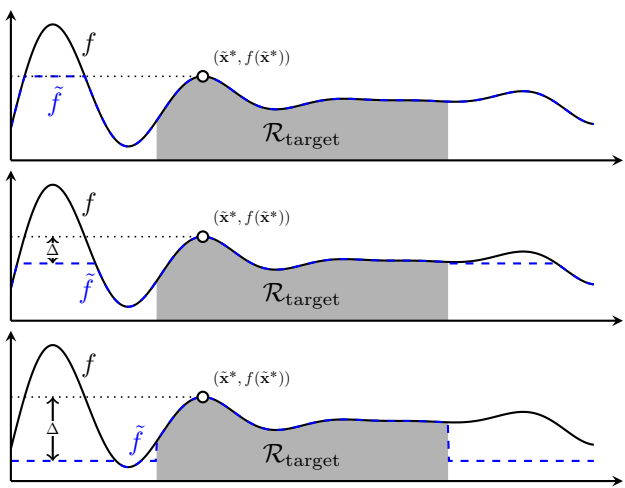

    \centering
    \illustration{
        \addplot [blue,dashed,name path=SUB_RND] {clipping(x,0)} node[below,pos=0.1835] {$\tilde{f}$};
      }{
        \draw[] (-0.3413, {f(-0.3413)}) node[dot,label={north east,scale=\illscale:$\parens{\tilde{\xv}^*,f\parens{\tilde{\xv}^*}}$}]{};
      }{-40}{100}
      \illustration{
        \addplot [blue,dashed,name path=SUB_RND] {clipping(x,17.8)} node[below,pos=0.1211] {$\tilde{f}$};
      }{
        \draw[] (-0.3413, {f(-0.3413)}) node[dot,label={north east,scale=\illscale:$\parens{\tilde{\xv}^*,f\parens{\tilde{\xv}^*}}$}]{};
        \deltaline{-0.85636275}{$\Delta$}{17.8};
      }{-40}{100}
      \illustration{
        \addplot [blue,dashed,name path=SUB_RND] {clipping(x,42.5)} node[above,pos=0.056] {$\tilde{f}$};
      }{
        \draw[] (-0.3413, {f(-0.3413)}) node[dot,label={north east,scale=\illscale:$\parens{\tilde{\xv}^*,f\parens{\tilde{\xv}^*}}$}]{};
        \deltaline{-0.85636275}{$\Delta$}{42.5};
      }{-40}{100}
    \caption{\clipping attack on \synthetic with `under-aggressive' $\Delta=0$ (top), 
    most efficient $\Delta_\text{efficient}=17.8$ (middle),
    and `over-aggressive' $\Delta=42.5$ (bottom). 
    See \refeq{eq:efficient} for the definition of ``most efficient''.
    \label{fig:appendix-aggressive}}
\end{figure}

In general, increasing a hyperparameter value is representative of increasing the aggressiveness of the attack.
Here, we see examples of poorly chosen parameters (\eg, smallest hyperparameter value) having high cost yet attacking unsuccessfully.
For example, from \refig{fig:appendix-1d_TripleAxisParam_ClippingAttack-TripleAxisParam-norm_cum_c_t-success_rate} and illustrated in \refig{fig:appendix-aggressive} (top), 
we see that the poorly chosen hyperparameter $\Delta=0$ for \clipping incurs a high cost with a poor success rate.
In this case, the attack is {\em not aggressive enough} (\ie `under-aggressive'), and the \gpucb algorithm is not incentivized to explore $\targetr$.
Over many iterations, \gpucb repeatedly explores $\targetr$,
and continues to determine that $\targetr$ is no better than $\parens{\tilde{\xv}^*,f\parens{\tilde{\xv}^*}}$.
Hence, the attack can incur a high cost due to \gpucb continuing to choose points outside $\targetr$ across the entire time horizon.

At the other end of the spectrum, 
increasing the aggressiveness of the attack can sometimes incur significantly more cost without any insignificant increase in success rate.
For instance, we observe in \refig{fig:appendix-1d_TripleAxisParam_ClippingAttack-TripleAxisParam-norm_cum_c_t-success_rate} (see also \refig{fig:appendix-aggressive}) that the hyperparameter $\Delta=42.5$ (bottom) incurs more cost then the most efficient $\Delta_\text{efficient}=17.8$ (middle) but without a significant increase in success rate.
Thus, the attack is overly aggressive; more cost than necessary is being used to incentivize the algorithm to choose actions in $\targetr$.  
In this particular case (\refig{fig:appendix-aggressive}), increasing $\Delta$ more than $\Delta_\text{efficient}$ increases the cost, as it unnecessarily, 
``swallows'' more of the the local maxima on the right of $\targetr$.
On the other hand, sometimes increasing the aggressiveness of the attack can have a minimal impact on the cost-success trade-off. 
For instance, we see this behavior in \refig{fig:appendix-robot_pushing3d_TripleAxisParam_AggressiveSubtractionAttack-TripleAxisParam-norm_cum_c_t-success_rate},
and \refig{fig:appendix-camelback_TripleAxisParam_ClippingAttack-TripleAxisParam-norm_cum_c_t-success_rate}, where increasing aggression improves the success rate.

Considering that successfully forcing actions in $\targetr$ is the attacker's primary goal and that $\theta_\text{efficient}$ is difficult to find, 
in general, it may be preferable to be `over-aggressive' than `under-aggressive' when considering hyperparameters.

{\bf Comparison of attacks.} In \refigs{fig:appendix-SuccessRate_Cost_Avg}{fig:appendix-SuccessRate_Cost}, 
we again see \clipping attacking efficiently and successfully, as discussed in \refsec{sec:experiments}.
Similarly, we continue to observe that \subtractionrnd and \subtractionsq are often the next best attacks following \clipping.
In \refigs{fig:appendix-SubtractionAttackRnd}{fig:appendix-SubtractionAttackSq}, 
\subtractionrnd has a slight advantage over \subtractionsq, though the two are similar.
Due to \subtractionrnd's smooth $h(x)$, the attack behavior varies less with respect to its hyperparameter $h_{\max}$, and the attacks generally succeed with slightly less cost compared with \subtractionsq.

The unique behavior of the various attacks in \bohachevsky is similar to the behavior observed for \bohachevskyhard.
The efficiency of the \aggressivesubtraction attack in \bohachevskyhard continues to be similar to \bohachevsky,
despite having a smaller $\targetr$ in \bohachevskyhard.
Both \bohachevsky and \bohachevskyhard serve as illustrations of the difficult case discussed in \refsec{sec:difficult}.

\subsection{Kernel Experiments} \label{sec:diffkernels}

We have focused on the \maternx{5}{2} kernel, as it is widely-adopted and provides useful properties that we can exploit in our theoretical analysis.
Here we additionally demonstrate our attacks on two other popular kernel functions: Radial Basis Function (\rbf) and \maternx{3}{2}.
Apart from the kernel choice, the setup is the same as that of \refsec{sec:gpucb}.

From \refig{fig:appendix-diffkernels}, we observe that the RBF kernel is easier to attack as it is smoother than \maternx{5}{2}; the \rbf kernel corresponds to a \matern kernel where $\nu\rightarrow\infty$, leading to infinitely many derivatives.
\refthm{thm:main} indicates that smoother functions (smaller $\gamma_t$) are easier to attack.
Similarly, \maternx{3}{2} is harder to attack, which is consistent with it having a larger $\gamma_t$.  Intuitively, using a less smooth kernel encourages more exploration, which makes it more difficult to force the algorithm into the target region.

\subsection{\maxvar+~Elimination Attack Experiments} \label{sec:maxvar}

Here we move from attacking \gpucb to attacking the MaxVar+Elimination algorithm, described in \refsec{sec:opt_algs}.
With the exception of the selection rule itself, 
other settings in each of the \maxvar attack experiments are identical to the corresponding \gpucb attack experiments.

We observe from \refig{fig:appendix-maxvar} that the behavior of both \varcamelback and  \varrobotsmall are similar to the \gpucb counterparts.
Notwithstanding the non-standard `gap' as observed in the figure, \varforrester behaved similarly when compared to \forrester.
One exception is the `gap' in \refig{fig:appendix-var_forrester_Scatter_overall_avg_log_log_rev-Scatter-success_rate-norm_cum_c_t}, which may be due to the `non-smooth' selection criteria of the $t$-th point, explicitly
filtering candidate points that are not as high as the highest LCB.

\subsection{\noprefit Experiments} \label{sec:online_kernel}

In \gpucb Attack, we initialized the kernel parameters with values learned from data sampled from $f$ prior to the optimization.
Here, we consider an alternative setting in which the kernel parameters are learned online.
The kernel parameters here are updated using the widely-used technique -- 
maximum likelihood optimized with the Limited-memory Broyden-Fletcher-Goldfarb-Shanno (L-BFGS) algorithm.
Our experiments use the same settings as the corresponding \gpucb attack experiments, with the exception of the kernel parameters.

Comparing the online vs.~offline approaches in \refig{fig:appendix-noprefit},
we see that there is generally no major difference in the behavior of these two approaches; being online vs.~offline does not appear to significantly impact robustness. 
A possible exception is that some {\em specific runs} (i.e., random seeds) appear to have an unusually low success rate in \npsynthetic when online kernel learning is used, particularly with \subtractionrnd.  A possible explanation for this is that the attack can sometimes perturb the function to the extent that the maximum-likelihood kernel parameters are highly misleading, causing the algorithm to behave in an unpredictable manner, and choose fewer points from $\targetr$.

\subsection{\defense Experiments} \label{sec:defense_exp}

We consider defending against our attacks using a combination of two techniques. 
Firstly, we use a technique proposed in \cite{Bog20}, namely, adding a constant $C$ to the exploration parameter:
\begin{equation}
    \beta_t = 0.5\log{\parens{2t}} + C. \label{eq:beta_expanded}
\end{equation}
Secondly, we increase the model's robustness to adversarial noise via the model's noise variance $\eta^2$ in the posterior update equations: 
\begin{equation}
    \begin{aligned}
        \mu_{t+1}\parens{\xv} &= \kv_t\parens{\xv}^\top\parens{\Kv_t+\eta^2\Iv_t}^{-1}\yv_t, \\
        \sigma_{t+1}\parens{\xv} &= k\parens{\xv, \xv} - \kv_t\parens{\xv}^\top\parens{\Kv_t+\eta^2\Iv_t}^{-1}\kv_t\parens{\xv}.
    \end{aligned}
    \label{eq:predictive_gpr}
\end{equation}
We vary $C \in \{0.5,2,8\}$ with $\eta^2 \in \{0.01,0.1,1\}$, resulting in nine different parameter combinations, allowing us to study their combined effect.

We observe from \refig{fig:appendix-defend} that increasing $C$ indeed decreases the attack success rate and increases the cost for each run, 
though seemingly not in a very major way. 
Essentially, increasing the confidence width causes \gpucb to become more explorative and less exploitative, 
making it harder for the attacker to quickly steer the algorithm towards $\targetr$, and hence increasing the attack cost.

Similarly, we see that increasing $\eta^2$ tends to increase the cost for each run.
By increasing $\eta^2$, we make the model expect more noise, and thus be more cautious against `unusual' observations.
Thus, the adversary needs to spend more cost to perform the attack.
This defense works significantly better on the Subtraction Attacks (\subtractionrnd, \subtractionsq) than on \clipping and \aggressivesubtraction.
We believe that this is because \clipping and \aggressivesubtraction perturb the function across the majority of the domain rather than a few localized regions.  The latter can potentially be written off as random noise more easily than the former.

The higher the value of $C$ used in \refeq{eq:beta_expanded} and of $\eta^2$ used in \refeq{eq:predictive_gpr},
the more similar the attacks tend to behave, and the less sensitive the algorithm tends to be to varying the attacker's hyperparameters.
We observe from \refig{fig:appendix-levy_def_C8_S1_Scatter_overall_avg_log_log_rev-Scatter-success_rate-norm_cum_c_t} 
that the combination of both simple defense techniques on \subtractionrnd and \subtractionsq increase robustness.

Overall, we believe that this experiment, and our work in general, motivates the study of further defenses beyond the simple defenses proposed,
particularly when targeting applications in which robustness is critical.

\subsection{\dynamic Experiments} \label{sec:dynamic_exp}

We have focused on the fixed-hyperparameter setting, where the hyperparameters are fixed prior to the attack (e.g., $\Delta$ is pre-determined and fixed in the \aggressivesubtraction Attack).  This setting serves as an important stepping stone to more general settings, and also aligns with our theory.

On the other hand, from our experiments and discussion, it is evident that the choice of the attack's hyperparameters can be a very important factor in the success of the attack.  If $f$ is known to the attacker, then in principle various configurations could be simulated to find the best one.  However, pre-specifying the hyperparameters may be much more difficult when limited prior knowledge is available.

Here, we provide an initial investigation into whether the hyperparameters can be chosen automatically, \ie, dynamically adjusted online during the attack.
We consider a simple strategy to adjust the hyperparameter:
The attack gets less aggressive whenever $\targetr$ is sampled consecutively $K$ times, and more aggressive otherwise.  Thus, the hyperparameter (e.g., representing $\Delta$) is updated as follows:
\begin{equation}
    \theta_{t} = 
    \begin{cases}       
        \theta_{t-1} - F \cdot \theta_{t-1} & \text{sampled consecutively,}\\
        \theta_{t-1} + F \cdot \theta_{t-1} & \text{otherwise},
    \end{cases}
\end{equation}
where $F$ is the size of the hyperparameter update, as a fraction of the current value.

Apart from this change, we consider the same setup as that of \refsec{sec:gpucb}.
We apply the strategy to \aggressivesubtraction on \synthetic, \forrester and \levy; based on minimal manual tuning, we set
$F=0.1$ and $K=3$ across the three functions without further tweaking.   We compare the fixed vs.~dynamic attacks for various hyperparameters, where setting a hyperparameter in the dynamic setting means fixing its {\em initial} value only.

From \refig{fig:appendix-dynamic}, this strategy works well on these functions, significantly improving the performance (i.e., trade-off of success rate and cost) for the configurations that perform poorly in the case of being fixed.  This experiment serves as a useful proof of concept, but further research is needed towards establishing fully automated attacks method for general scenarios.  
We believe that our work provides a good starting point towards this goal.

\begin{figure*}[!hb]
    \centering
    \subfloat[\synthetic]{\includegraphics[width=0.32\textwidth]{img___1d___TripleAxisParam___AggressiveSubtractionAttack-TripleAxisParam-norm_cum_c_t-success_rate.pdf}\label{fig:appendix-1d_TripleAxisParam_AggressiveSubtractionAttack-TripleAxisParam-norm_cum_c_t-success_rate}}
    \subfloat[\forrester]{\includegraphics[width=0.32\textwidth]{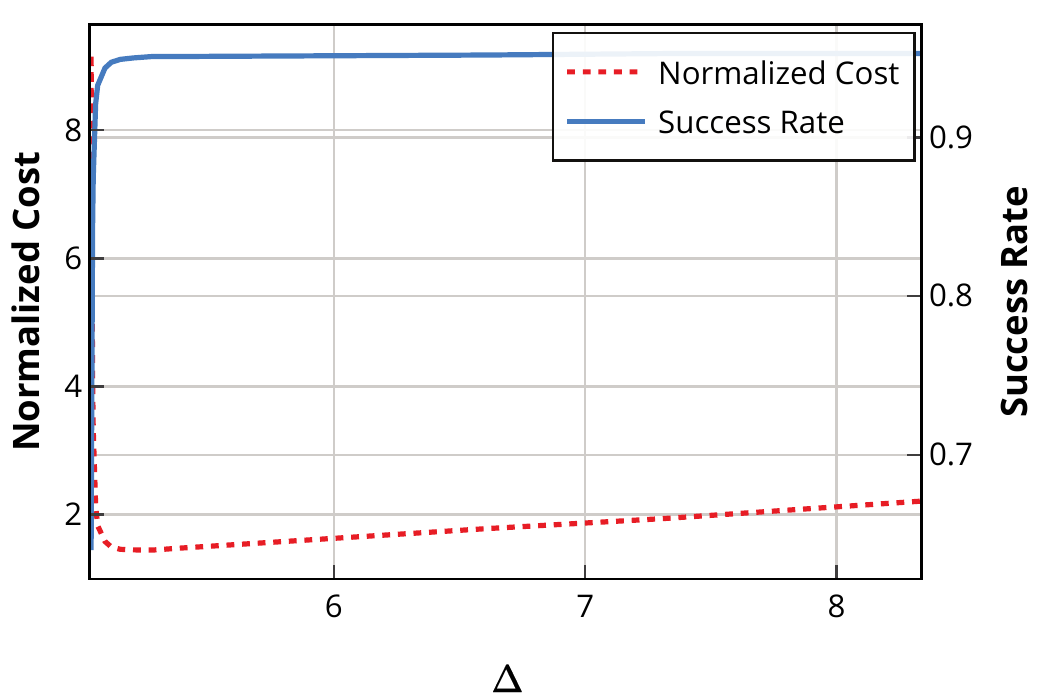}\label{fig:appendix-forrester_TripleAxisParam_AggressiveSubtractionAttack-TripleAxisParam-norm_cum_c_t-success_rate}}
    \subfloat[\levy]{\includegraphics[width=0.32\textwidth]{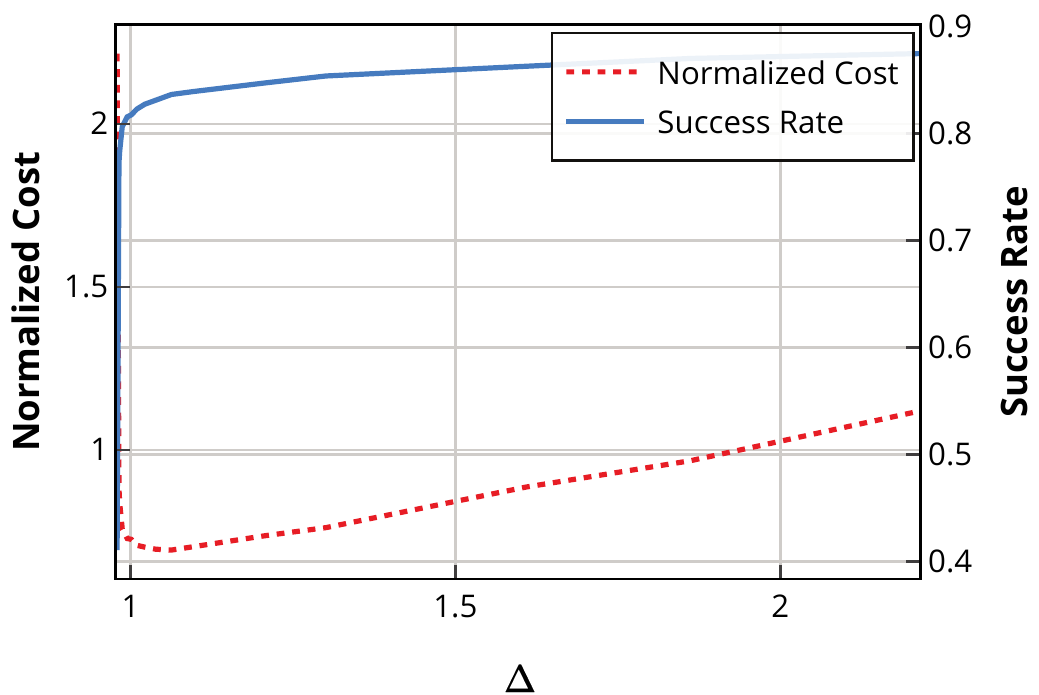}\label{fig:appendix-levy_TripleAxisParam_AggressiveSubtractionAttack-TripleAxisParam-norm_cum_c_t-success_rate}}\\
    \subfloat[\levyhard]{\includegraphics[width=0.32\textwidth]{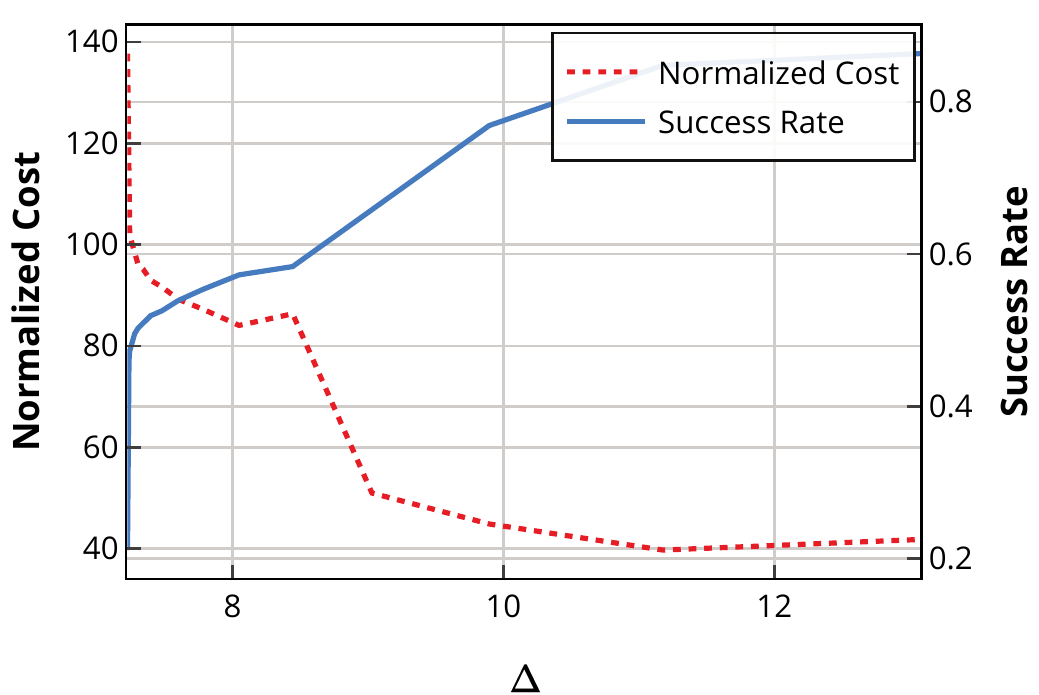}\label{fig:appendix-levy_hard_TripleAxisParam_AggressiveSubtractionAttack-TripleAxisParam-norm_cum_c_t-success_rate}}
    \subfloat[\bohachevsky]{\includegraphics[width=0.32\textwidth]{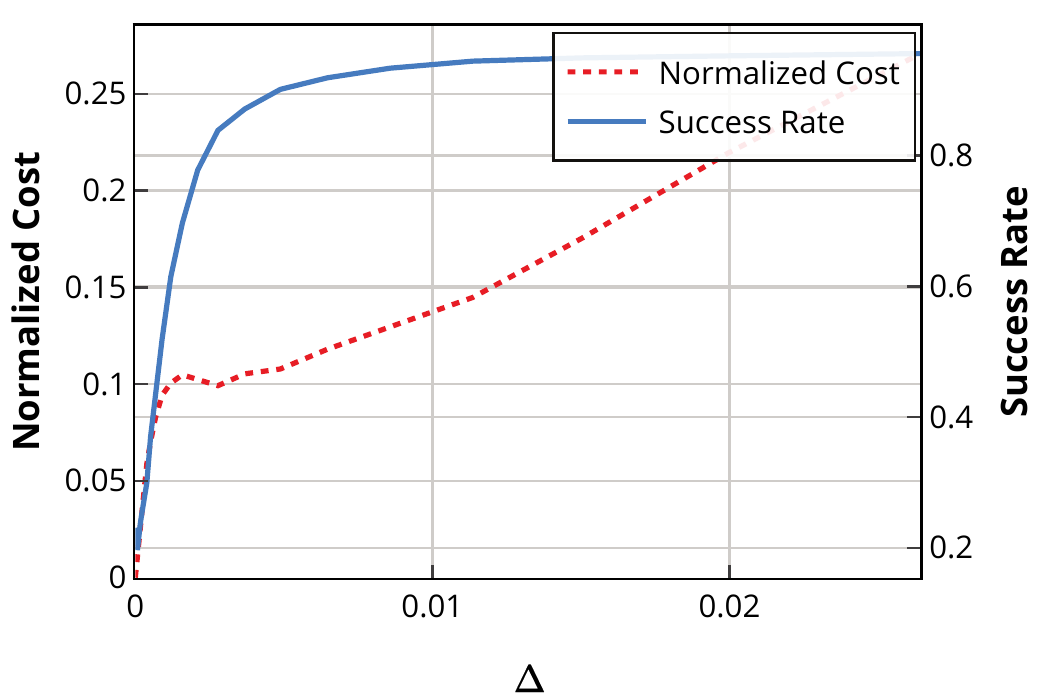}\label{fig:appendix-bohachevsky_TripleAxisParam_AggressiveSubtractionAttack-TripleAxisParam-norm_cum_c_t-success_rate}}
    \subfloat[\bohachevskyhard]{\includegraphics[width=0.32\textwidth]{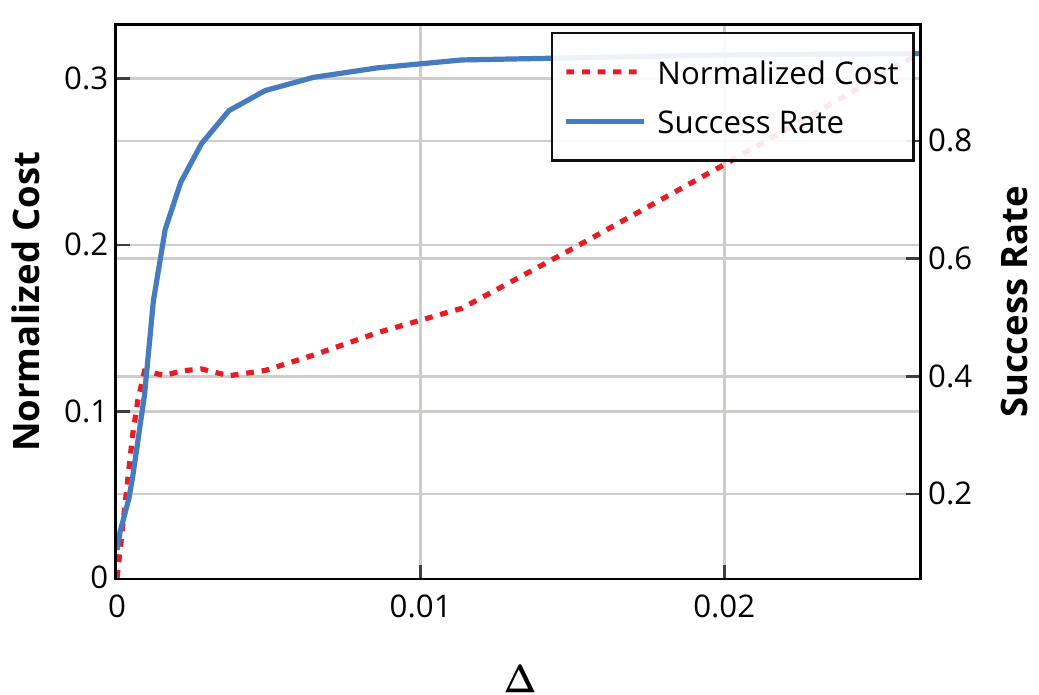}\label{fig:appendix-bohachevsky_hard_TripleAxisParam_AggressiveSubtractionAttack-TripleAxisParam-norm_cum_c_t-success_rate}}\\
    \subfloat[\branin]{\includegraphics[width=0.32\textwidth]{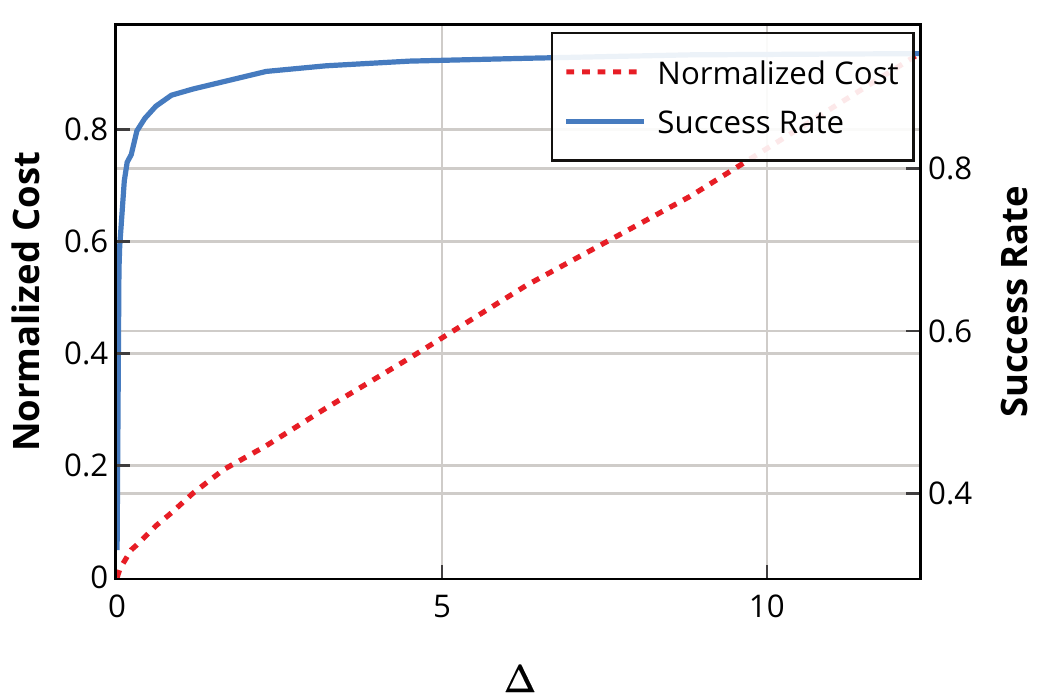}\label{fig:appendix-branin_TripleAxisParam_AggressiveSubtractionAttack-TripleAxisParam-norm_cum_c_t-success_rate}}
    \subfloat[\camelback]{\includegraphics[width=0.32\textwidth]{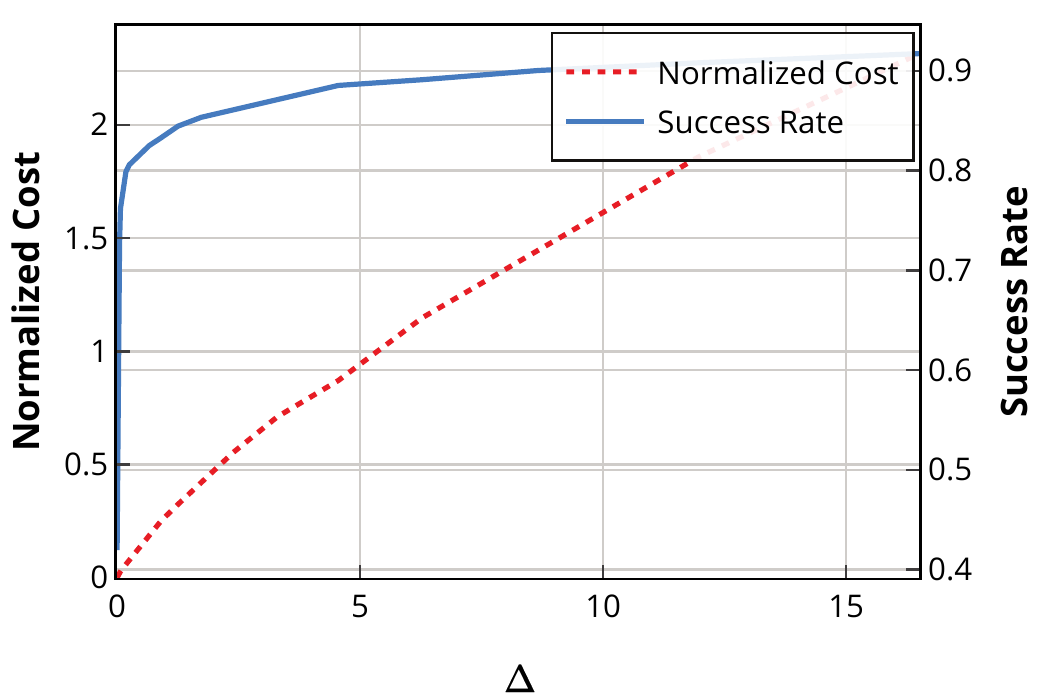}\label{fig:appendix-camelback_TripleAxisParam_AggressiveSubtractionAttack-TripleAxisParam-norm_cum_c_t-success_rate}}
    \subfloat[\hartmann]{\includegraphics[width=0.32\textwidth]{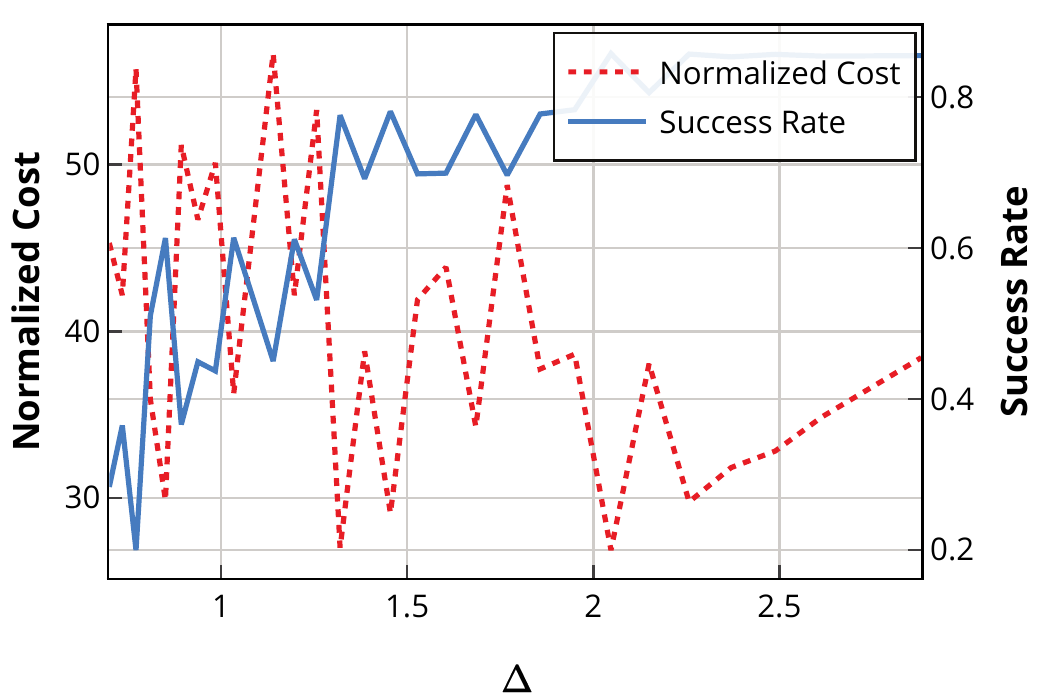}\label{fig:appendix-hartmann6_TripleAxisParam_AggressiveSubtractionAttack-TripleAxisParam-norm_cum_c_t-success_rate}}\\
    \subfloat[\robotsmall]{\includegraphics[width=0.32\textwidth]{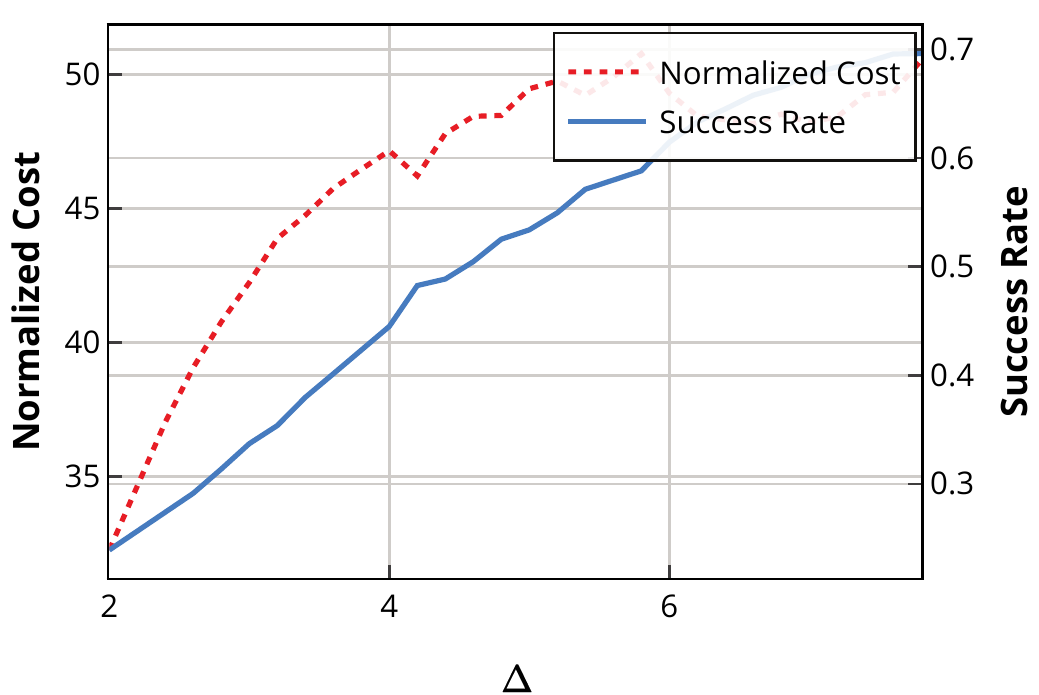}\label{fig:appendix-robot_pushing3d_TripleAxisParam_AggressiveSubtractionAttack-TripleAxisParam-norm_cum_c_t-success_rate}}
    \subfloat[\robotlarge]{\includegraphics[width=0.32\textwidth]{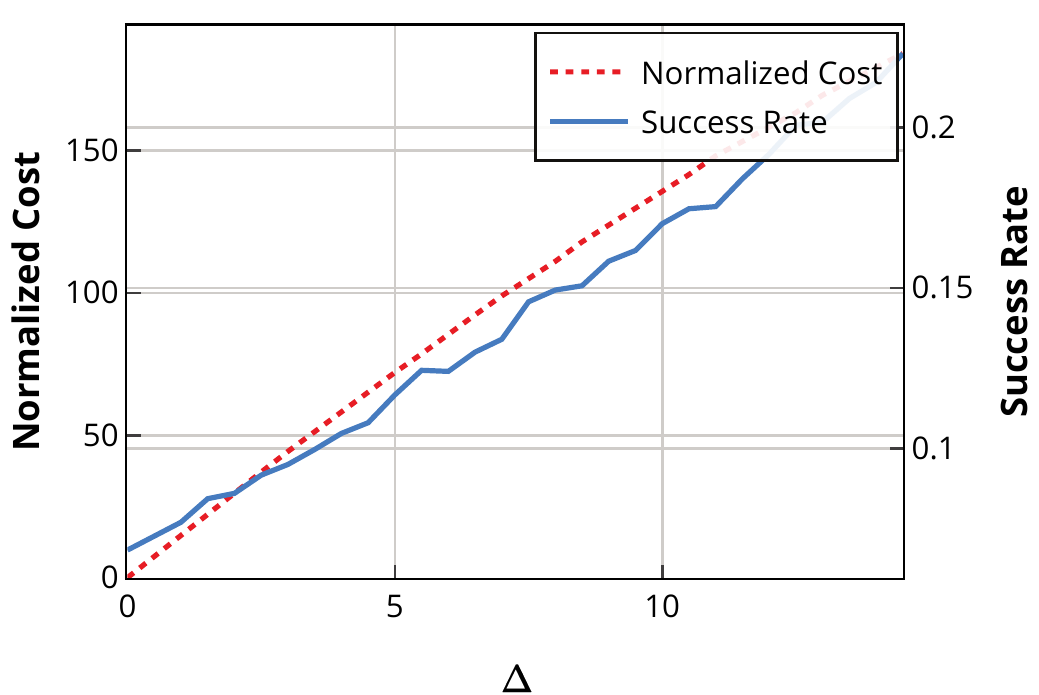}\label{fig:appendix-robot_pushing4d_TripleAxisParam_AggressiveSubtractionAttack-TripleAxisParam-norm_cum_c_t-success_rate}}
   
    \caption{Effect of \aggressivesubtraction's hyperparameter $\Delta$ on the success rate and cost incurred.}
    \label{fig:appendix-AggressiveSubtractionAttack}
\end{figure*}

\begin{figure*}[!ht]
    \centering
    \subfloat[\synthetic]{\includegraphics[width=0.32\textwidth]{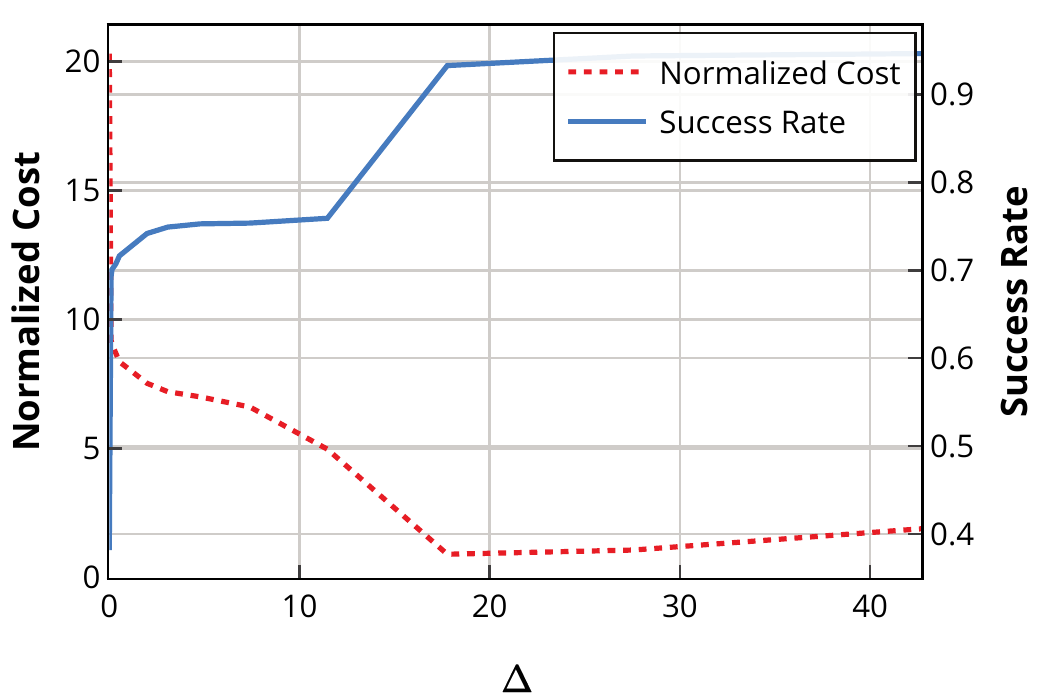}\label{fig:appendix-1d_TripleAxisParam_ClippingAttack-TripleAxisParam-norm_cum_c_t-success_rate}}
    \subfloat[\forrester]{\includegraphics[width=0.32\textwidth]{img___forrester___TripleAxisParam___ClippingAttack-TripleAxisParam-norm_cum_c_t-success_rate.pdf}\label{fig:appendix-forrester_TripleAxisParam_ClippingAttack-TripleAxisParam-norm_cum_c_t-success_rate}}
    \subfloat[\levy]{\includegraphics[width=0.32\textwidth]{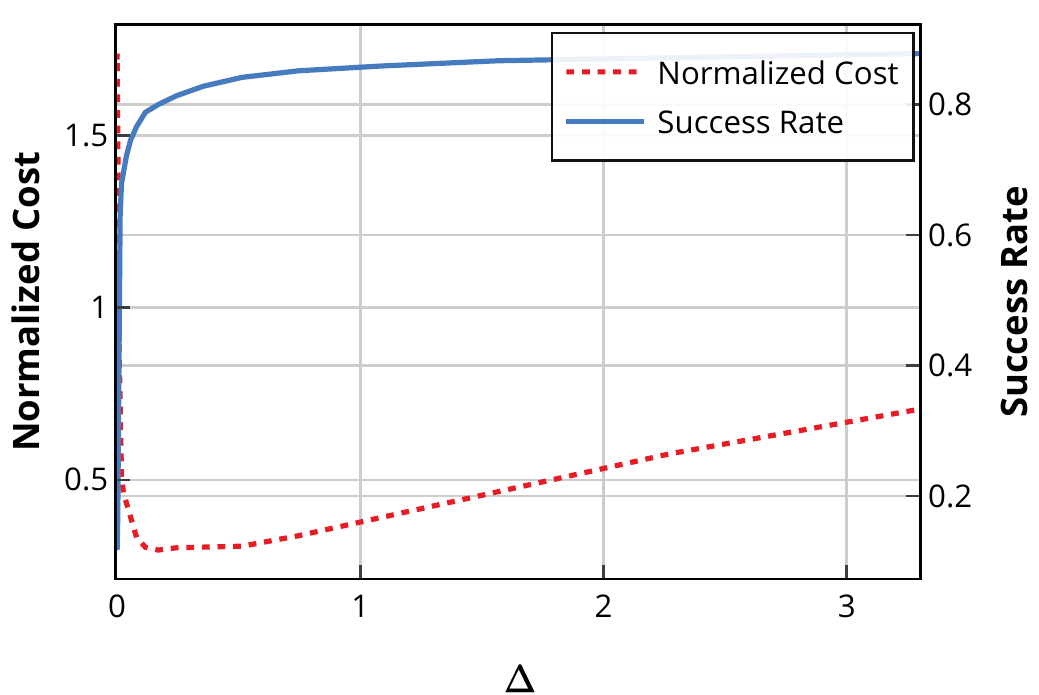}\label{fig:appendix-levy_TripleAxisParam_ClippingAttack-TripleAxisParam-norm_cum_c_t-success_rate}}\\
    \subfloat[\levyhard]{\includegraphics[width=0.32\textwidth]{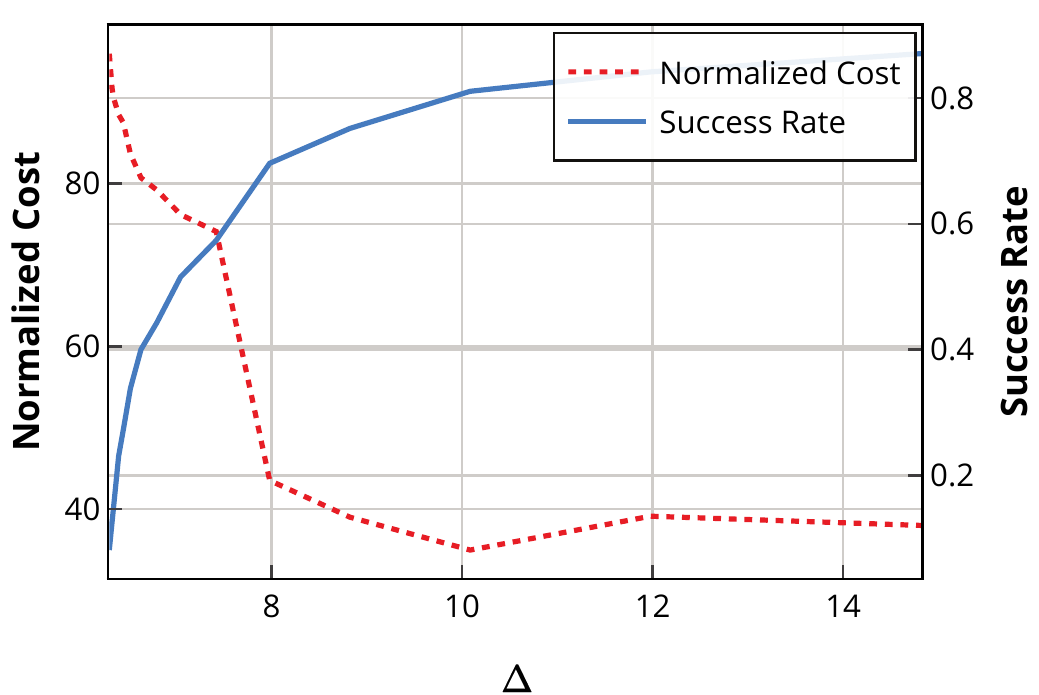}\label{fig:appendix-levy_hard_TripleAxisParam_ClippingAttack-TripleAxisParam-norm_cum_c_t-success_rate}}
    \subfloat[\bohachevsky]{\includegraphics[width=0.32\textwidth]{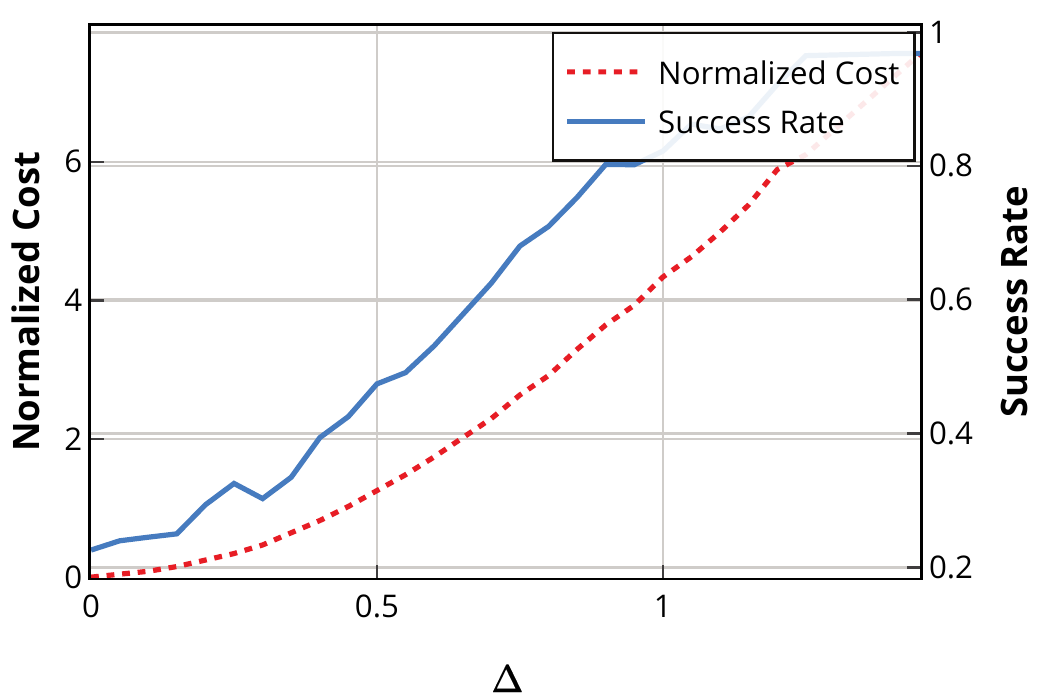}\label{fig:appendix-bohachevsky_TripleAxisParam_ClippingAttack-TripleAxisParam-norm_cum_c_t-success_rate}}
    \subfloat[\bohachevskyhard]{\includegraphics[width=0.32\textwidth]{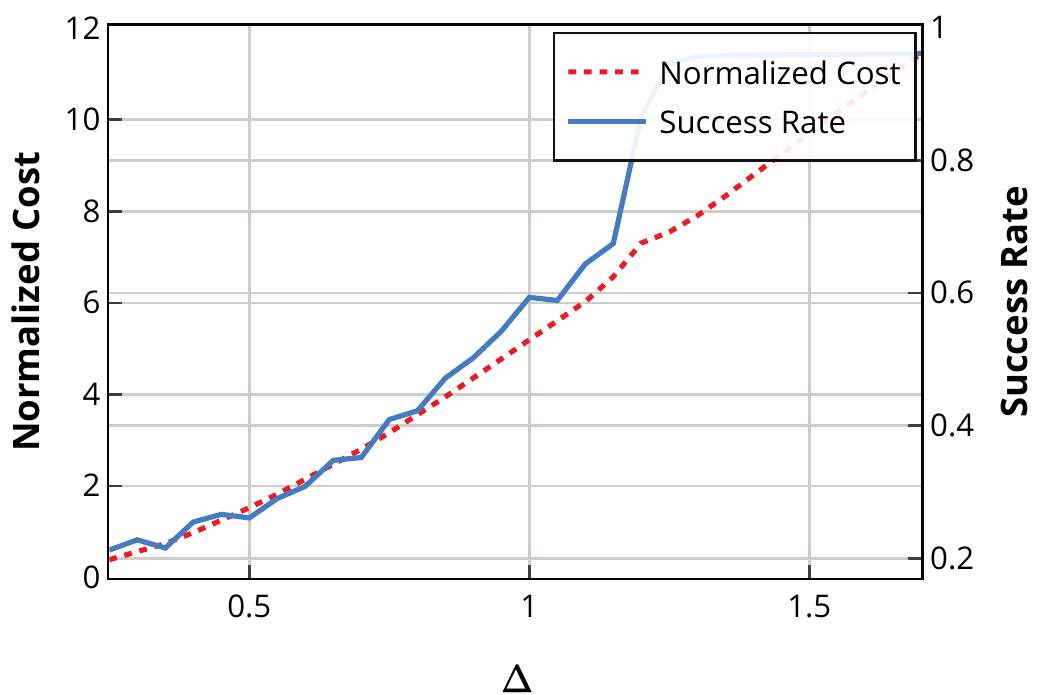}\label{fig:appendix-bohachevsky_hard_TripleAxisParam_ClippingAttack-TripleAxisParam-norm_cum_c_t-success_rate}}\\
    \subfloat[\branin]{\includegraphics[width=0.32\textwidth]{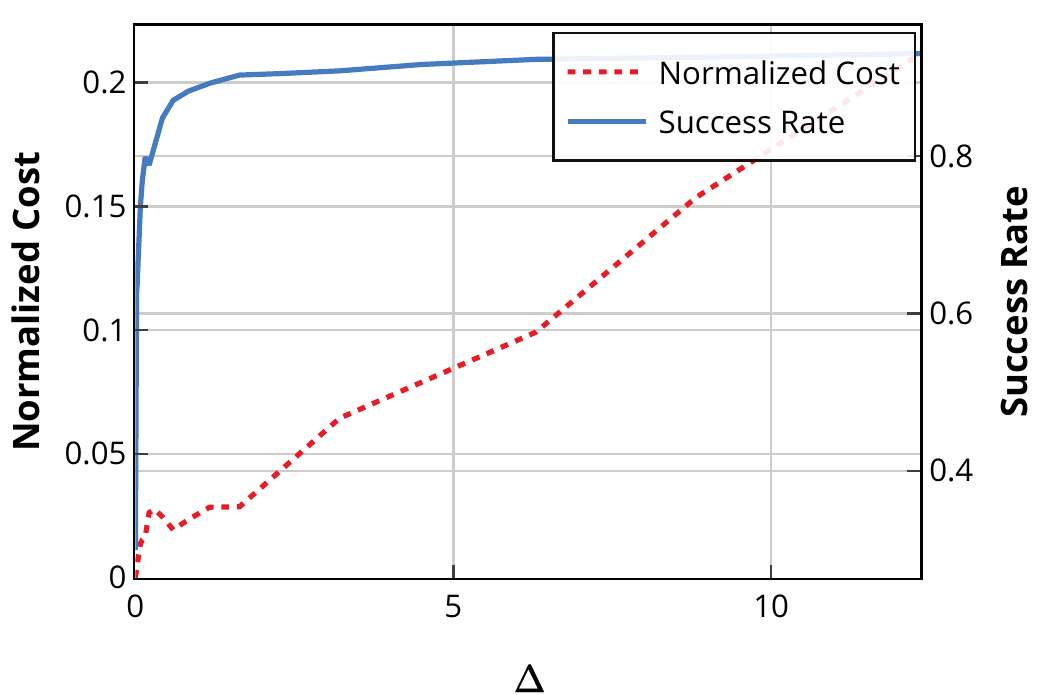}\label{fig:appendix-branin_TripleAxisParam_ClippingAttack-TripleAxisParam-norm_cum_c_t-success_rate}}
    \subfloat[\camelback]{\includegraphics[width=0.32\textwidth]{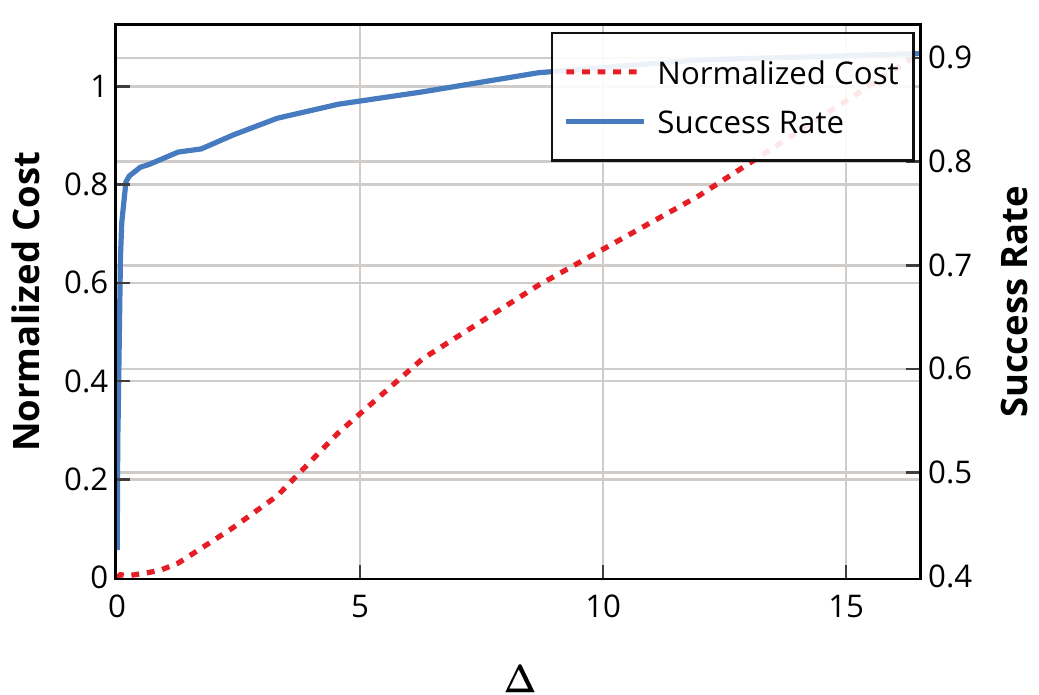}\label{fig:appendix-camelback_TripleAxisParam_ClippingAttack-TripleAxisParam-norm_cum_c_t-success_rate}}
    \subfloat[\hartmann]{\includegraphics[width=0.32\textwidth]{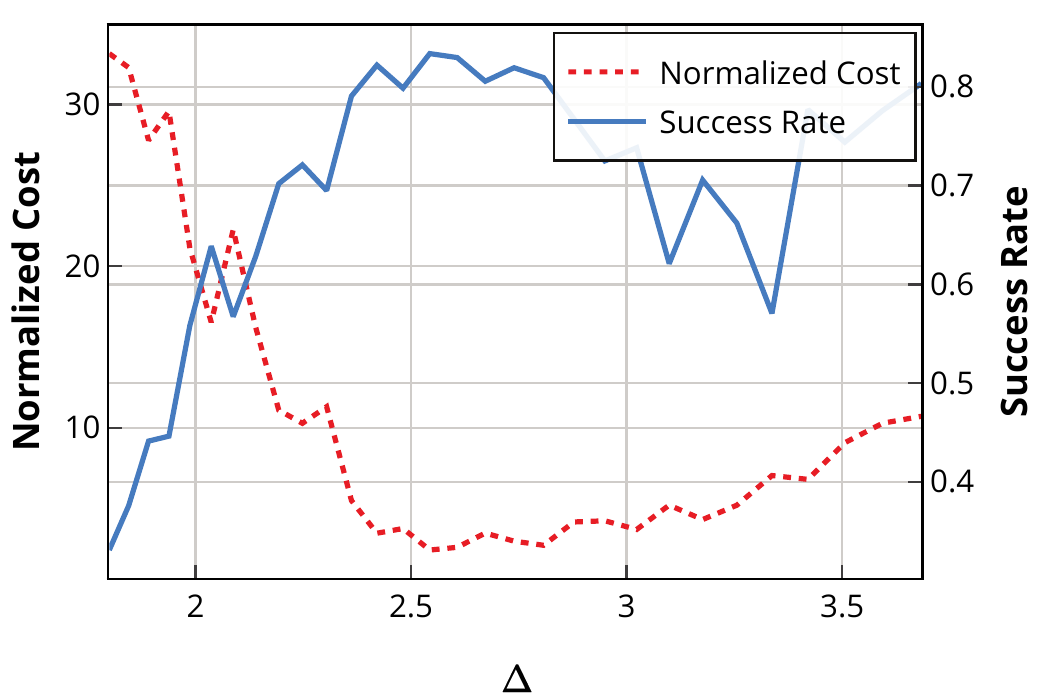}\label{fig:appendix-hartmann6_TripleAxisParam_ClippingAttack-TripleAxisParam-norm_cum_c_t-success_rate}}\\
    \subfloat[\robotsmall]{\includegraphics[width=0.32\textwidth]{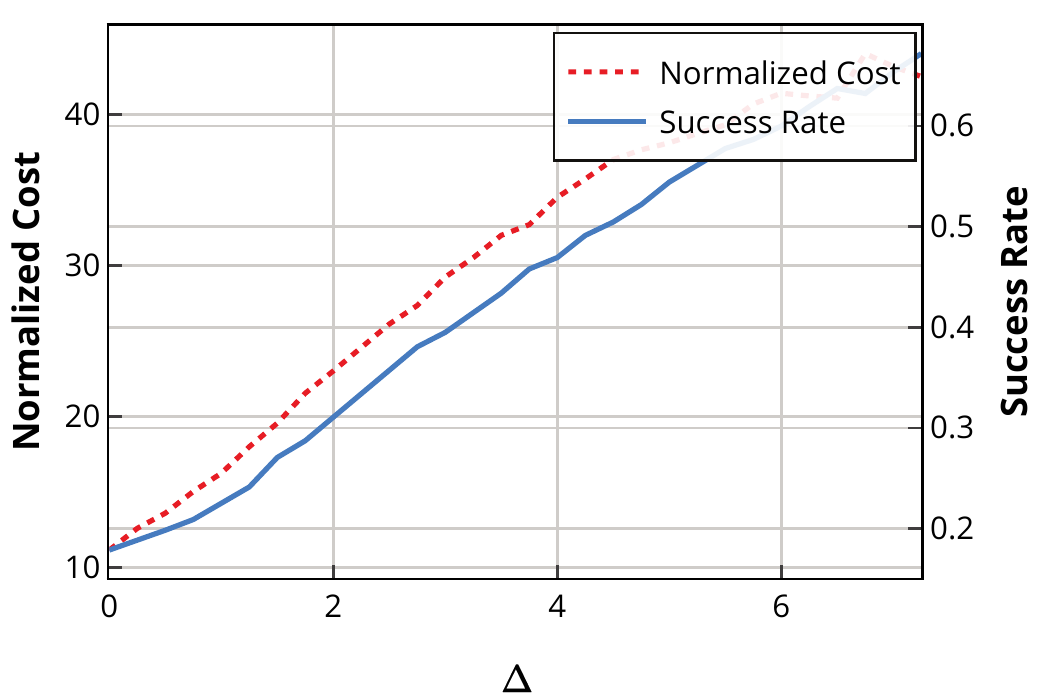}\label{fig:appendix-robot_pushing3d_TripleAxisParam_ClippingAttack-TripleAxisParam-norm_cum_c_t-success_rate}}
    \subfloat[\robotlarge]{\includegraphics[width=0.32\textwidth]{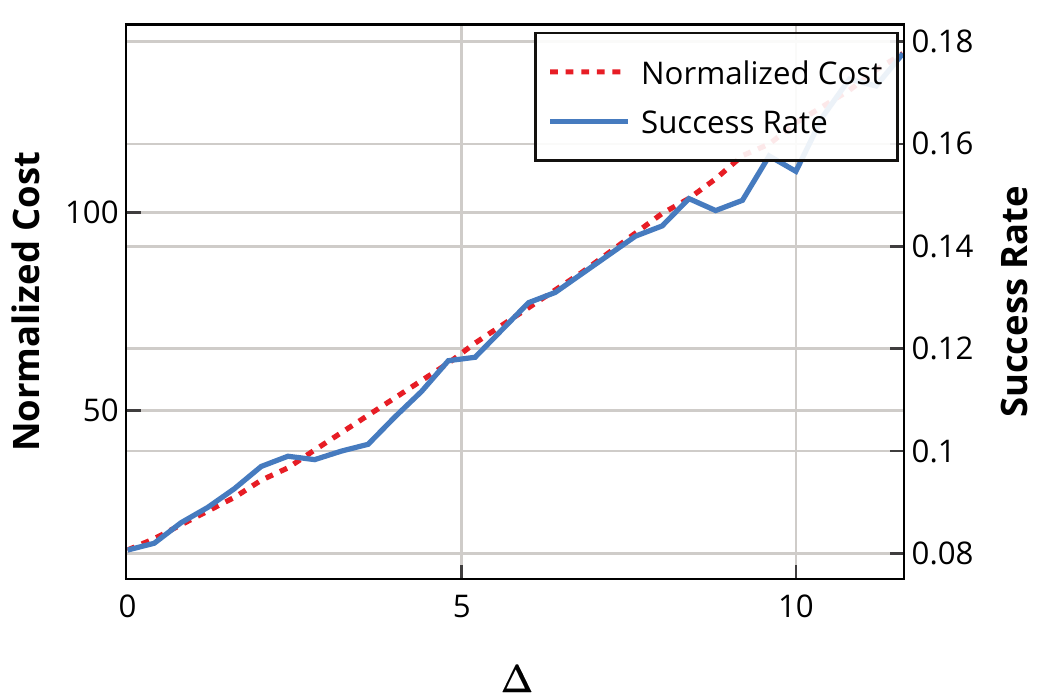}\label{fig:appendix-robot_pushing4d_TripleAxisParam_ClippingAttack-TripleAxisParam-norm_cum_c_t-success_rate}}
    \caption{Effect of \clipping's hyperparameter $\Delta$ on the success rate and cost incurred.}
    \label{fig:appendix-ClippingAttack}
\end{figure*}

\begin{figure*}[!ht]
    \centering
    \subfloat[\synthetic]{\includegraphics[width=0.32\textwidth]{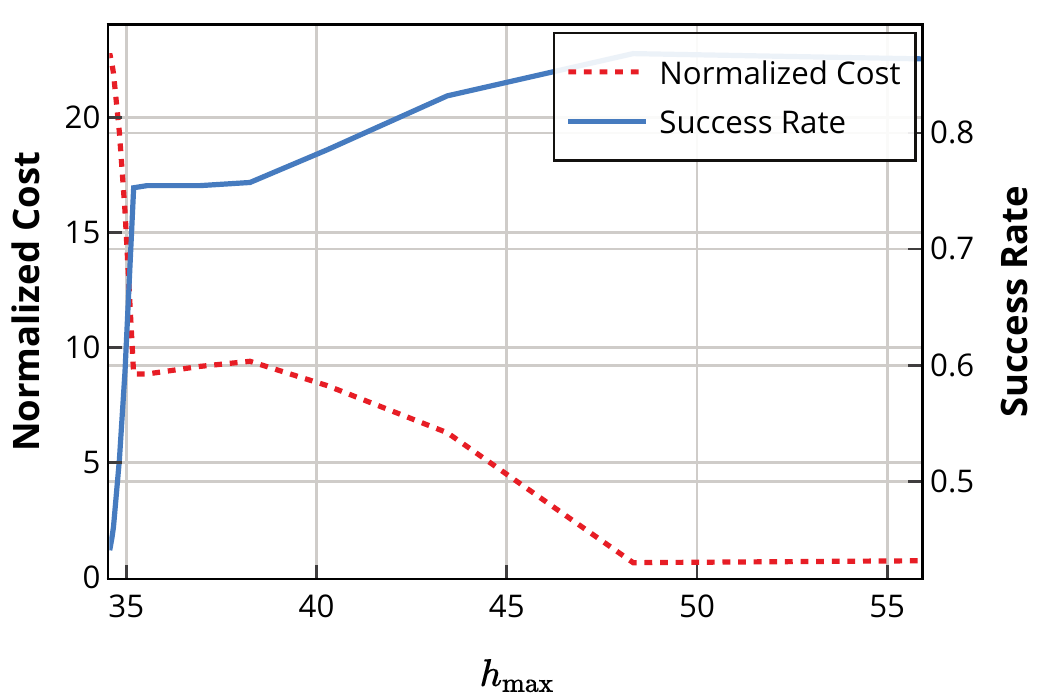}\label{fig:appendix-1d_TripleAxisParam_SubtractionAttackRnd-TripleAxisParam-norm_cum_c_t-success_rate}}
    \subfloat[\forrester]{\includegraphics[width=0.32\textwidth]{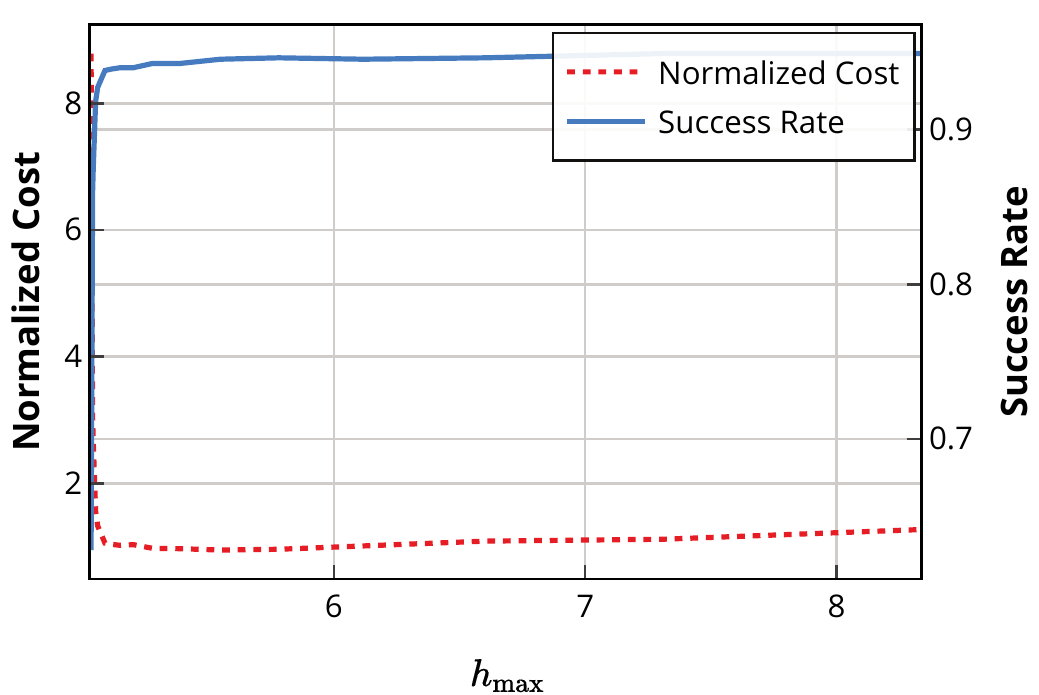}\label{fig:appendix-forrester_TripleAxisParam_SubtractionAttackRnd-TripleAxisParam-norm_cum_c_t-success_rate}}
    \subfloat[\levy]{\includegraphics[width=0.32\textwidth]{img___levy___TripleAxisParam___SubtractionAttackRnd-TripleAxisParam-norm_cum_c_t-success_rate.pdf}\label{fig:appendix-levy_TripleAxisParam_SubtractionAttackRnd-TripleAxisParam-norm_cum_c_t-success_rate}}\\
    \subfloat[\levyhard]{\includegraphics[width=0.32\textwidth]{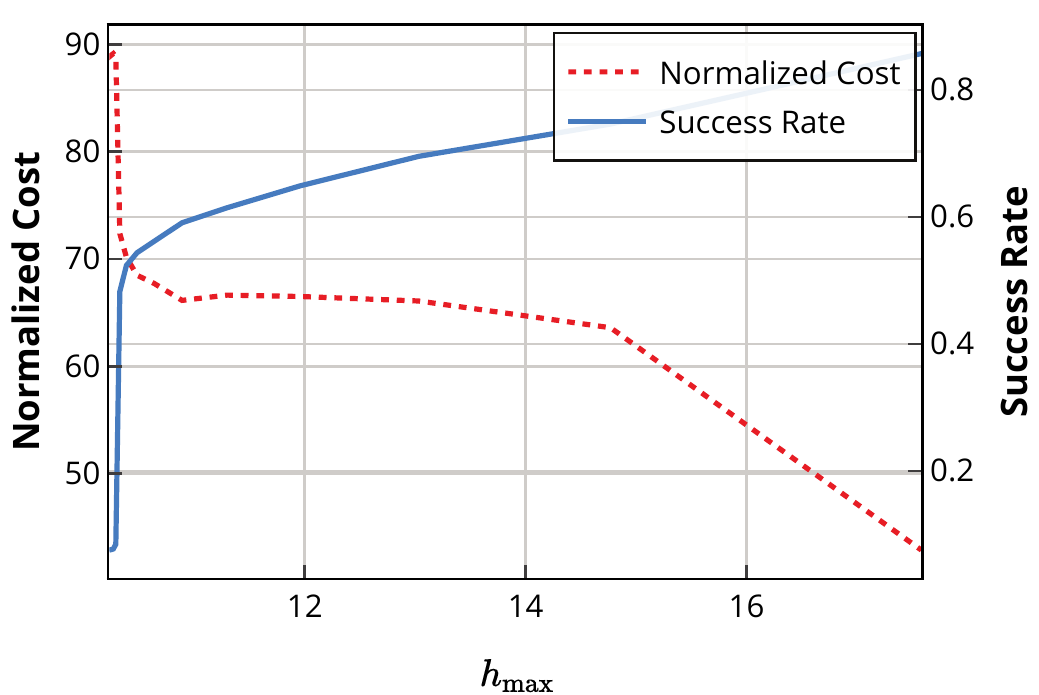}\label{fig:appendix-levy_hard_TripleAxisParam_SubtractionAttackRnd-TripleAxisParam-norm_cum_c_t-success_rate}}
    \subfloat[\bohachevsky]{\includegraphics[width=0.32\textwidth]{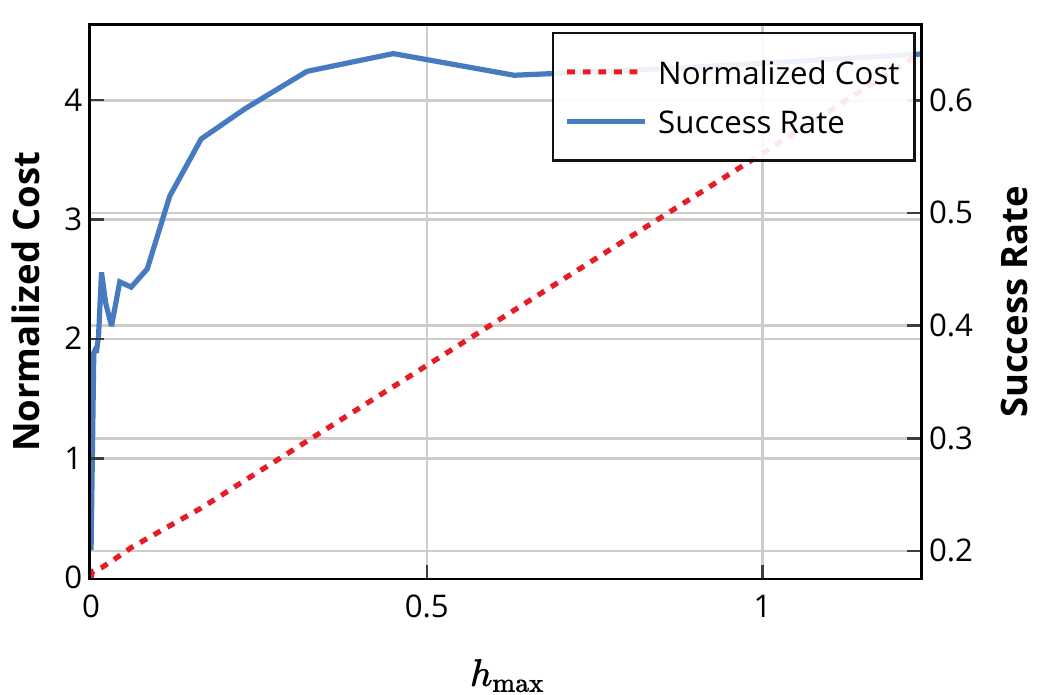}\label{fig:appendix-bohachevsky_TripleAxisParam_SubtractionAttackRnd-TripleAxisParam-norm_cum_c_t-success_rate}}
    \subfloat[\bohachevskyhard]{\includegraphics[width=0.32\textwidth]{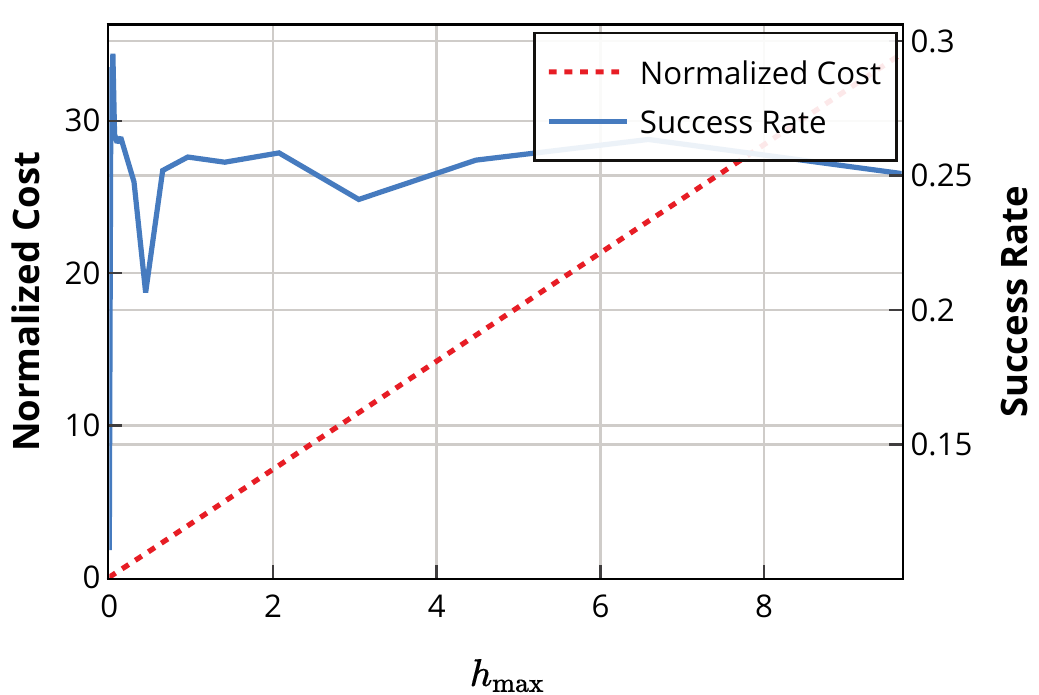}\label{fig:appendix-bohachevsky_hard_TripleAxisParam_SubtractionAttackRnd-TripleAxisParam-norm_cum_c_t-success_rate}}\\
    \subfloat[\branin]{\includegraphics[width=0.32\textwidth]{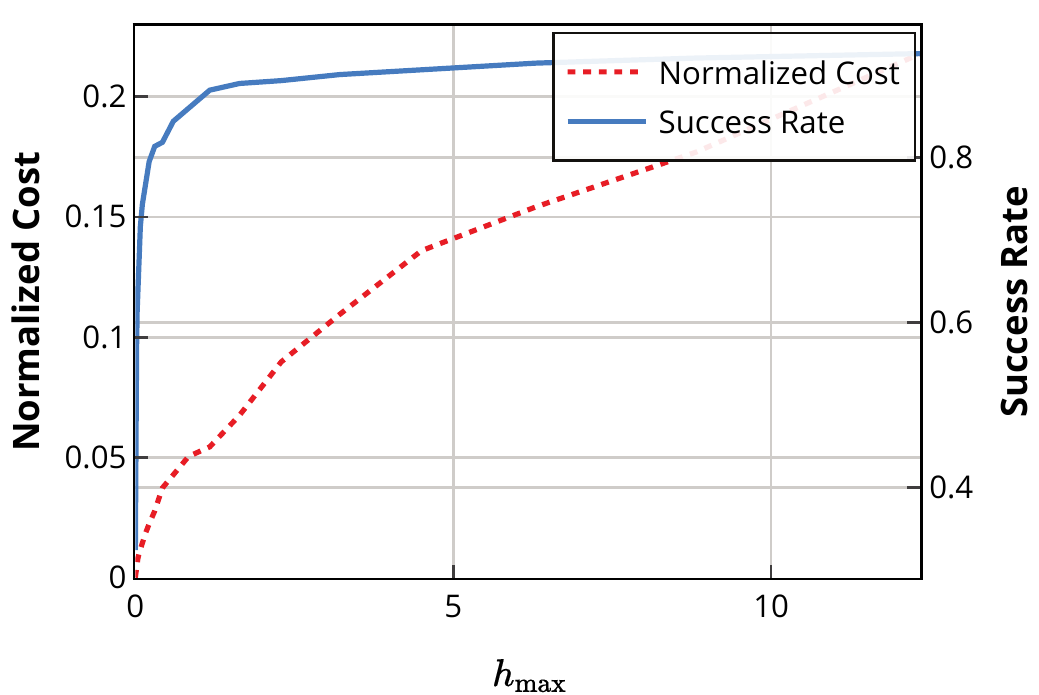}\label{fig:appendix-branin_TripleAxisParam_SubtractionAttackRnd-TripleAxisParam-norm_cum_c_t-success_rate}}
    \subfloat[\camelback]{\includegraphics[width=0.32\textwidth]{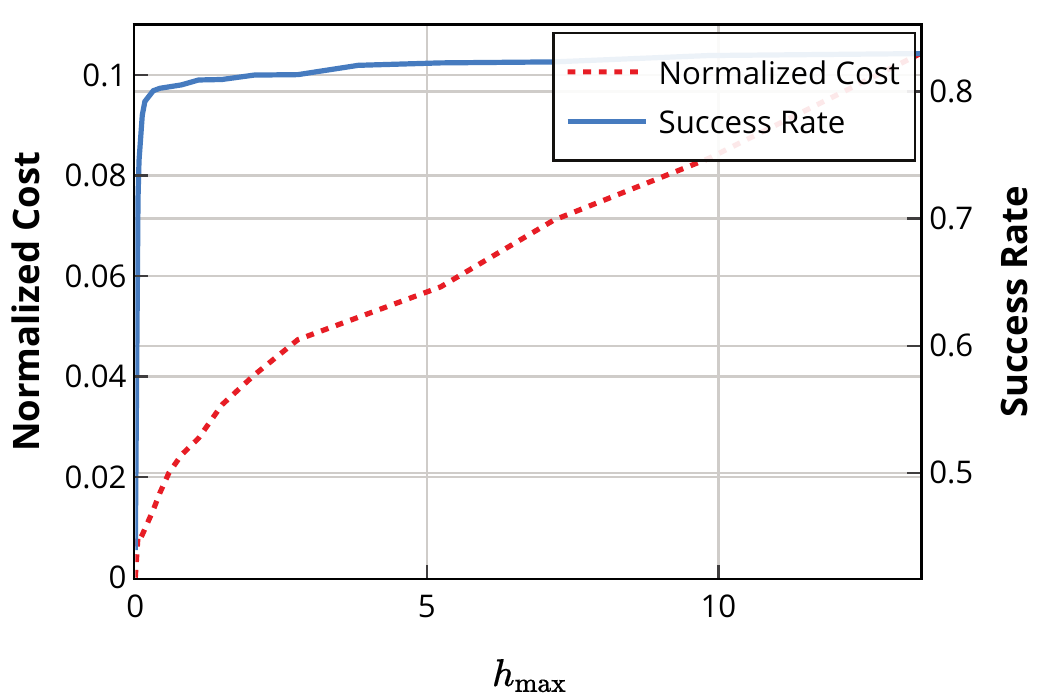}\label{fig:appendix-camelback_TripleAxisParam_SubtractionAttackRnd-TripleAxisParam-norm_cum_c_t-success_rate}}
    \caption{Effect of \subtractionrnd's hyperparameter $h_{\max}$ on the success rate and cost incurred.}
    \label{fig:appendix-SubtractionAttackRnd}
\end{figure*}

\begin{figure*}[!ht]
    \centering
    \subfloat[\synthetic]{\includegraphics[width=0.32\textwidth]{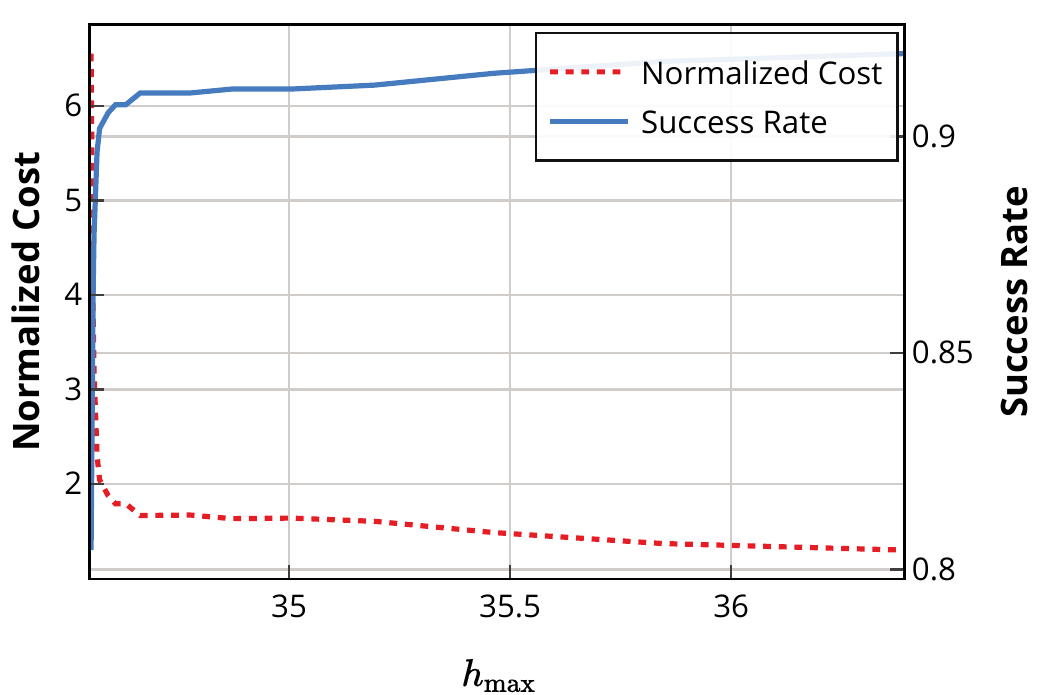}\label{fig:appendix-1d_TripleAxisParam_SubtractionAttackSq-TripleAxisParam-norm_cum_c_t-success_rate}}
    \subfloat[\forrester]{\includegraphics[width=0.32\textwidth]{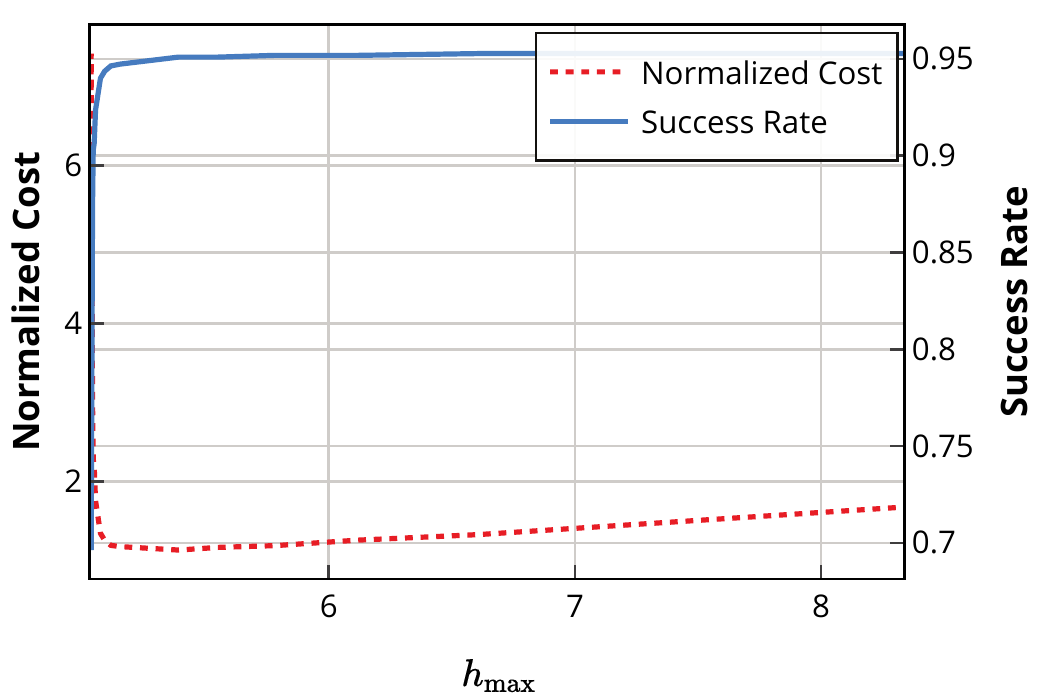}\label{fig:appendix-forrester_TripleAxisParam_SubtractionAttackSq-TripleAxisParam-norm_cum_c_t-success_rate}}
    \subfloat[\levy]{\includegraphics[width=0.32\textwidth]{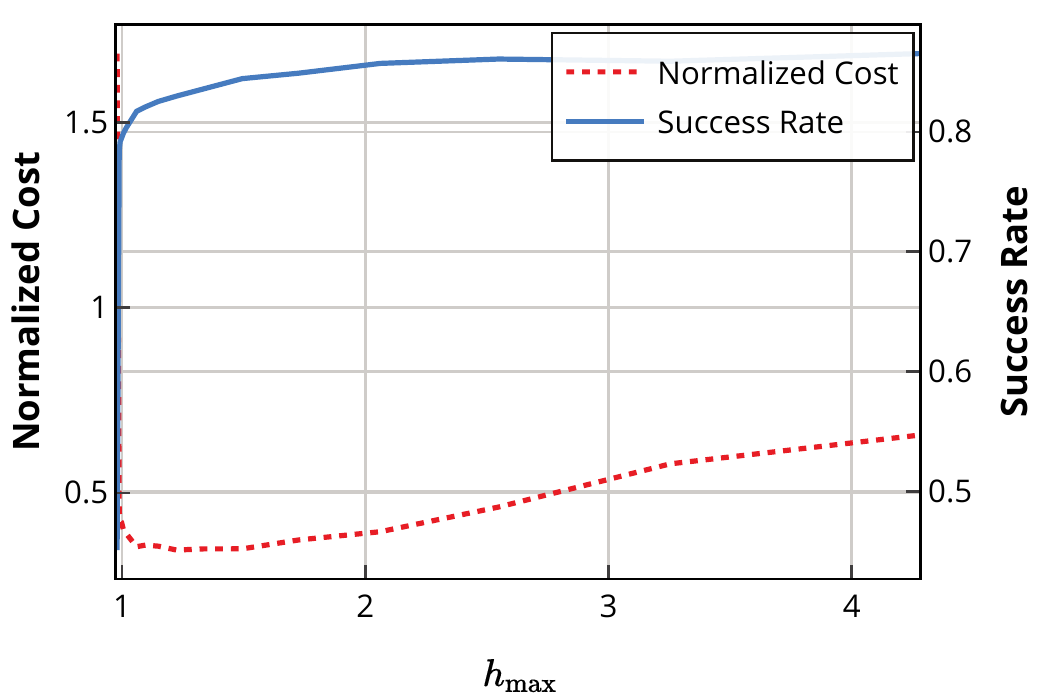}\label{fig:appendix-levy_TripleAxisParam_SubtractionAttackSq-TripleAxisParam-norm_cum_c_t-success_rate}}\\
    \subfloat[\levyhard]{\includegraphics[width=0.32\textwidth]{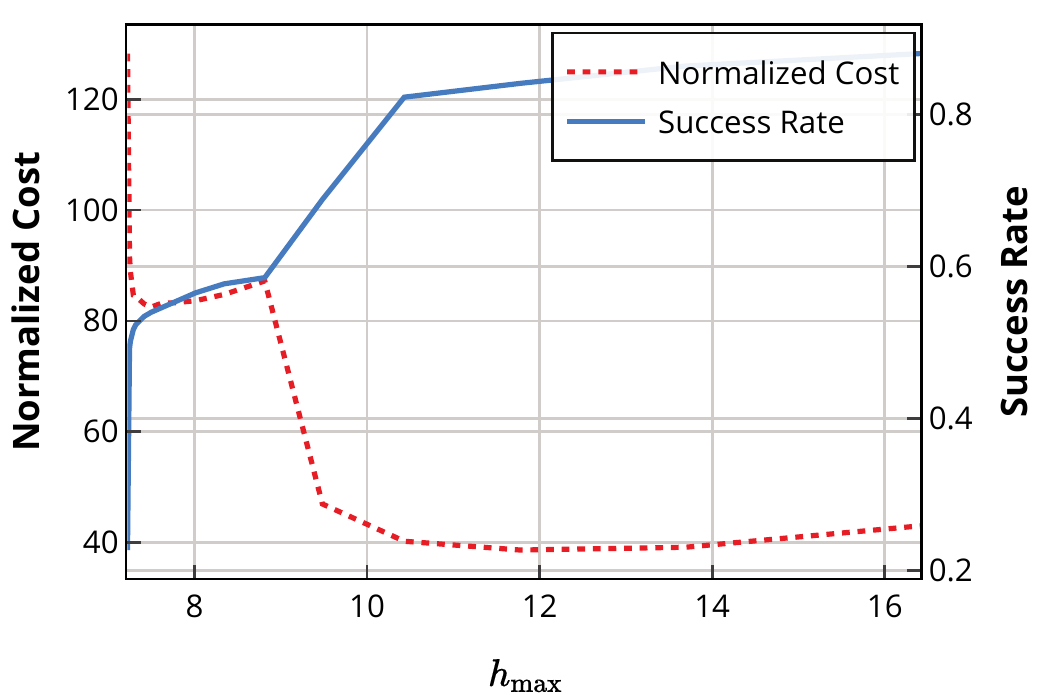}\label{fig:appendix-levy_hard_TripleAxisParam_SubtractionAttackSq-TripleAxisParam-norm_cum_c_t-success_rate}}
    \subfloat[\bohachevsky]{\includegraphics[width=0.32\textwidth]{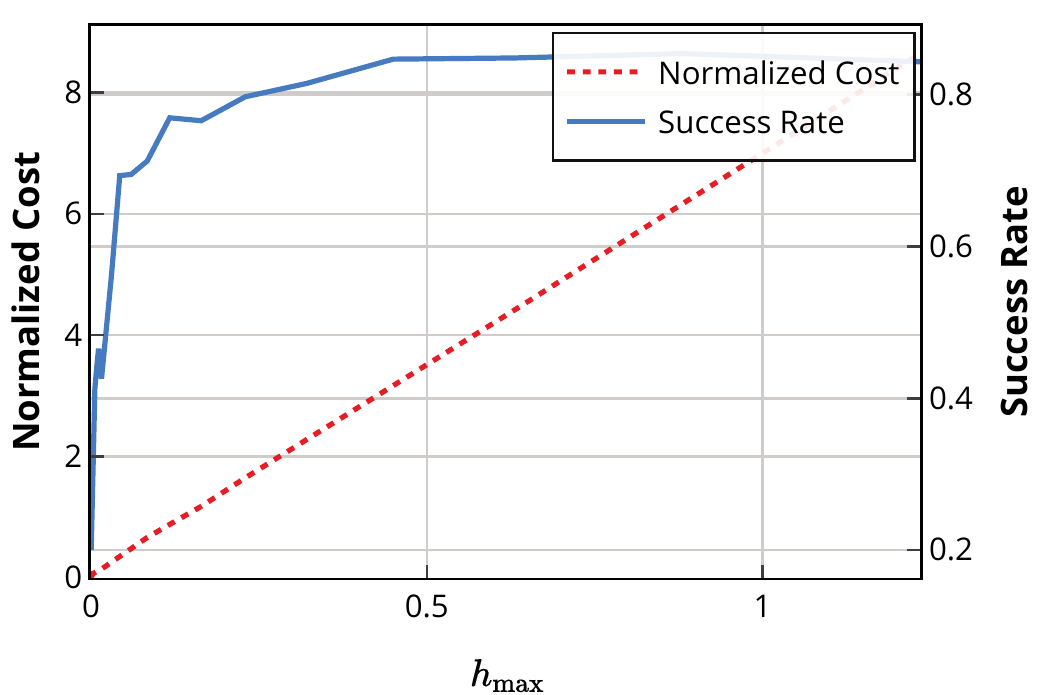}\label{fig:appendix-bohachevsky_TripleAxisParam_SubtractionAttackSq-TripleAxisParam-norm_cum_c_t-success_rate}}
    \subfloat[\bohachevskyhard]{\includegraphics[width=0.32\textwidth]{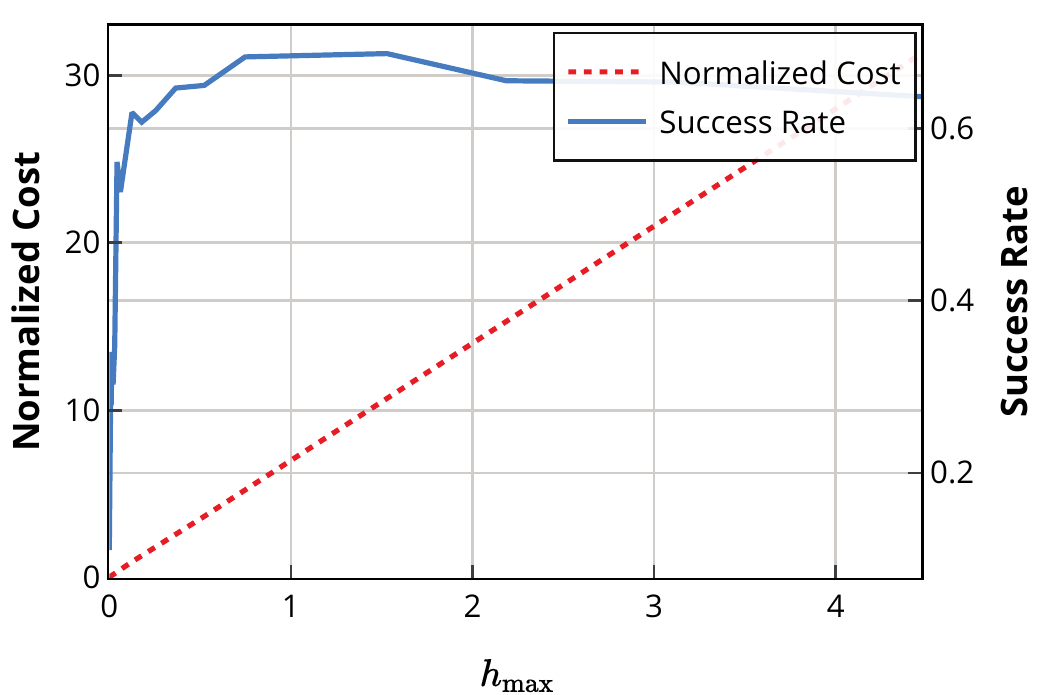}\label{fig:appendix-bohachevsky_hard_TripleAxisParam_SubtractionAttackSq-TripleAxisParam-norm_cum_c_t-success_rate}}\\
    \subfloat[\branin]{\includegraphics[width=0.32\textwidth]{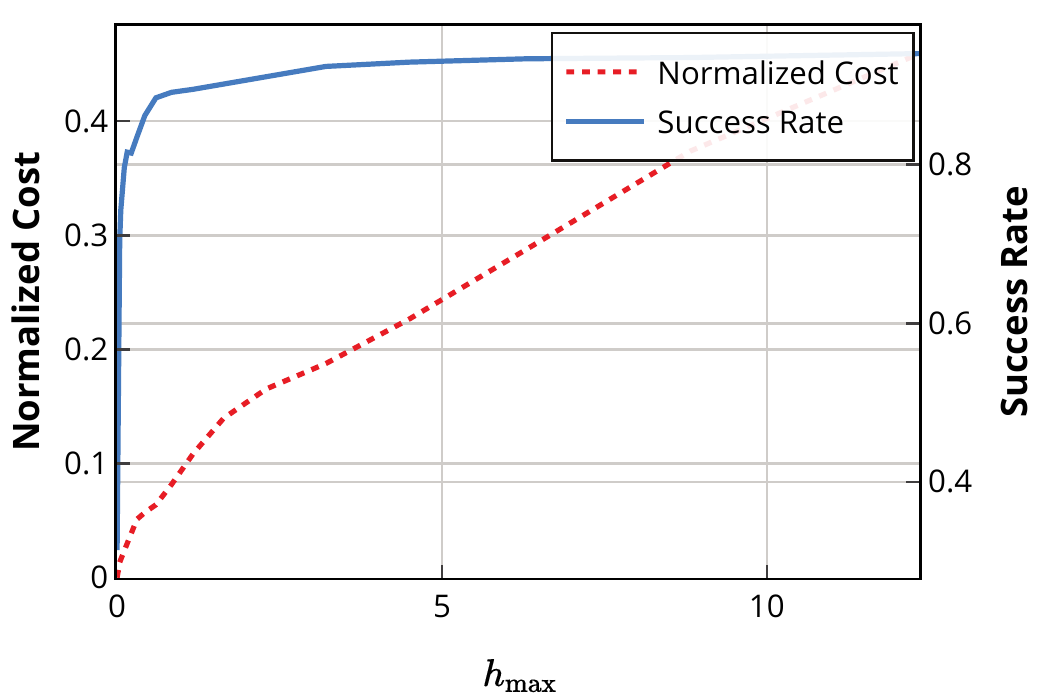}\label{fig:appendix-branin_TripleAxisParam_SubtractionAttackSq-TripleAxisParam-norm_cum_c_t-success_rate}}
    \subfloat[\camelback]{\includegraphics[width=0.32\textwidth]{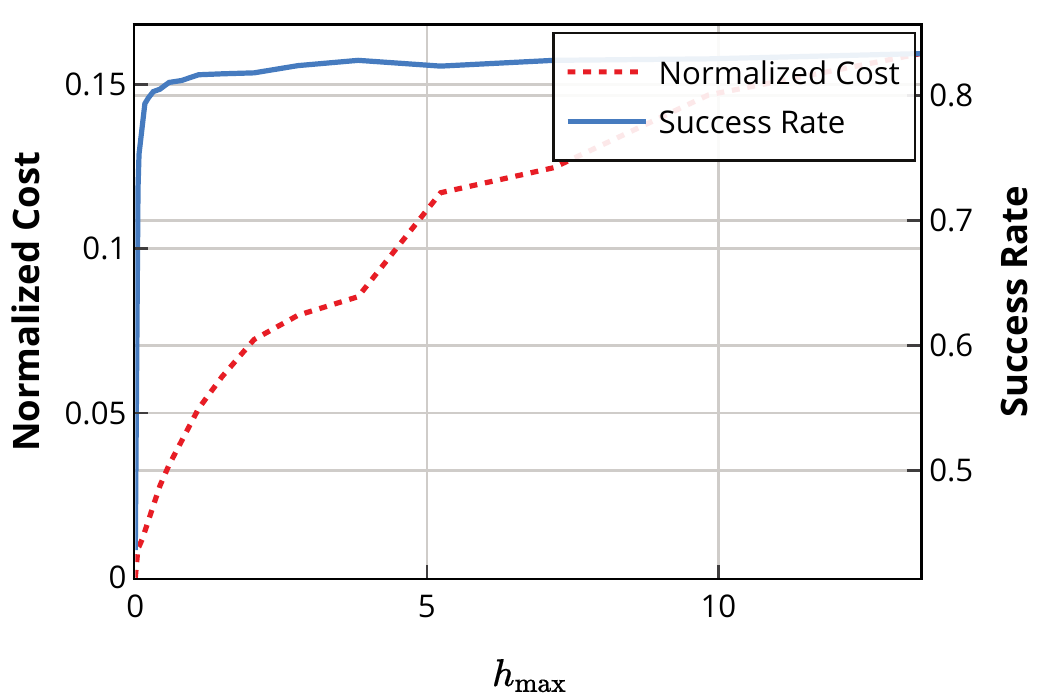}\label{fig:appendix-camelback_TripleAxisParam_SubtractionAttackSq-TripleAxisParam-norm_cum_c_t-success_rate}}
    \caption{\subtractionsq}
    \caption{Effect of \subtractionsq's hyperparameter $h_{\max}$ on the success rate and cost incurred.}
    \label{fig:appendix-SubtractionAttackSq}
\end{figure*}

\begin{figure*}[!ht]
    \centering
    \includegraphics[height=0.25cm]{img___1d___Scatter___legend-Scatter-success_rate-norm_cum_c_t.pdf} \\
    \subfloat[\synthetic]{\includegraphics[width=0.32\textwidth]{img___1d___Scatter___overall_avg_log_log_rev-Scatter-success_rate-norm_cum_c_t.pdf}\label{fig:appendix-1d_Scatter_overall_avg_log_log_rev-Scatter-success_rate-norm_cum_c_t}}
    \subfloat[\forrester]{\includegraphics[width=0.32\textwidth]{img___forrester___Scatter___overall_avg_log_log_rev-Scatter-success_rate-norm_cum_c_t.pdf}\label{fig:appendix-forrester_Scatter_overall_avg_log_log_rev-Scatter-success_rate-norm_cum_c_t}}
    \subfloat[\levy]{\includegraphics[width=0.32\textwidth]{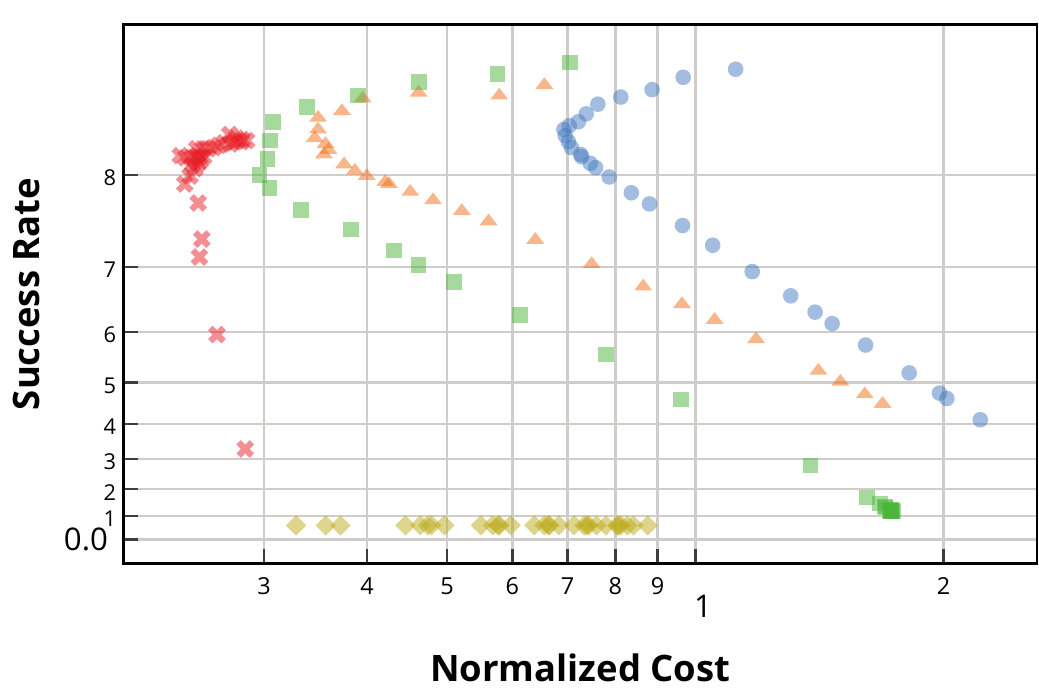}\label{fig:appendix-levy_Scatter_overall_avg_log_log_rev-Scatter-success_rate-norm_cum_c_t}}\\
    \subfloat[\levyhard]{\includegraphics[width=0.32\textwidth]{img___levy_hard___Scatter___overall_avg_log_log_rev-Scatter-success_rate-norm_cum_c_t.pdf}\label{fig:appendix-levy_hard_Scatter_overall_avg_log_log_rev-Scatter-success_rate-norm_cum_c_t}}
    \subfloat[\bohachevsky]{\includegraphics[width=0.32\textwidth]{img___bohachevsky___Scatter___overall_avg_log_log_rev-Scatter-success_rate-norm_cum_c_t.pdf}\label{fig:appendix-bohachevsky_Scatter_overall_avg_log_log_rev-Scatter-success_rate-norm_cum_c_t}}
    \subfloat[\bohachevskyhard]{\includegraphics[width=0.32\textwidth]{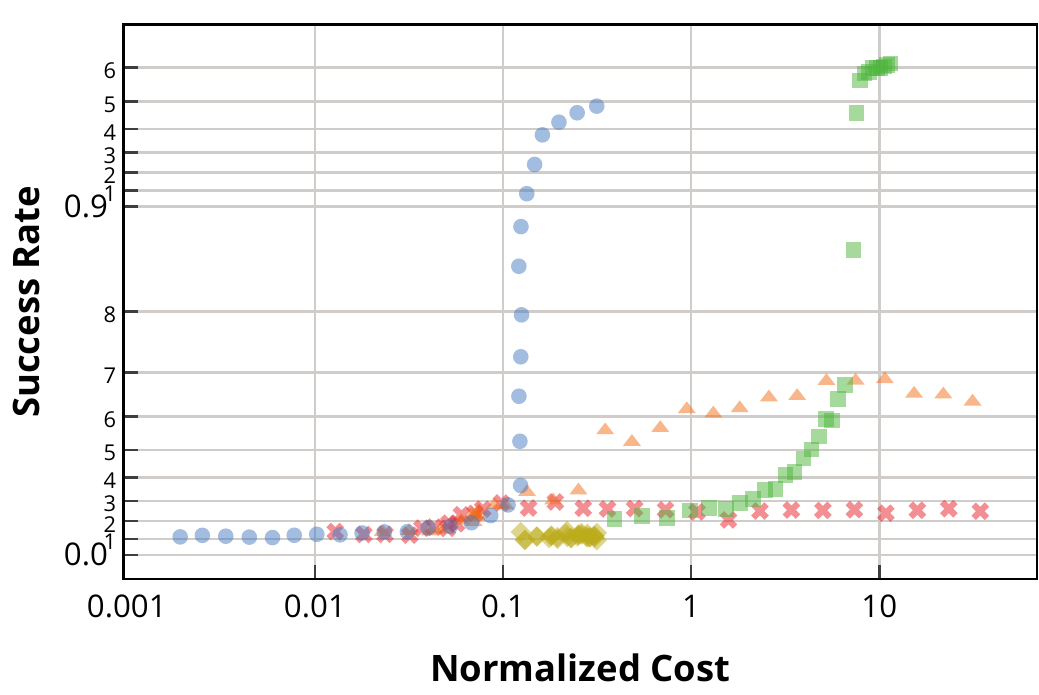}\label{fig:appendix-bohachevsky_hard_Scatter_overall_avg_log_log_rev-Scatter-success_rate-norm_cum_c_t}}\\
    \subfloat[\branin]{\includegraphics[width=0.32\textwidth]{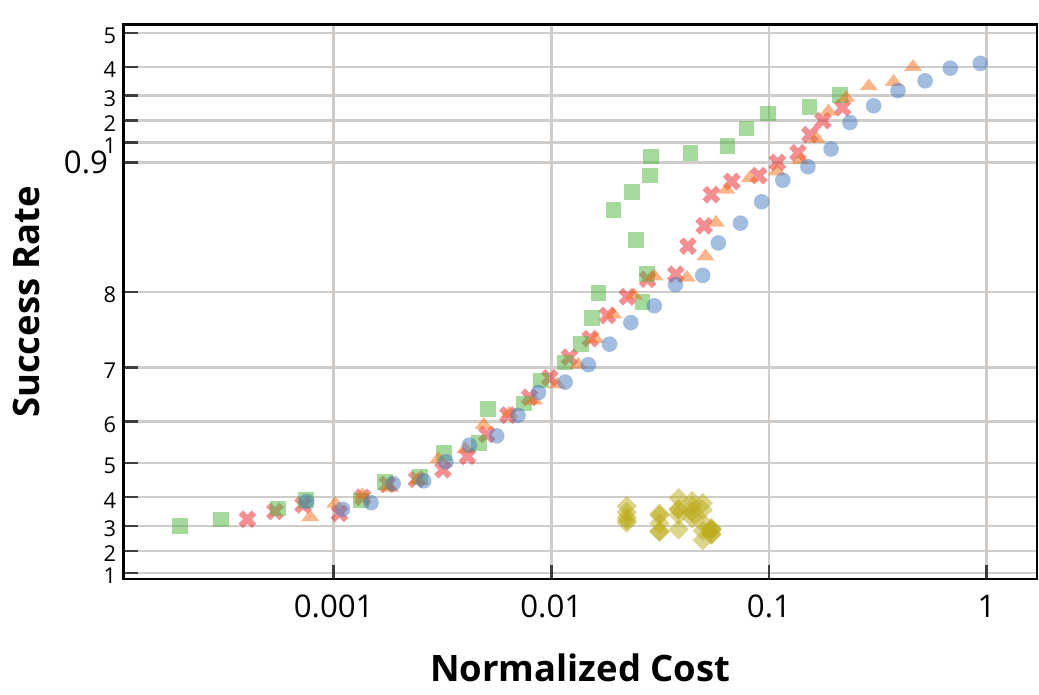}\label{fig:appendix-branin_Scatter_overall_avg_log_log_rev-Scatter-success_rate-norm_cum_c_t}}
    \subfloat[\camelback]{\includegraphics[width=0.32\textwidth]{img___camelback___Scatter___overall_avg_log_log_rev-Scatter-success_rate-norm_cum_c_t.pdf}\label{fig:appendix-camelback_Scatter_overall_avg_log_log_rev-Scatter-success_rate-norm_cum_c_t}}
    \subfloat[\hartmann]{\includegraphics[width=0.32\textwidth]{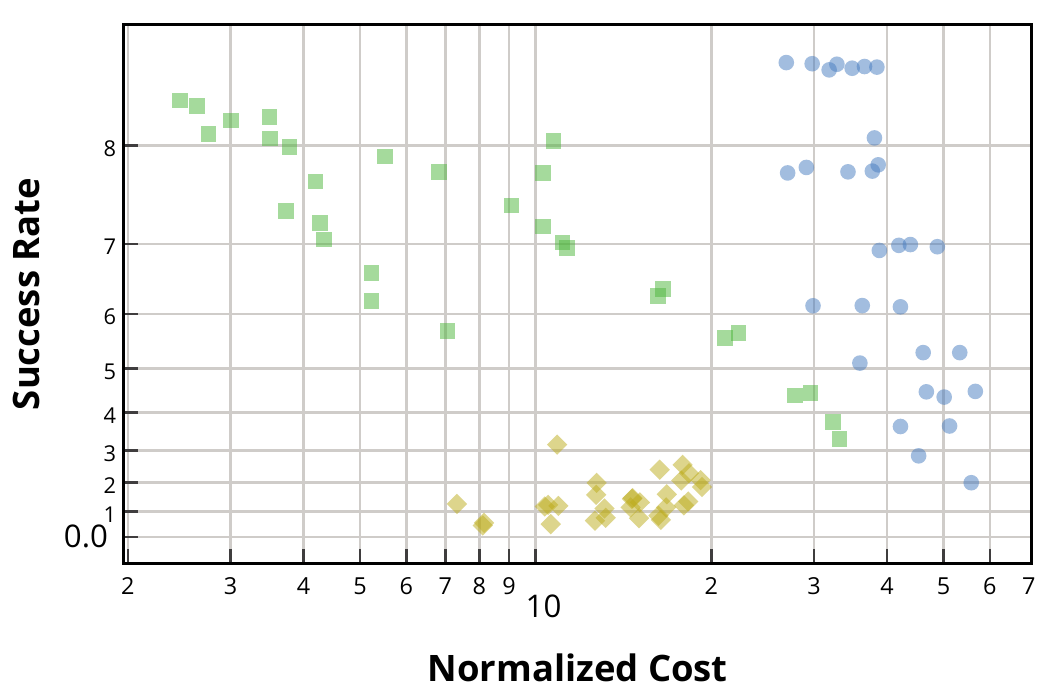}\label{fig:appendix-hartmann6_Scatter_overall_avg_log_log_rev-Scatter-success_rate-norm_cum_c_t}}\\
    \subfloat[\robotsmall]{\includegraphics[width=0.32\textwidth]{img___robot_pushing3d___Scatter___overall_avg_log_log_rev-Scatter-success_rate-norm_cum_c_t.pdf}\label{fig:appendix-robot_pushing3d_Scatter_overall_avg_log_log_rev-Scatter-success_rate-norm_cum_c_t}}
    \subfloat[\robotlarge]{\includegraphics[width=0.32\textwidth]{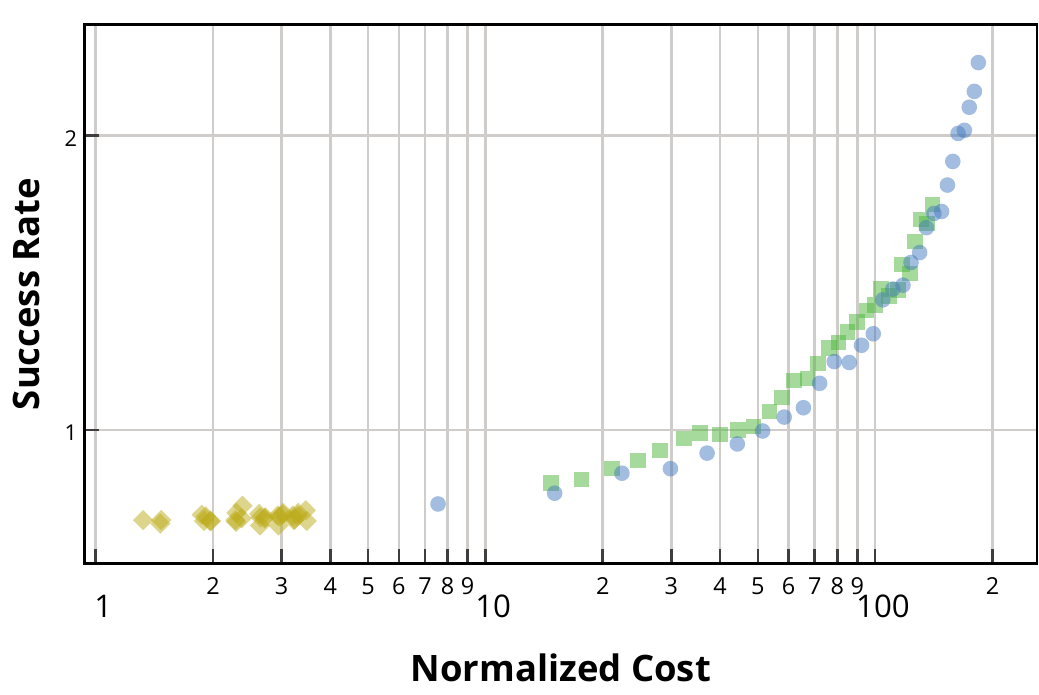}\label{fig:appendix-robot_pushing4d_Scatter_overall_avg_log_log_rev-Scatter-success_rate-norm_cum_c_t}}
    \caption{Success rate vs.~cost averaged over random seeds, each point is the performance of a particular hyperparameter.}
    \label{fig:appendix-SuccessRate_Cost_Avg}
\end{figure*}

\begin{figure*}[!ht]
    \centering
    \includegraphics[height=0.25cm]{img___1d___Scatter___legend-Scatter-success_rate-norm_cum_c_t.pdf} \\
    \subfloat[\synthetic]{\includegraphics[width=0.32\textwidth]{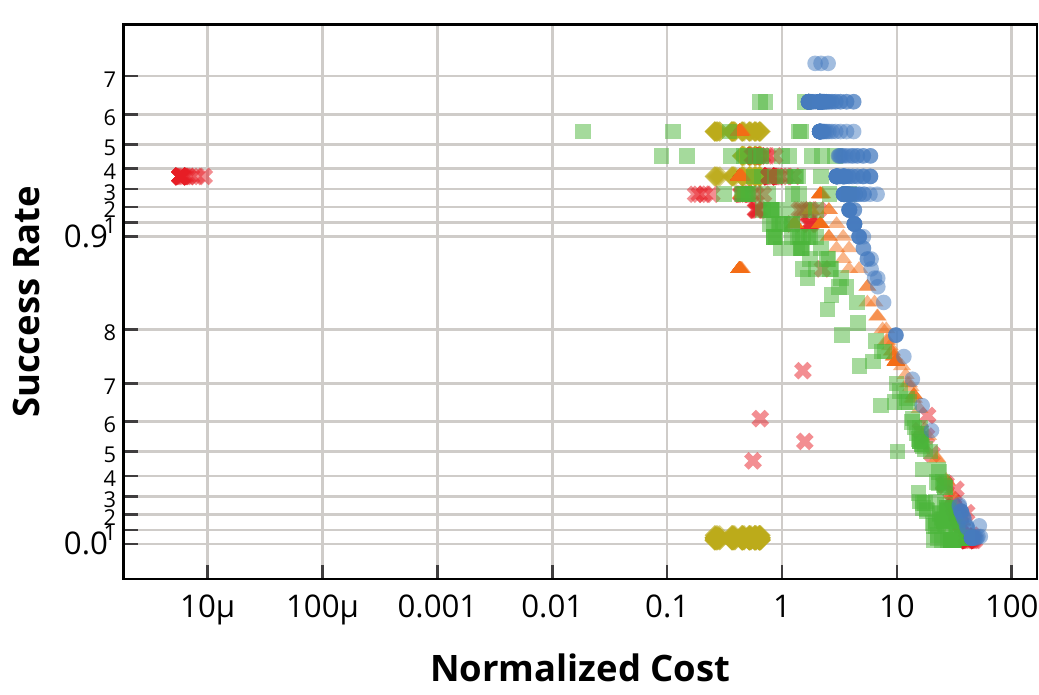}\label{fig:appendix-1d_Scatter_overall_log_log_rev-Scatter-success_rate-norm_cum_c_t}}
    \subfloat[\forrester]{\includegraphics[width=0.32\textwidth]{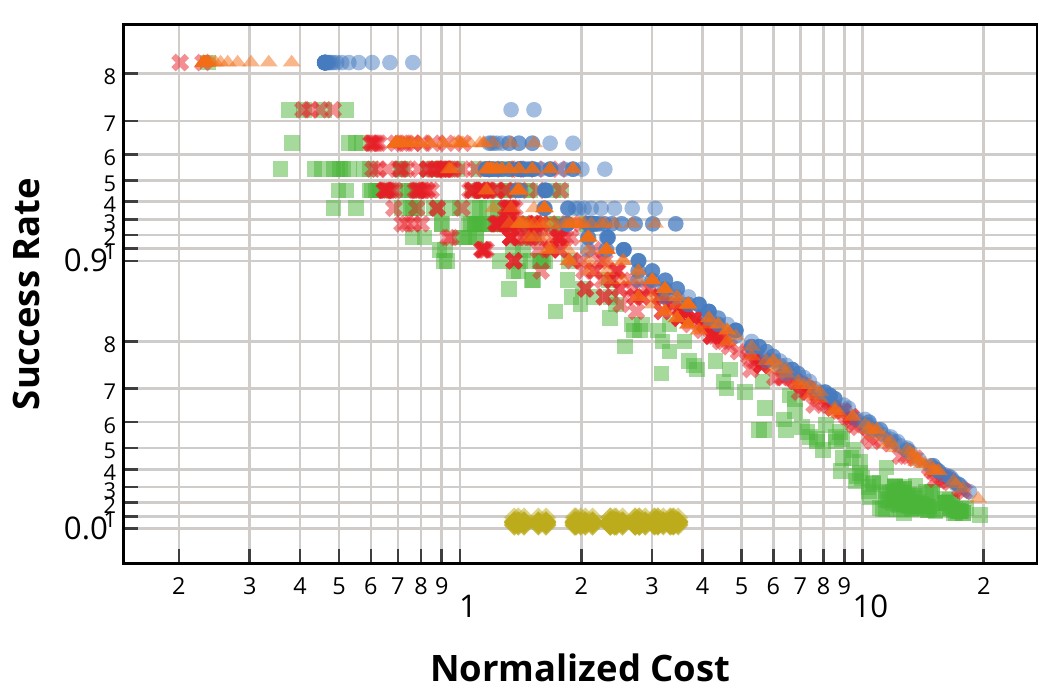}\label{fig:appendix-forrester_Scatter_overall_log_log_rev-Scatter-success_rate-norm_cum_c_t}}
    \subfloat[\levy]{\includegraphics[width=0.32\textwidth]{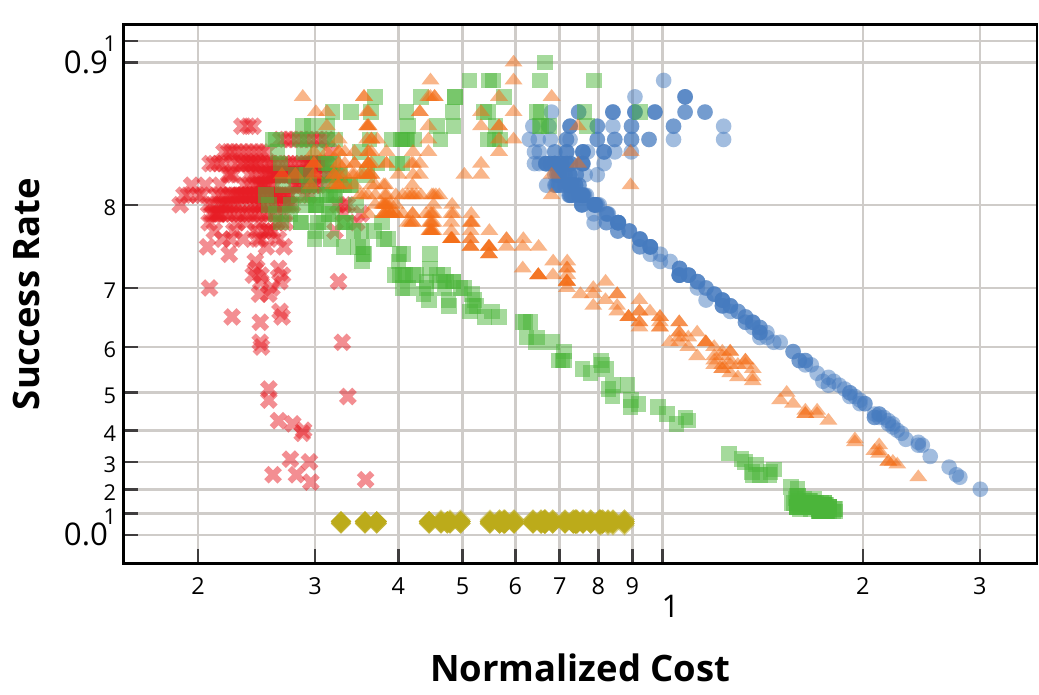}\label{fig:appendix-levy_Scatter_overall_log_log_rev-Scatter-success_rate-norm_cum_c_t}}\\
    \subfloat[\levyhard]{\includegraphics[width=0.32\textwidth]{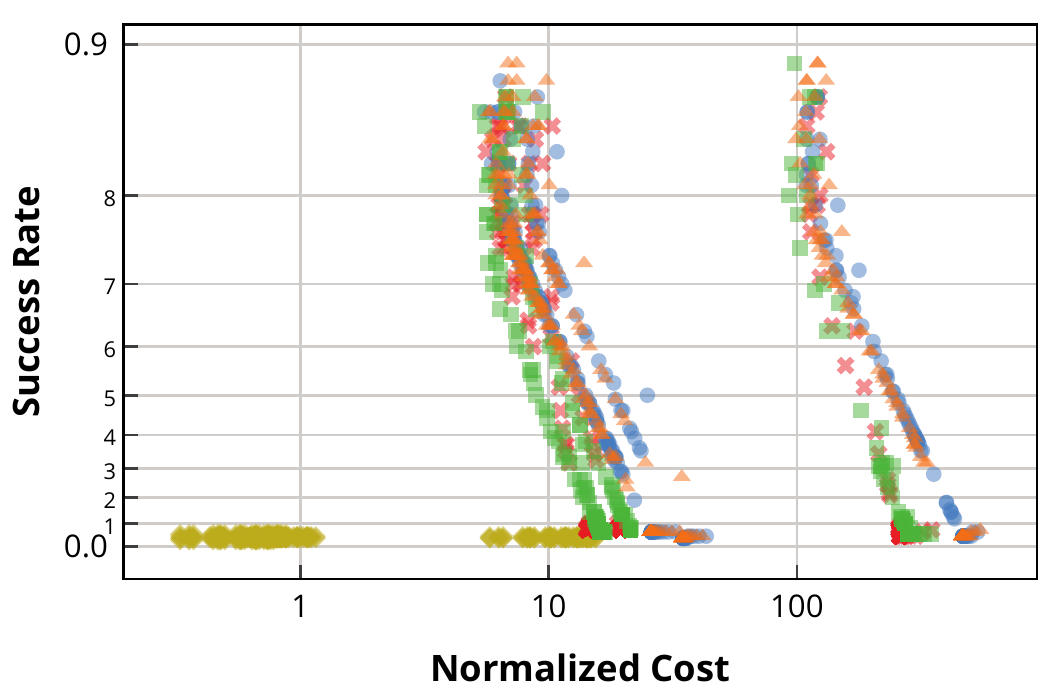}\label{fig:appendix-levy_hard_Scatter_overall_log_log_rev-Scatter-success_rate-norm_cum_c_t}}
    \subfloat[\bohachevsky]{\includegraphics[width=0.32\textwidth]{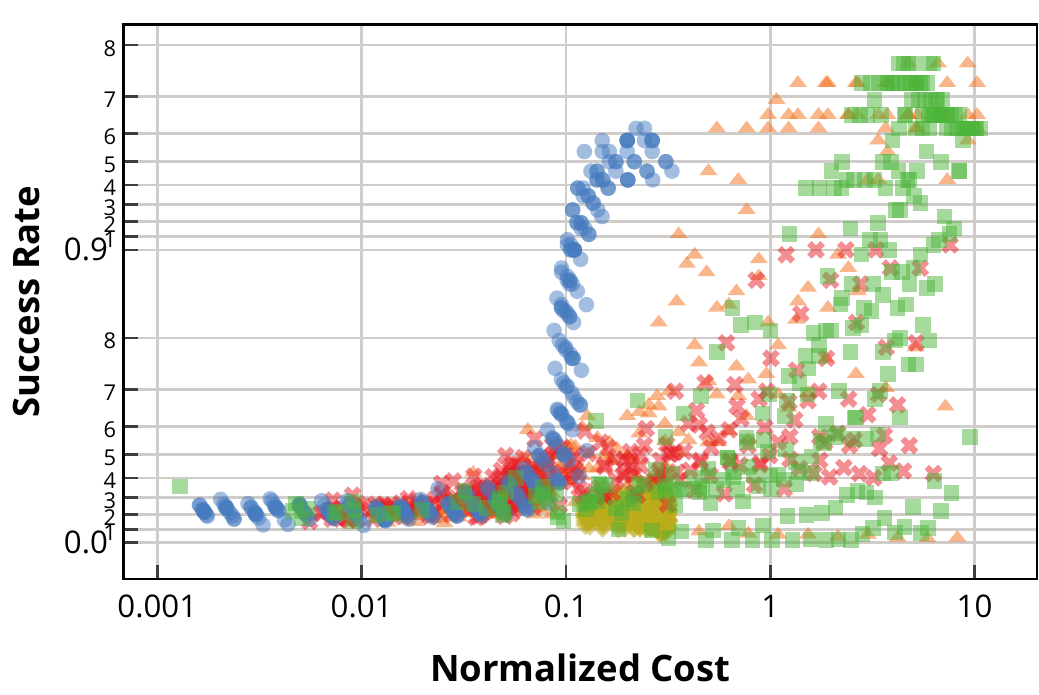}\label{fig:appendix-bohachevsky_Scatter_overall_log_log_rev-Scatter-success_rate-norm_cum_c_t}}
    \subfloat[\bohachevskyhard]{\includegraphics[width=0.32\textwidth]{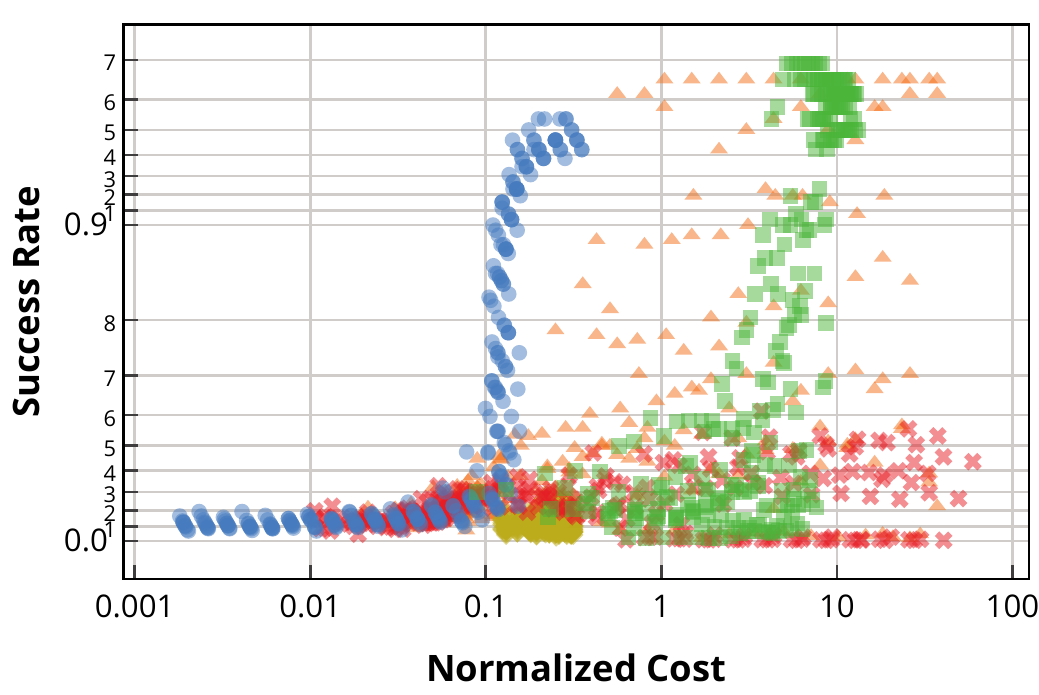}\label{fig:appendix-bohachevsky_hard_Scatter_overall_log_log_rev-Scatter-success_rate-norm_cum_c_t}}\\
    \subfloat[\branin]{\includegraphics[width=0.32\textwidth]{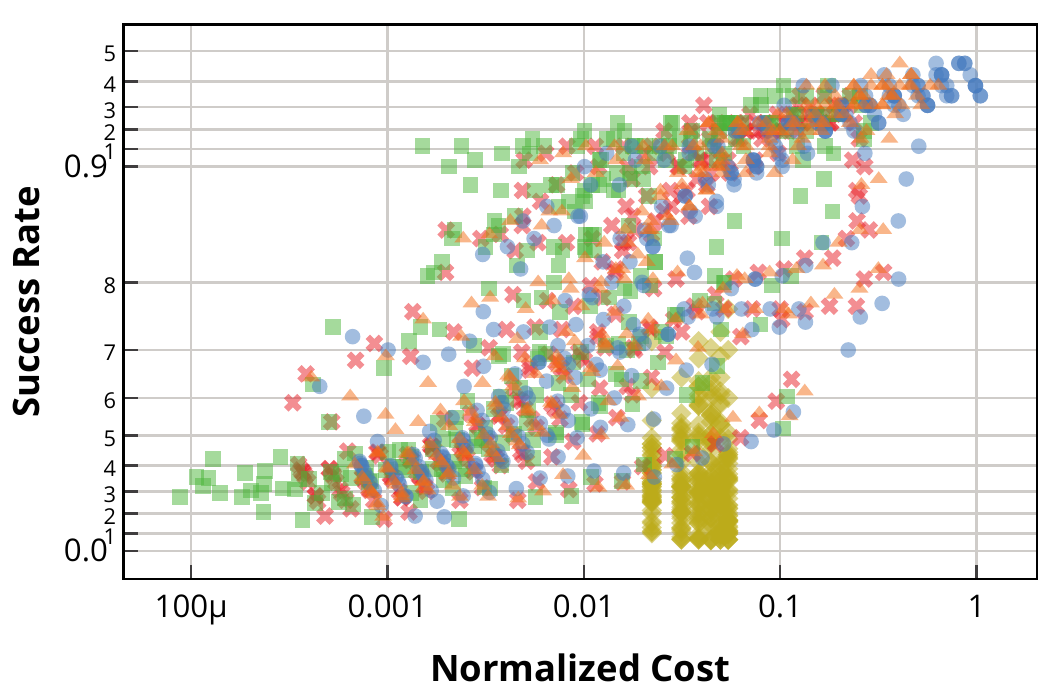}\label{fig:appendix-branin_Scatter_overall_log_log_rev-Scatter-success_rate-norm_cum_c_t}}
    \subfloat[\camelback]{\includegraphics[width=0.32\textwidth]{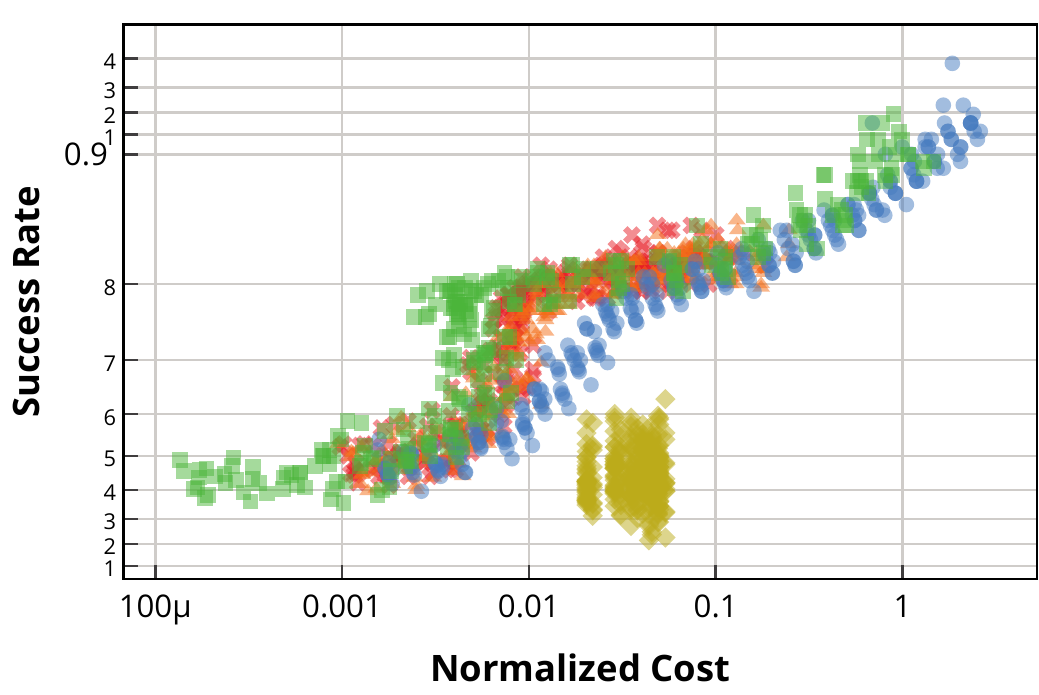}\label{fig:appendix-camelback_Scatter_overall_log_log_rev-Scatter-success_rate-norm_cum_c_t}}
    \subfloat[\hartmann]{\includegraphics[width=0.32\textwidth]{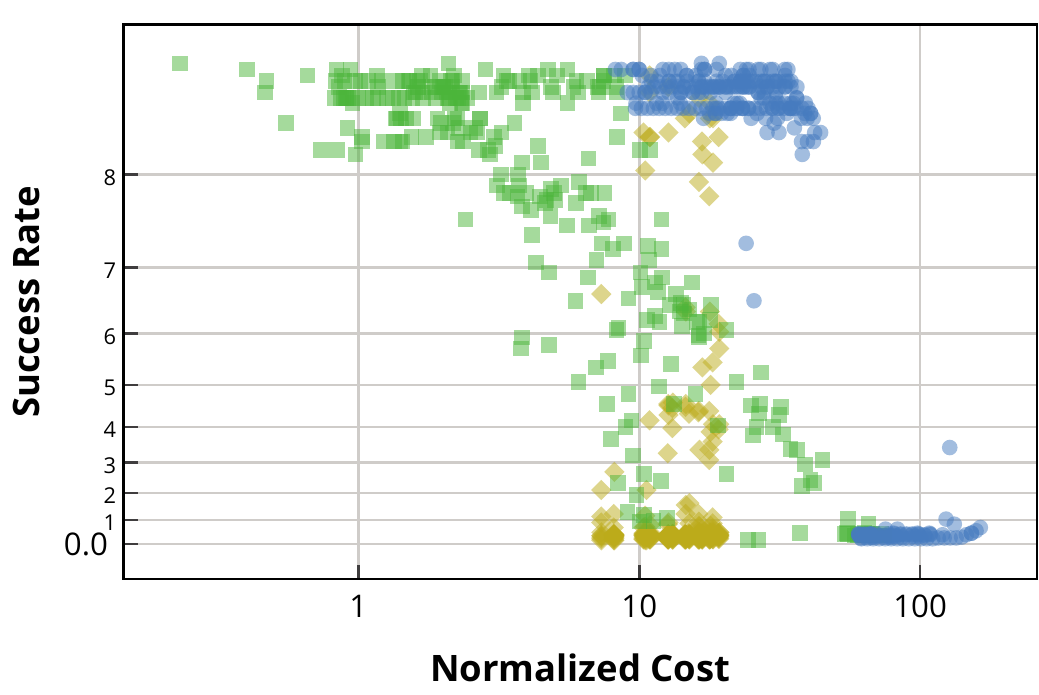}\label{fig:appendix-hartmann6_Scatter_overall_log_log_rev-Scatter-success_rate-norm_cum_c_t}}\\
    \subfloat[\robotsmall]{\includegraphics[width=0.32\textwidth]{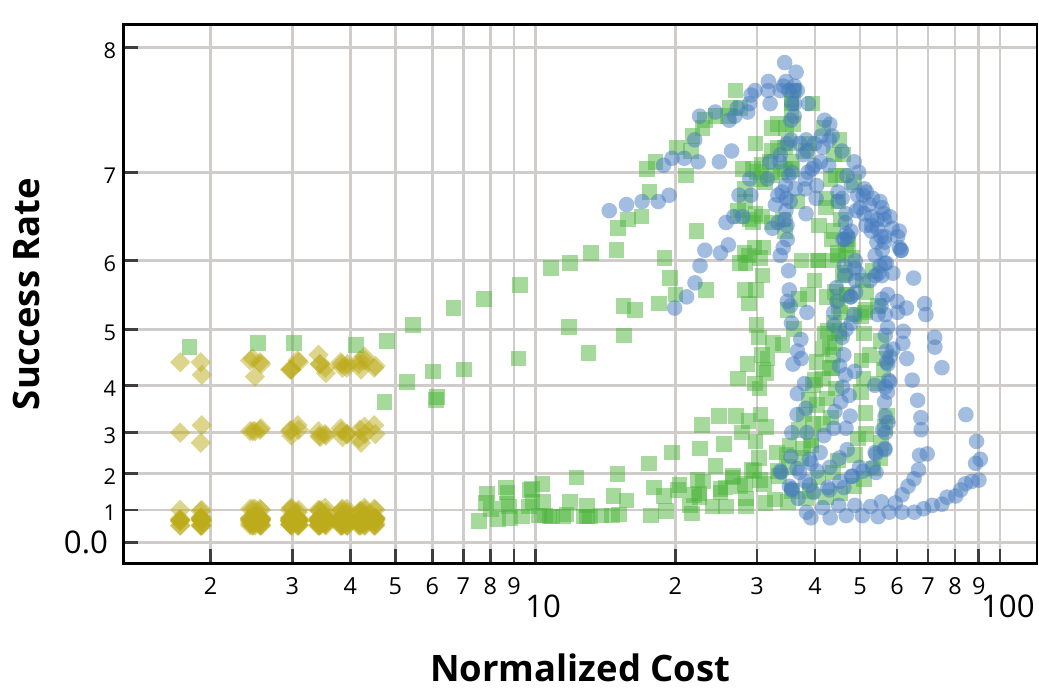}\label{fig:appendix-robot_pushing3d_Scatter_overall_log_log_rev-Scatter-success_rate-norm_cum_c_t}}
    \subfloat[\robotlarge]{\includegraphics[width=0.32\textwidth]{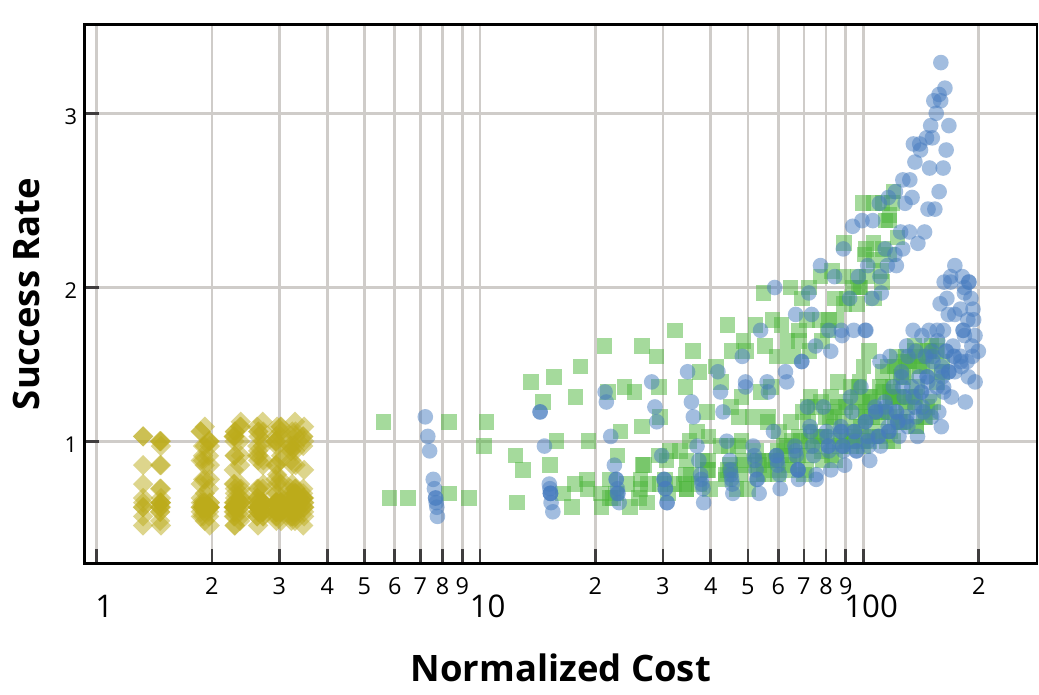}\label{fig:appendix-robot_pushing4d_Scatter_overall_log_log_rev-Scatter-success_rate-norm_cum_c_t}}
    \caption{Success rate vs.~cost with every random seed shown individually, i.e., each point is a single run.}
    \label{fig:appendix-SuccessRate_Cost}
\end{figure*}

\begin{figure*}[!ht]
    \centering
    \includegraphics[height=0.25cm]{img___1d___Scatter___legend-Scatter-success_rate-norm_cum_c_t.pdf} \\
    \subfloat[\branin-\rbf]{\includegraphics[width=0.32\textwidth]{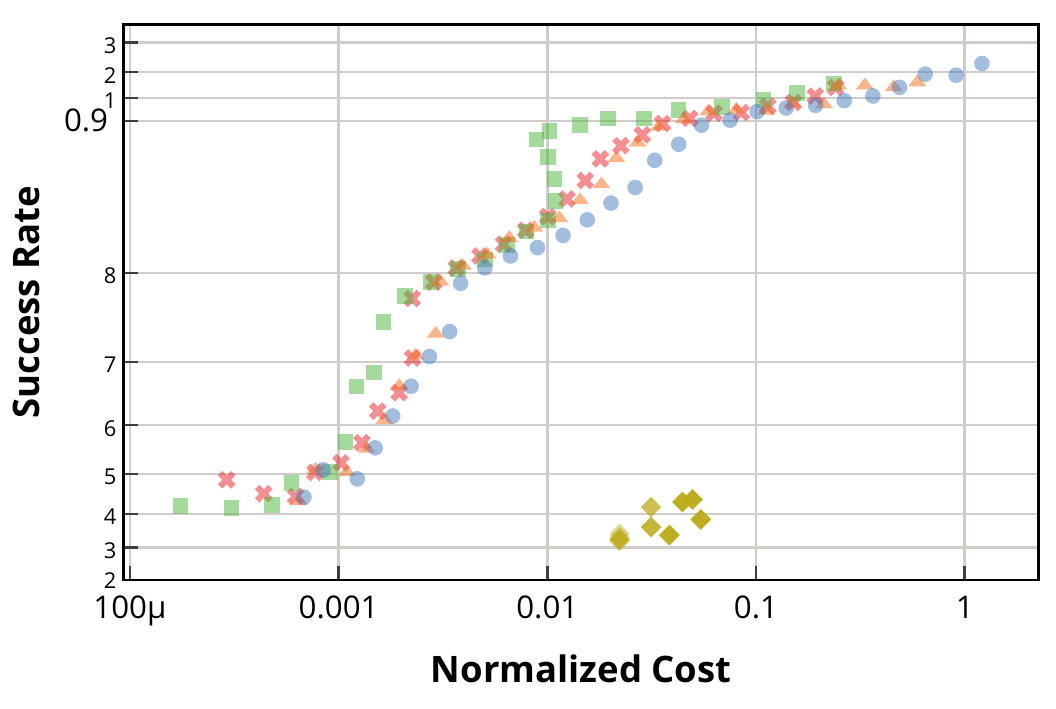}\label{fig:appendix-branin_rbf_Scatter_overall_avg_log_log_rev-Scatter-success_rate-norm_cum_c_t}}
    \subfloat[\branin-\maternx{5}{2}]{\includegraphics[width=0.32\textwidth]{img___branin___Scatter___overall_avg_log_log_rev-Scatter-success_rate-norm_cum_c_t.pdf}\label{fig:appendix-branin_Scatter_overall_avg_log_log_rev-Scatter-success_rate-norm_cum_c_t_1}}
    \subfloat[\branin-\maternx{3}{2}]{\includegraphics[width=0.32\textwidth]{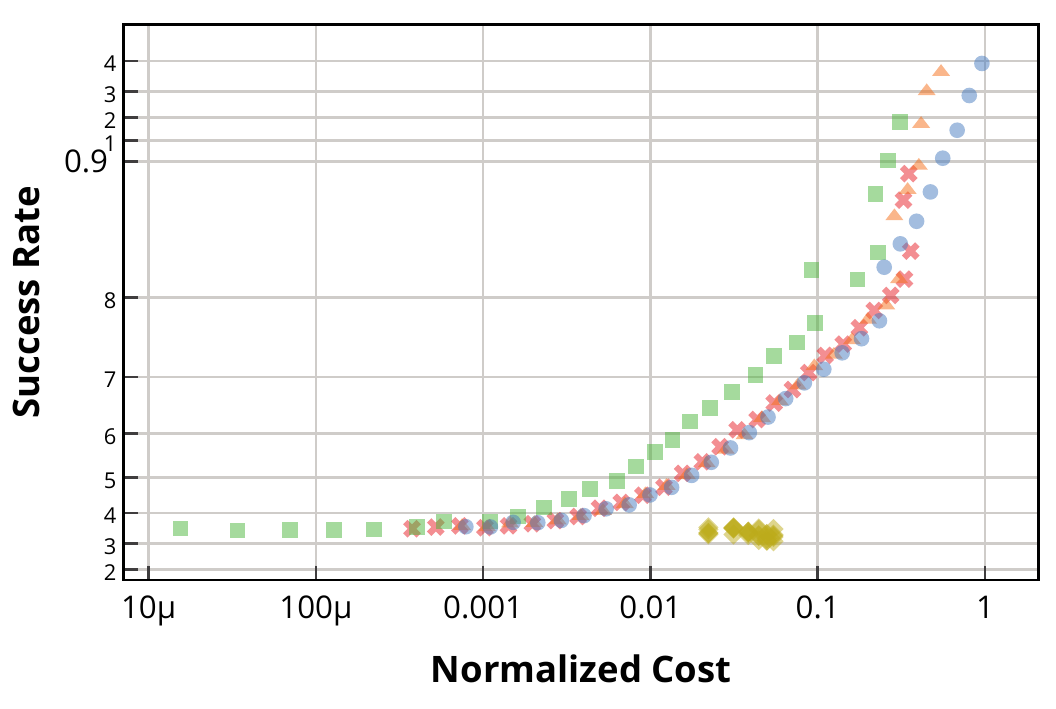}\label{fig:appendix-branin_matern32_Scatter_overall_avg_log_log_rev-Scatter-success_rate-norm_cum_c_t}}\\
    \subfloat[\camelback-\rbf]{\includegraphics[width=0.32\textwidth]{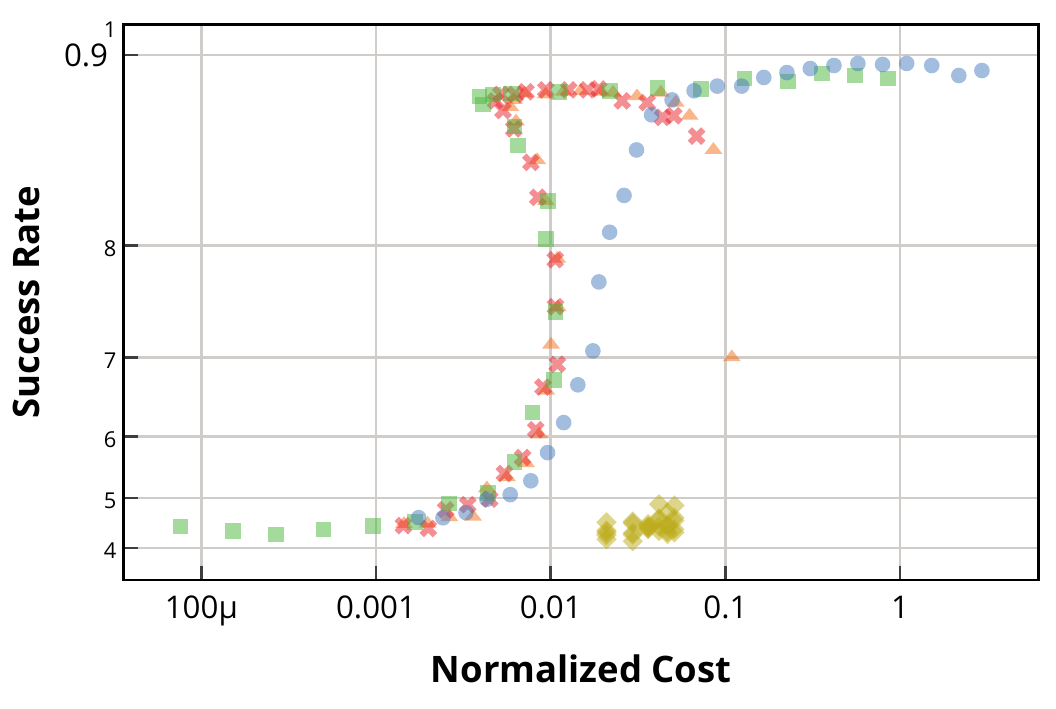}\label{fig:appendix-camelback_rbf_Scatter_overall_avg_log_log_rev-Scatter-success_rate-norm_cum_c_t}}
    \subfloat[\camelback-\maternx{5}{2}]{\includegraphics[width=0.32\textwidth]{img___camelback___Scatter___overall_avg_log_log_rev-Scatter-success_rate-norm_cum_c_t.pdf}\label{fig:appendix-camelback_Scatter_overall_avg_log_log_rev-Scatter-success_rate-norm_cum_c_t_1}}
    \subfloat[\camelback-\maternx{3}{2}]{\includegraphics[width=0.32\textwidth]{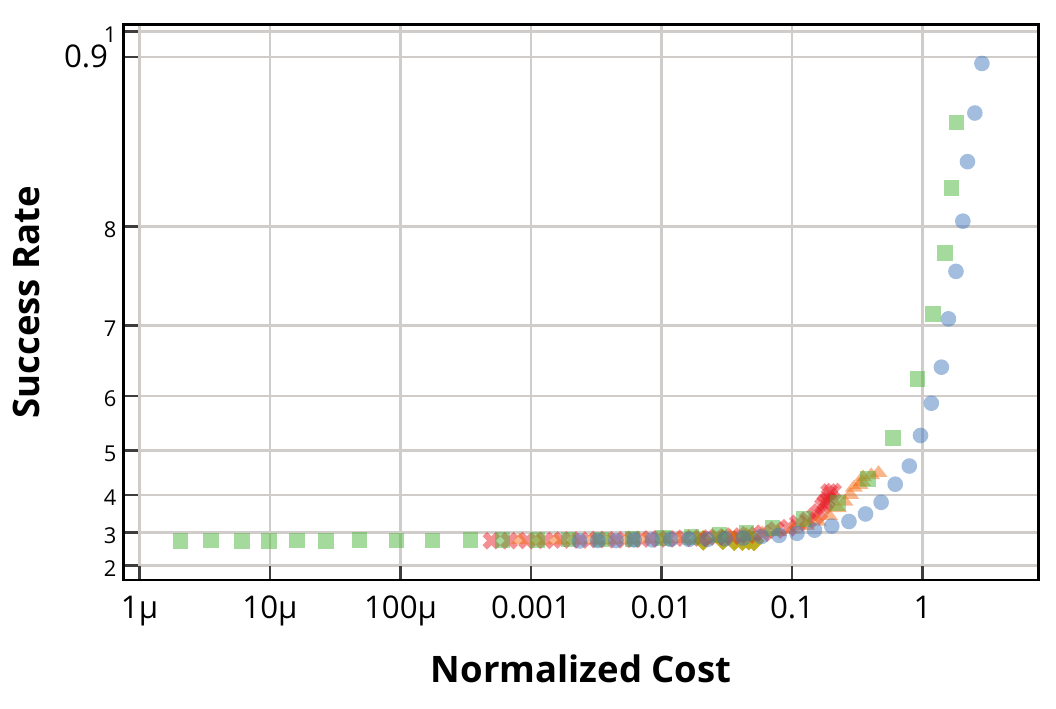}\label{fig:appendix-camelback_matern32_Scatter_overall_avg_log_log_rev-Scatter-success_rate-norm_cum_c_t}}\\
    
    \caption{Experiments comparing different kernels: Radial Basis Function (RBF), \maternx{5}{2}, and \maternx{3}{2}.}
    \label{fig:appendix-diffkernels}
\end{figure*}

\begin{figure*}[!ht]
    \centering
    \includegraphics[height=0.25cm]{img___1d___Scatter___legend-Scatter-success_rate-norm_cum_c_t.pdf} \\
    \subfloat[\varforrester]{\includegraphics[width=0.32\textwidth]{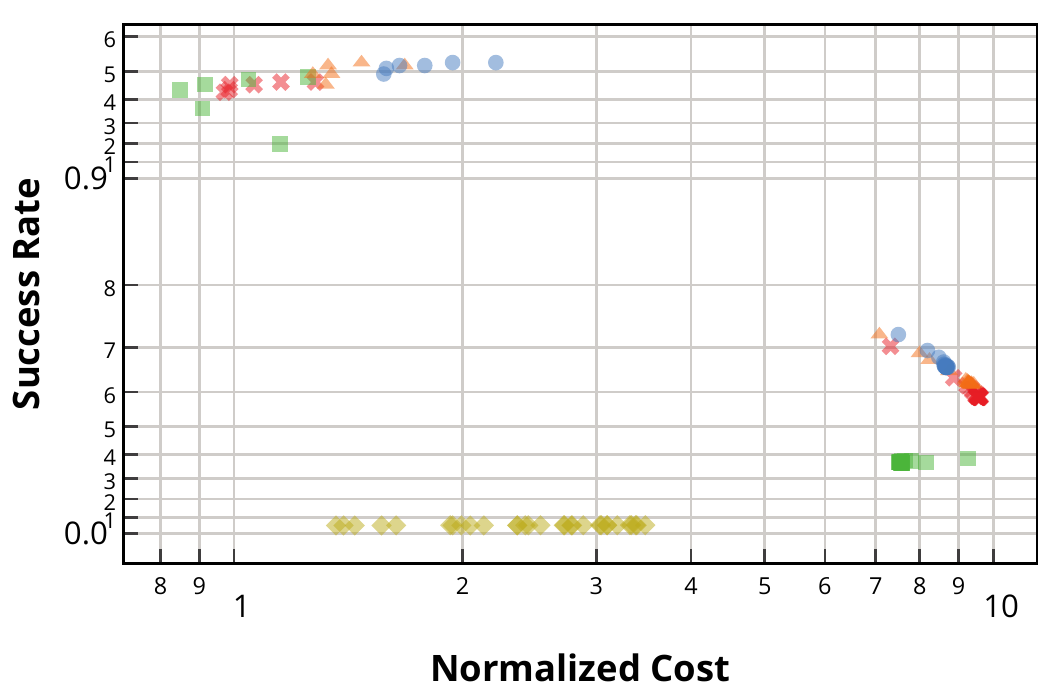}\label{fig:appendix-var_forrester_Scatter_overall_avg_log_log_rev-Scatter-success_rate-norm_cum_c_t}}
    \subfloat[\varcamelback]{\includegraphics[width=0.32\textwidth]{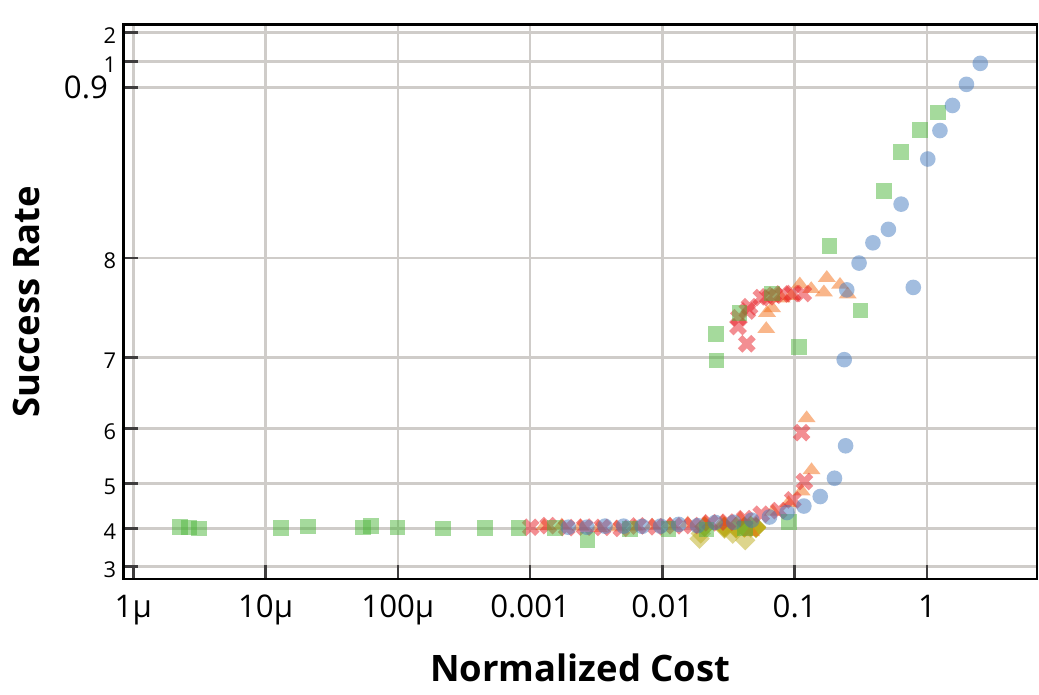}\label{fig:appendix-var_camelback_Scatter_overall_avg_log_log_rev-Scatter-success_rate-norm_cum_c_t}}
    \subfloat[\varrobotsmall]{\includegraphics[width=0.32\textwidth]{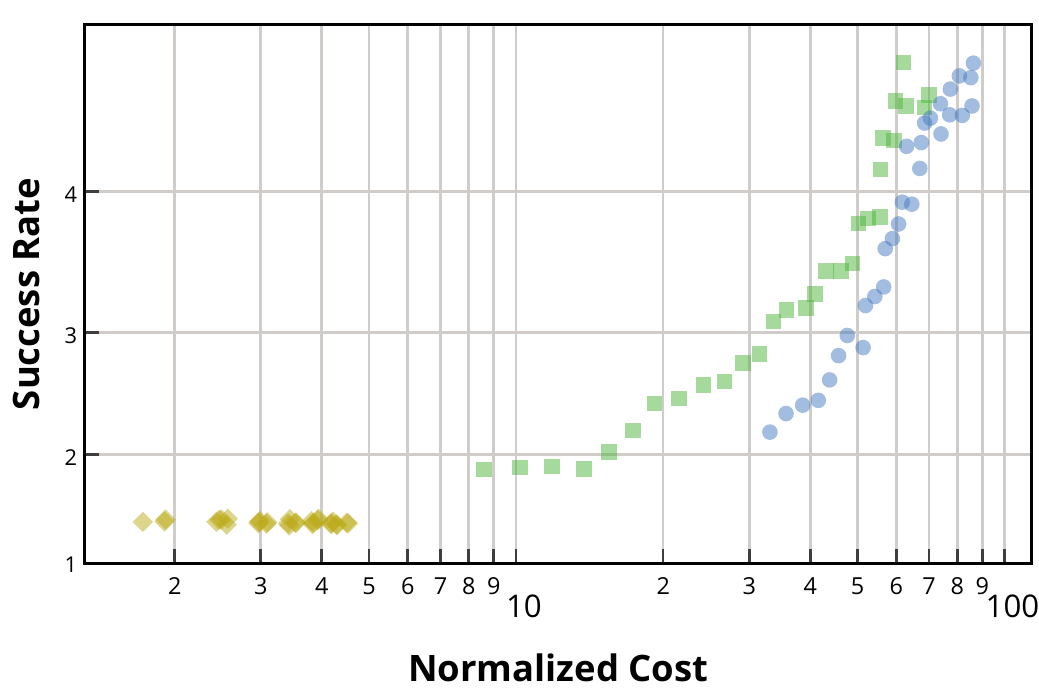}\label{fig:appendix-var_robot_pushing3d_Scatter_overall_avg_log_log_rev-Scatter-success_rate-norm_cum_c_t}}\\
    \subfloat[\varforrester]{\includegraphics[width=0.32\textwidth]{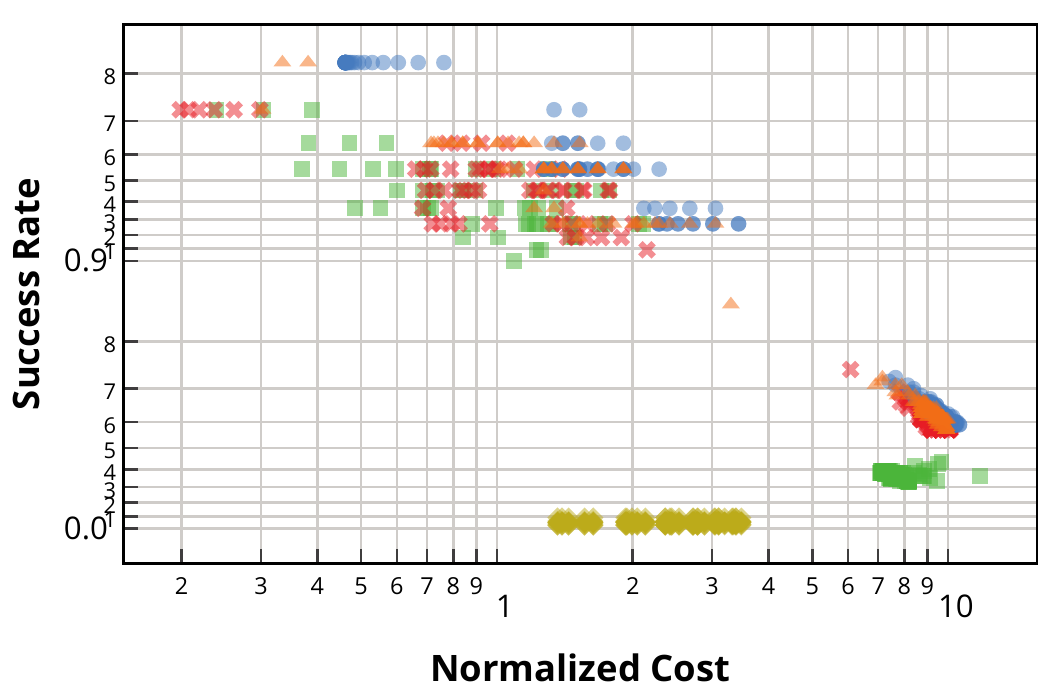}\label{fig:appendix-var_forrester_Scatter_overall_log_log_rev-Scatter-success_rate-norm_cum_c_t}}
    \subfloat[\varcamelback]{\includegraphics[width=0.32\textwidth]{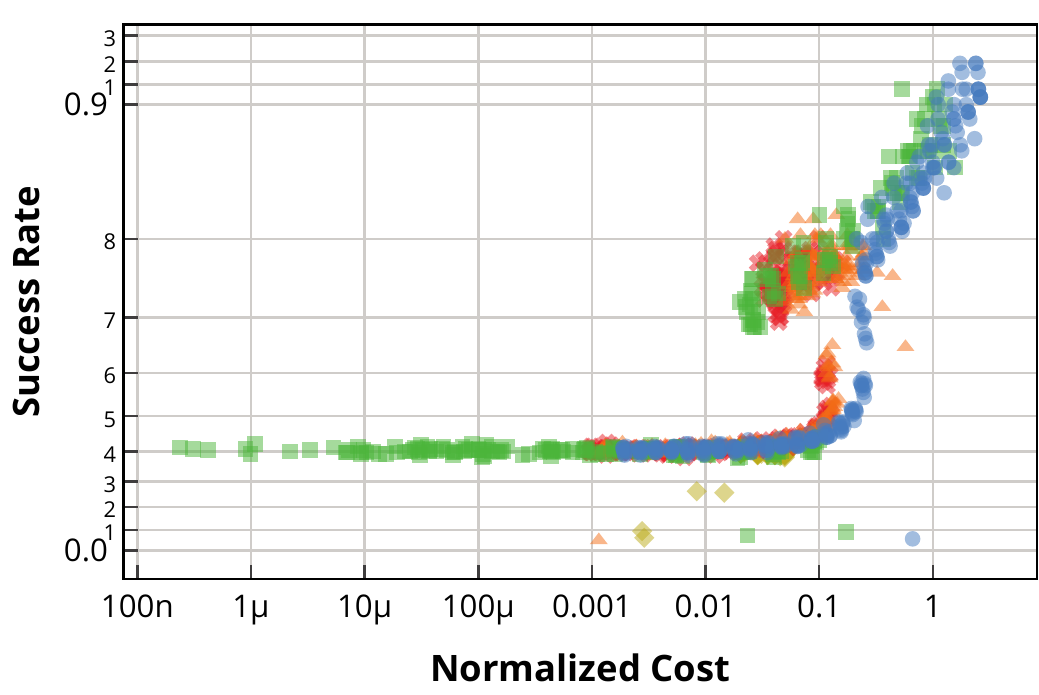}\label{fig:appendix-var_camelback_Scatter_overall_log_log_rev-Scatter-success_rate-norm_cum_c_t}}
    \subfloat[\varrobotsmall]{\includegraphics[width=0.32\textwidth]{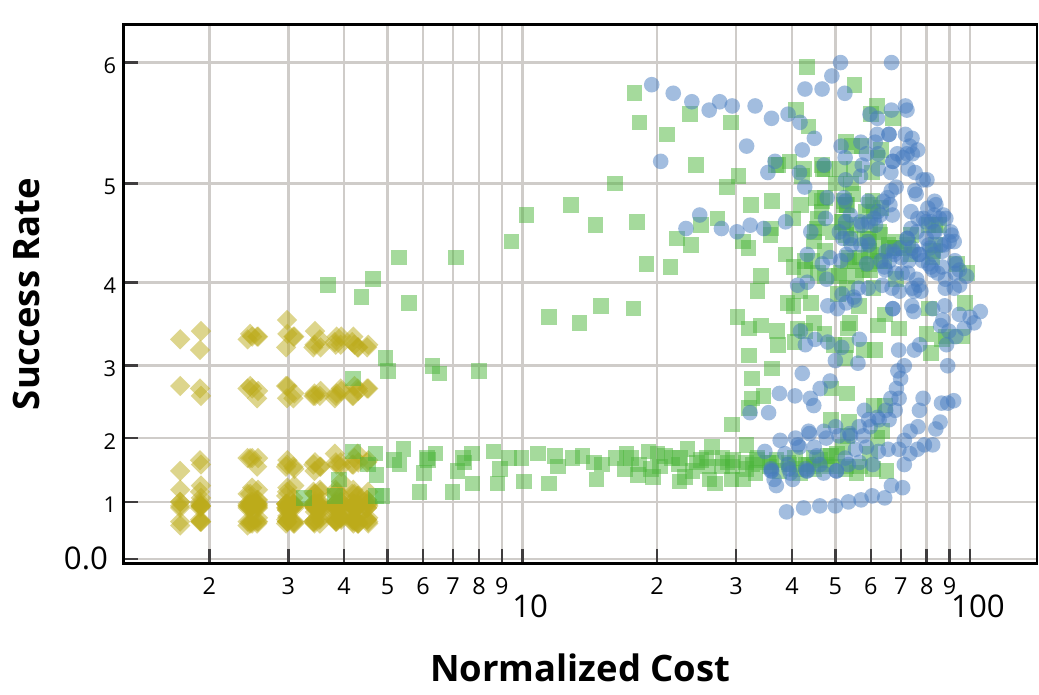}\label{fig:appendix-var_robot_pushing3d_Scatter_overall_log_log_rev-Scatter-success_rate-norm_cum_c_t}}\\
    \caption{We attack the MaxVar + Elimination algorithm in $3$ experiments.}
    \label{fig:appendix-maxvar}
\end{figure*}

\begin{figure*}[!ht]
    \centering
    \includegraphics[height=0.25cm]{img___1d___Scatter___legend-Scatter-success_rate-norm_cum_c_t.pdf} \\
    \subfloat[\synthetic]{\includegraphics[width=0.32\textwidth]{img___1d___Scatter___overall_avg_log_log_rev-Scatter-success_rate-norm_cum_c_t.pdf}\label{fig:appendix-1d_Scatter_overall_avg_log_log_rev-Scatter-success_rate-norm_cum_c_t_1}}
    \subfloat[\forrester]{\includegraphics[width=0.32\textwidth]{img___forrester___Scatter___overall_avg_log_log_rev-Scatter-success_rate-norm_cum_c_t.pdf}\label{fig:appendix-forrester_Scatter_overall_avg_log_log_rev-Scatter-success_rate-norm_cum_c_t_1}}
    \subfloat[\bohachevsky]{\includegraphics[width=0.32\textwidth]{img___bohachevsky___Scatter___overall_avg_log_log_rev-Scatter-success_rate-norm_cum_c_t.pdf}\label{fig:appendix-bohachevsky_Scatter_overall_avg_log_log_rev-Scatter-success_rate-norm_cum_c_t_1}}\\
    \subfloat[\npsynthetic]{\includegraphics[width=0.32\textwidth]{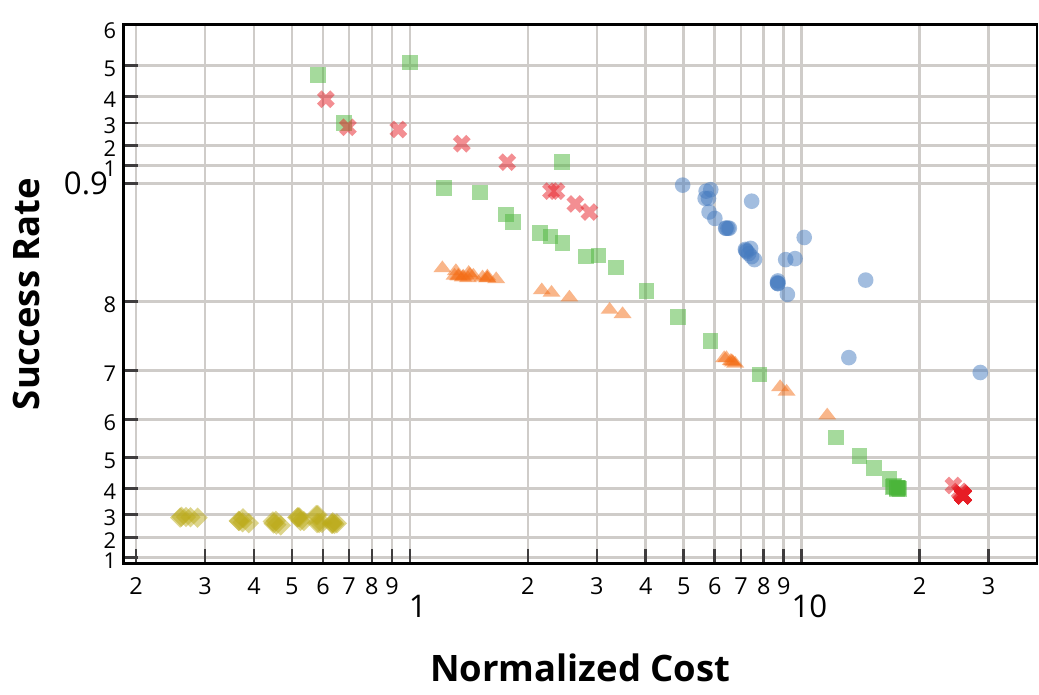}\label{fig:appendix-np_1d_Scatter_overall_avg_log_log_rev-Scatter-success_rate-norm_cum_c_t_1}}
    \subfloat[\npforrester]{\includegraphics[width=0.32\textwidth]{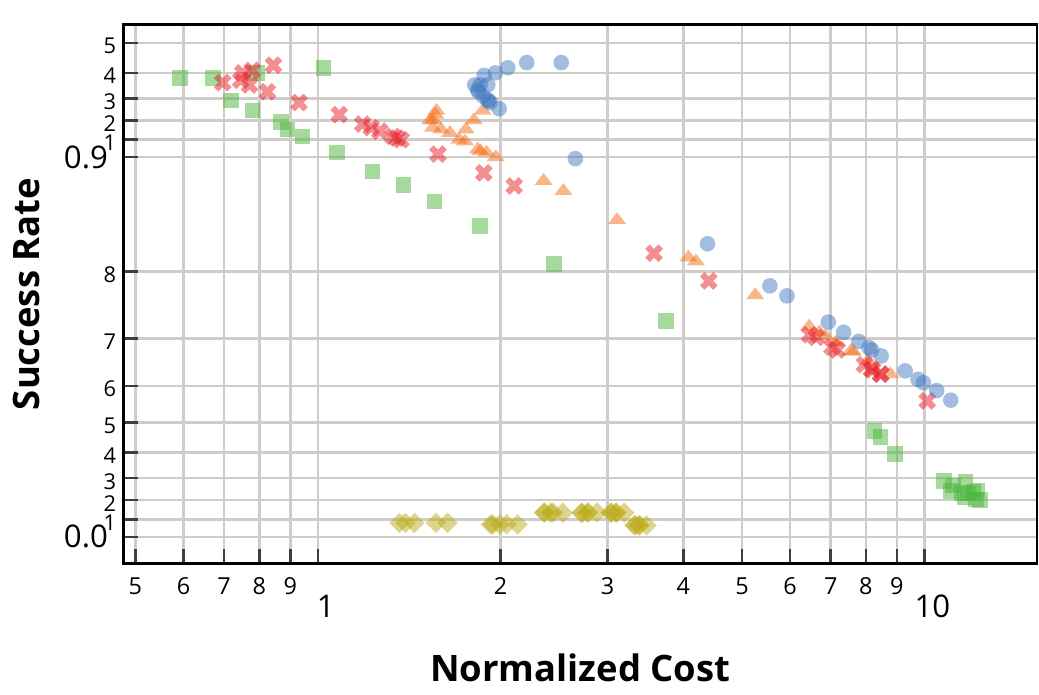}\label{fig:appendix-np_forrester_Scatter_overall_avg_log_log_rev-Scatter-success_rate-norm_cum_c_t_1}}
    \subfloat[\npbohachevsky]{\includegraphics[width=0.32\textwidth]{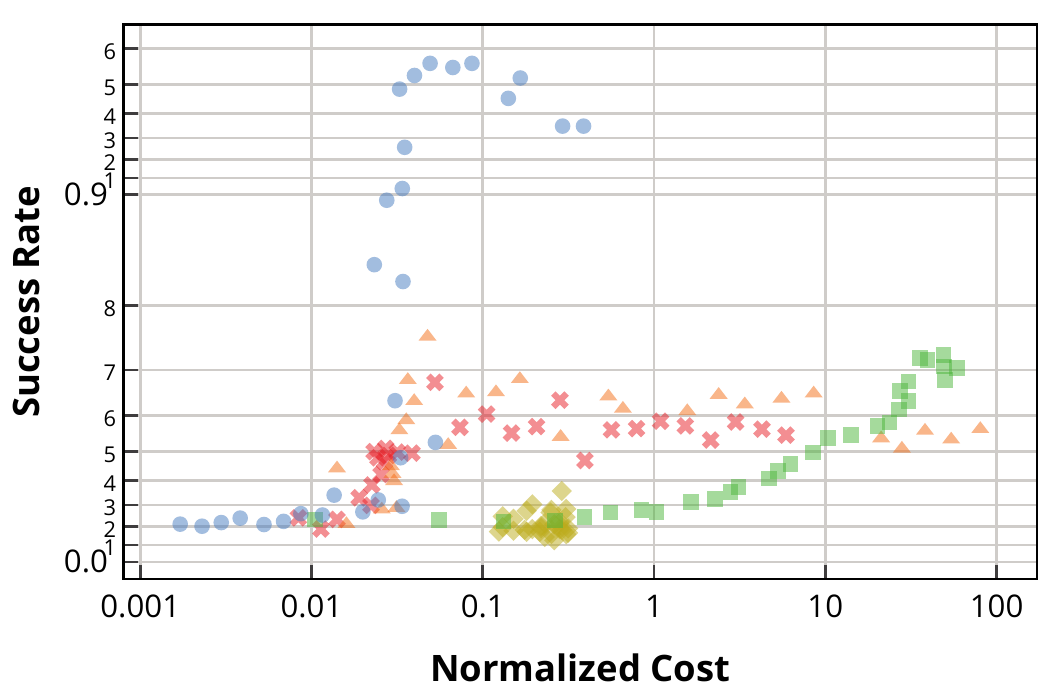}\label{fig:appendix-np_bohachevsky_Scatter_overall_avg_log_log_rev-Scatter-success_rate-norm_cum_c_t_1}}\\

    \subfloat[\synthetic]{\includegraphics[width=0.32\textwidth]{img___1d___Scatter___overall_log_log_rev-Scatter-success_rate-norm_cum_c_t.pdf}\label{fig:appendix-1d_Scatter_overall_log_log_rev-Scatter-success_rate-norm_cum_c_t_1}}
    \subfloat[\forrester]{\includegraphics[width=0.32\textwidth]{img___forrester___Scatter___overall_log_log_rev-Scatter-success_rate-norm_cum_c_t.pdf}\label{fig:appendix-forrester_Scatter_overall_log_log_rev-Scatter-success_rate-norm_cum_c_t_1}}
    \subfloat[\bohachevsky]{\includegraphics[width=0.32\textwidth]{img___bohachevsky___Scatter___overall_log_log_rev-Scatter-success_rate-norm_cum_c_t.pdf}\label{fig:appendix-bohachevsky_Scatter_overall_log_log_rev-Scatter-success_rate-norm_cum_c_t_1}}\\
    \subfloat[\npsynthetic]{\includegraphics[width=0.32\textwidth]{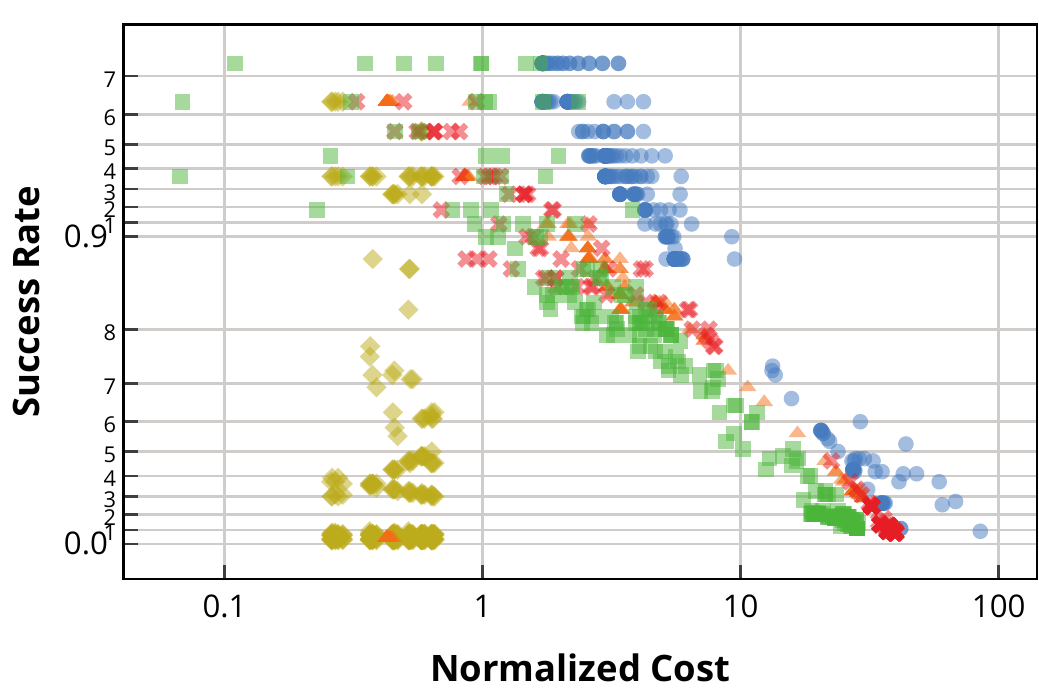}\label{fig:appendix-np_1d_Scatter_overall_log_log_rev-Scatter-success_rate-norm_cum_c_t}}
    \subfloat[\npforrester]{\includegraphics[width=0.32\textwidth]{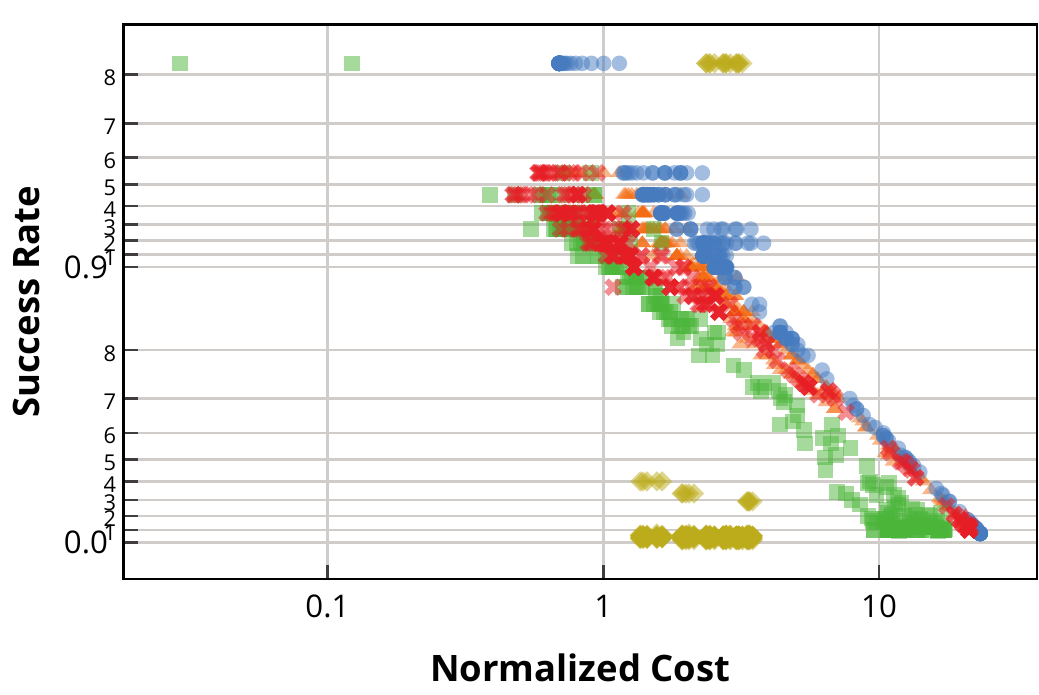}\label{fig:appendix-np_forrester_Scatter_overall_log_log_rev-Scatter-success_rate-norm_cum_c_t}}
    \subfloat[\npbohachevsky]{\includegraphics[width=0.32\textwidth]{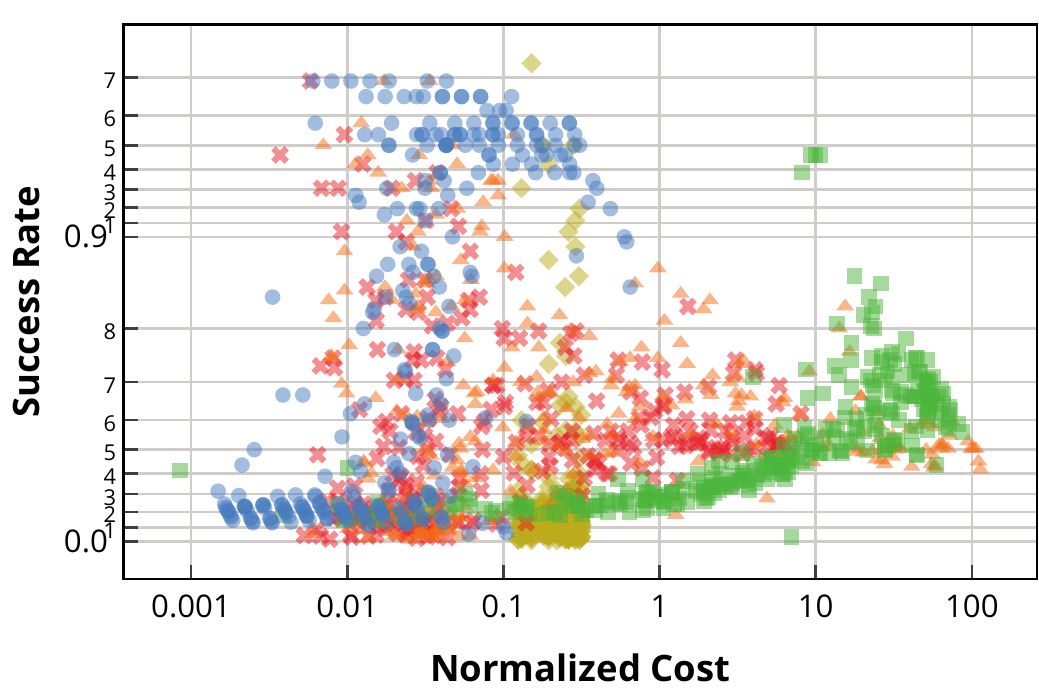}\label{fig:appendix-np_bohachevsky_Scatter_overall_log_log_rev-Scatter-success_rate-norm_cum_c_t}}\\    
    
    \caption{Experiments comparing the kernel parameters being learned online (second and forth rows) vs.~learned from data sampled from $f$ prior to the optimization (first and third rows).  The top two rows average over random seeds, and the bottom two rows show every individual run.}
    \label{fig:appendix-noprefit}
\end{figure*}

\begin{figure*}[!ht]
    \centering
    \includegraphics[height=0.25cm]{img___1d___Scatter___legend-Scatter-success_rate-norm_cum_c_t.pdf} \\
    \subfloat[\levydef{0.5}{0.01}]{\includegraphics[width=0.32\textwidth]{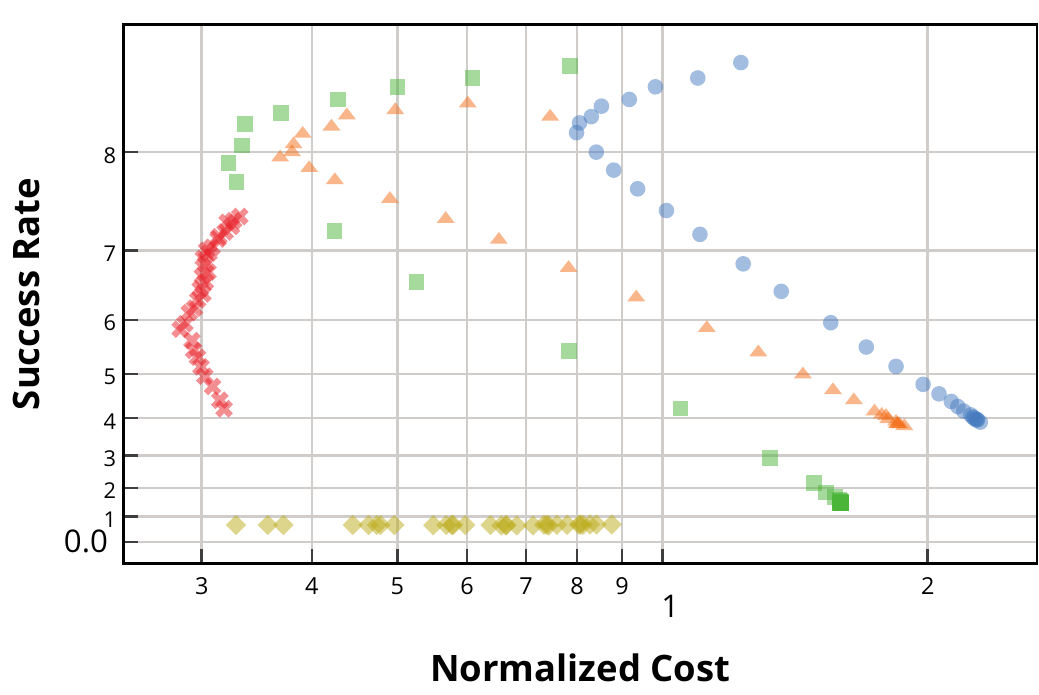}\label{fig:appendix-levy_def_C0_5_S0_01_Scatter_overall_avg_log_log_rev-Scatter-success_rate-norm_cum_c_t}}
    \subfloat[\levydef{0.5}{0.1}]{\includegraphics[width=0.32\textwidth]{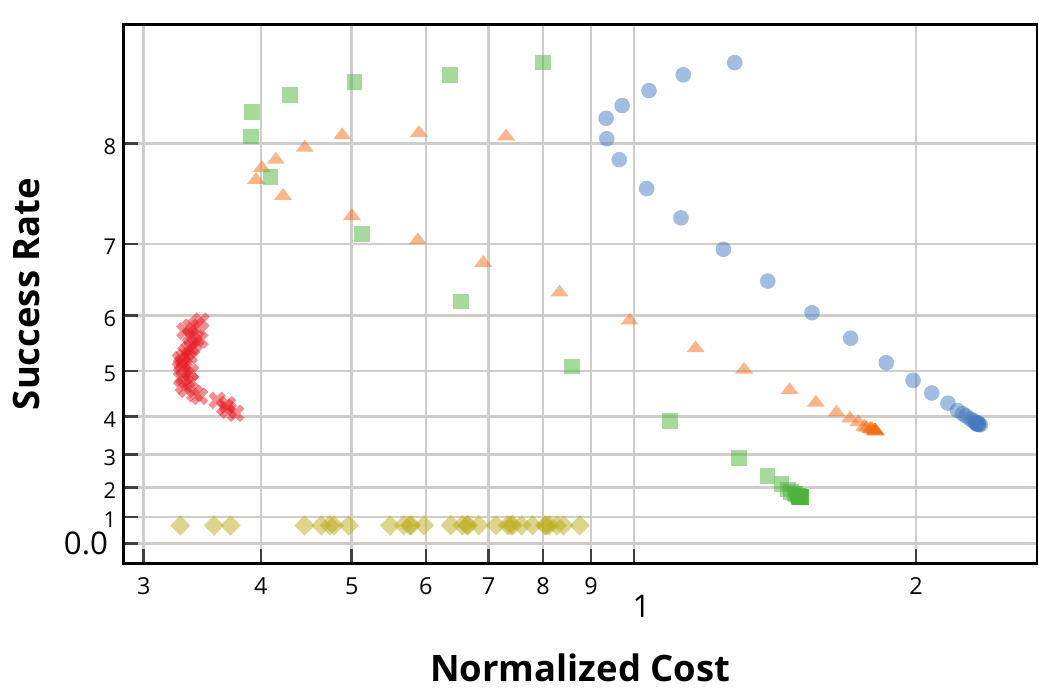}\label{fig:appendix-levy_def_C0_5_S0_1_Scatter_overall_avg_log_log_rev-Scatter-success_rate-norm_cum_c_t}}
    \subfloat[\levydef{0.5}{1}]{\includegraphics[width=0.32\textwidth]{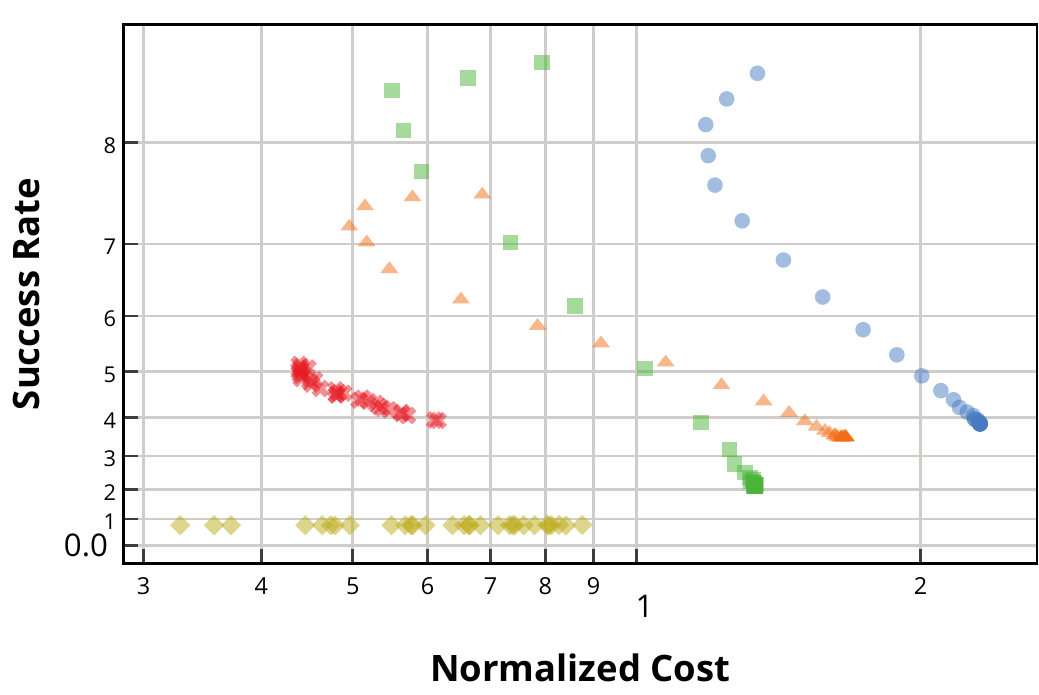}\label{fig:appendix-levy_def_C0_5_S1_Scatter_overall_avg_log_log_rev-Scatter-success_rate-norm_cum_c_t}}\\
    \subfloat[\levydef{2}{0.01}]{\includegraphics[width=0.32\textwidth]{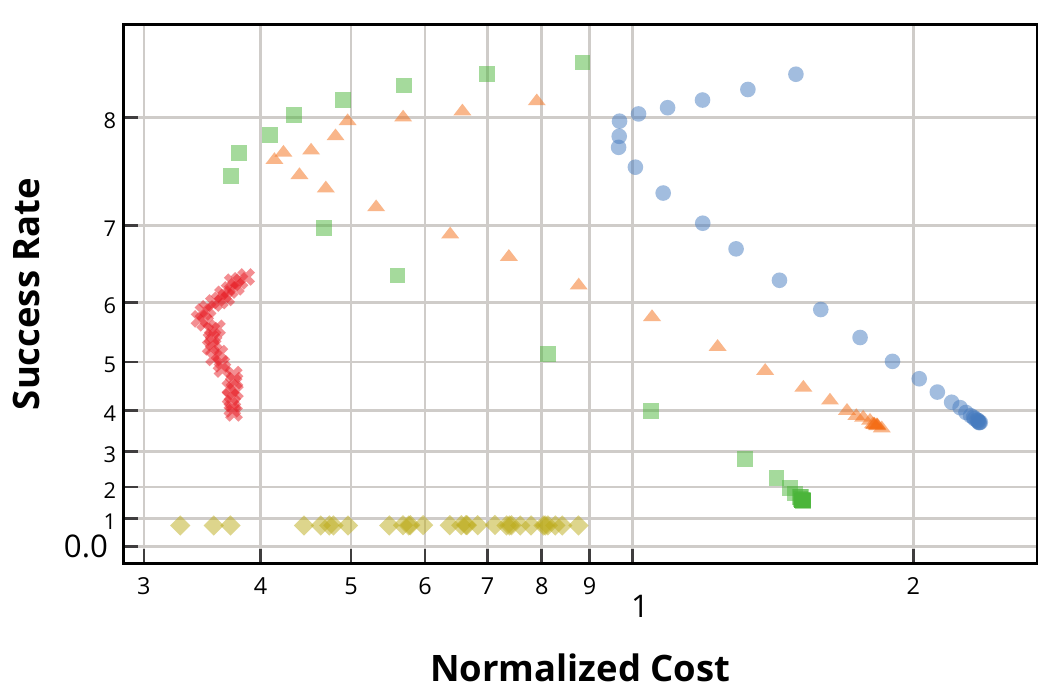}\label{fig:appendix-levy_def_C2_S0_01_Scatter_overall_avg_log_log_rev-Scatter-success_rate-norm_cum_c_t}}
    \subfloat[\levydef{2}{0.1}]{\includegraphics[width=0.32\textwidth]{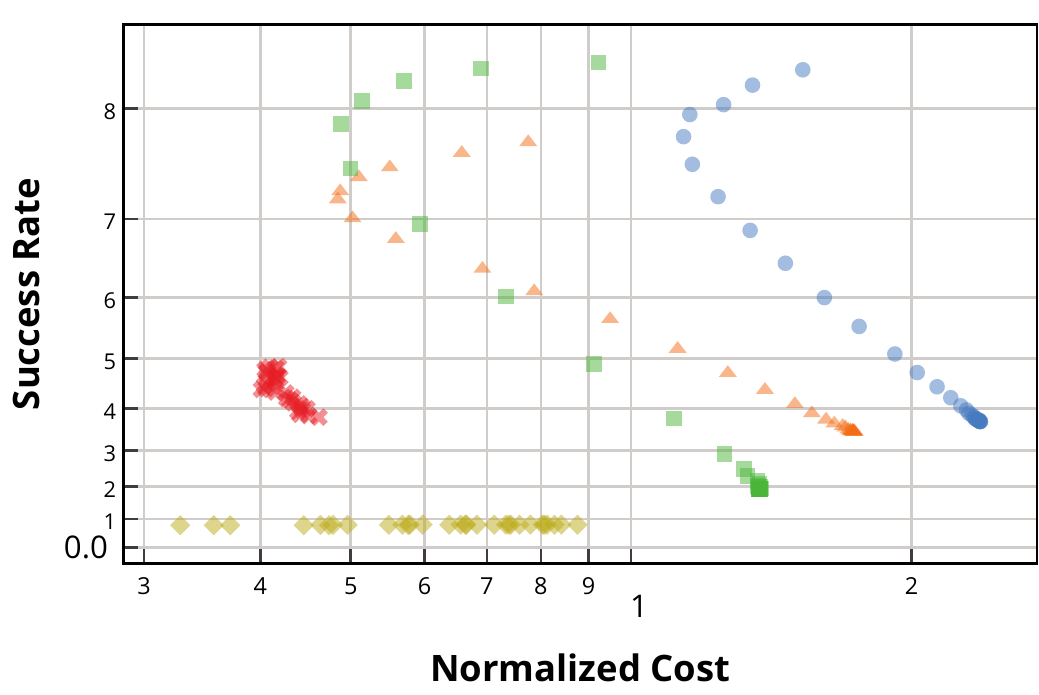}\label{fig:appendix-levy_def_C2_S0_1_Scatter_overall_avg_log_log_rev-Scatter-success_rate-norm_cum_c_t}}
    \subfloat[\levydef{2}{1}]{\includegraphics[width=0.32\textwidth]{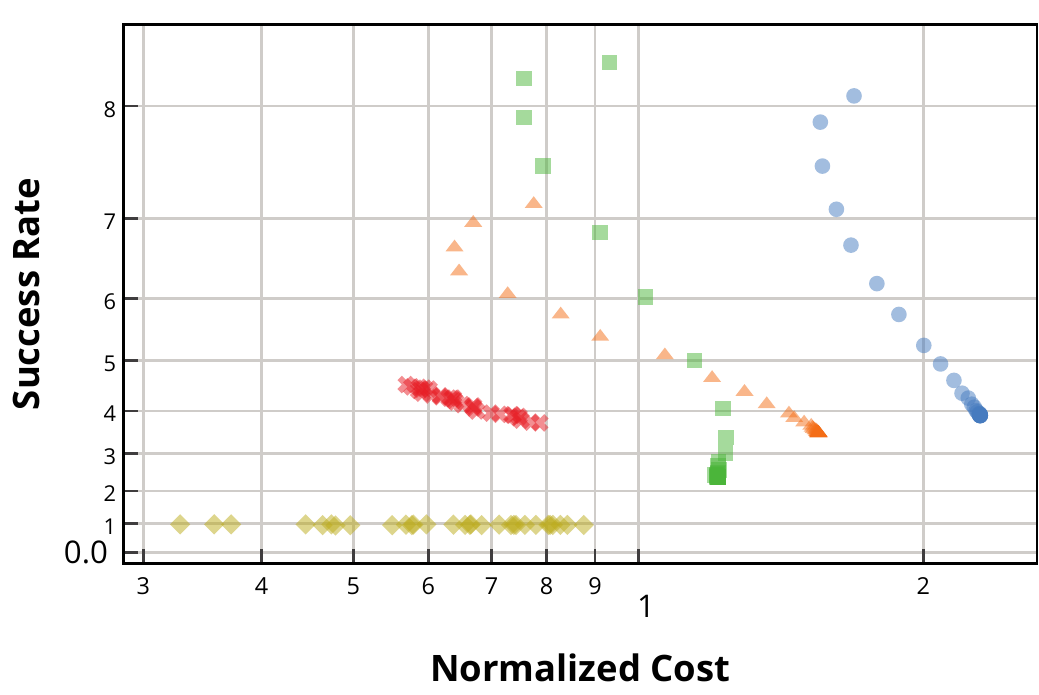}\label{fig:appendix-levy_def_C2_S1_Scatter_overall_avg_log_log_rev-Scatter-success_rate-norm_cum_c_t}}\\
    \subfloat[\levydef{8}{0.01}]{\includegraphics[width=0.32\textwidth]{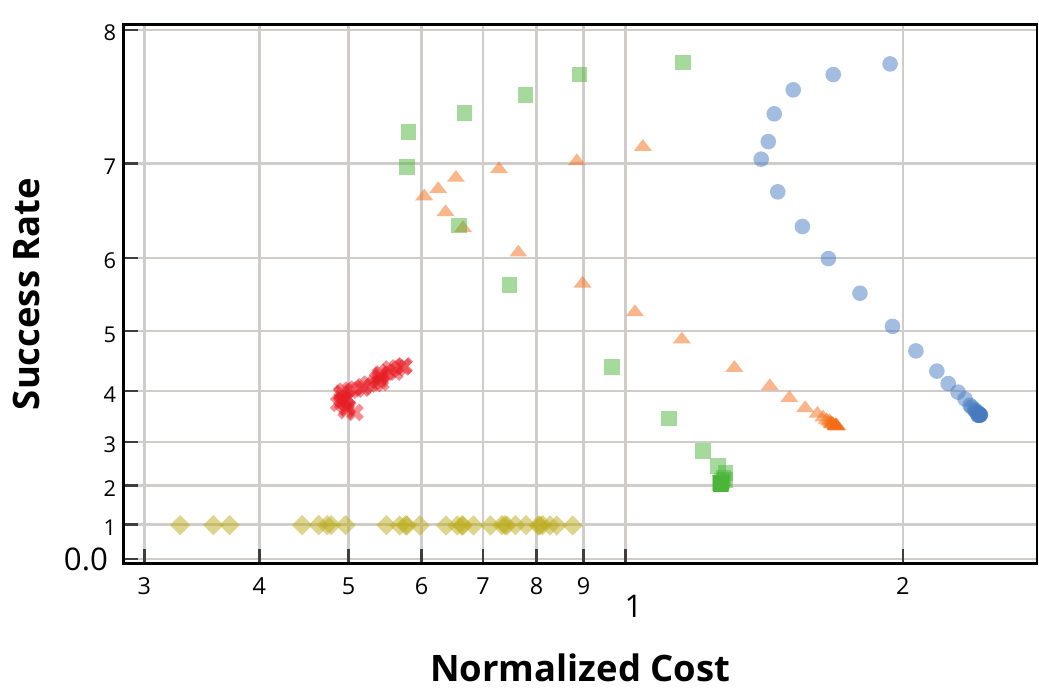}\label{fig:appendix-levy_def_C8_S0_01_Scatter_overall_avg_log_log_rev-Scatter-success_rate-norm_cum_c_t}}
    \subfloat[\levydef{8}{0.1}]{\includegraphics[width=0.32\textwidth]{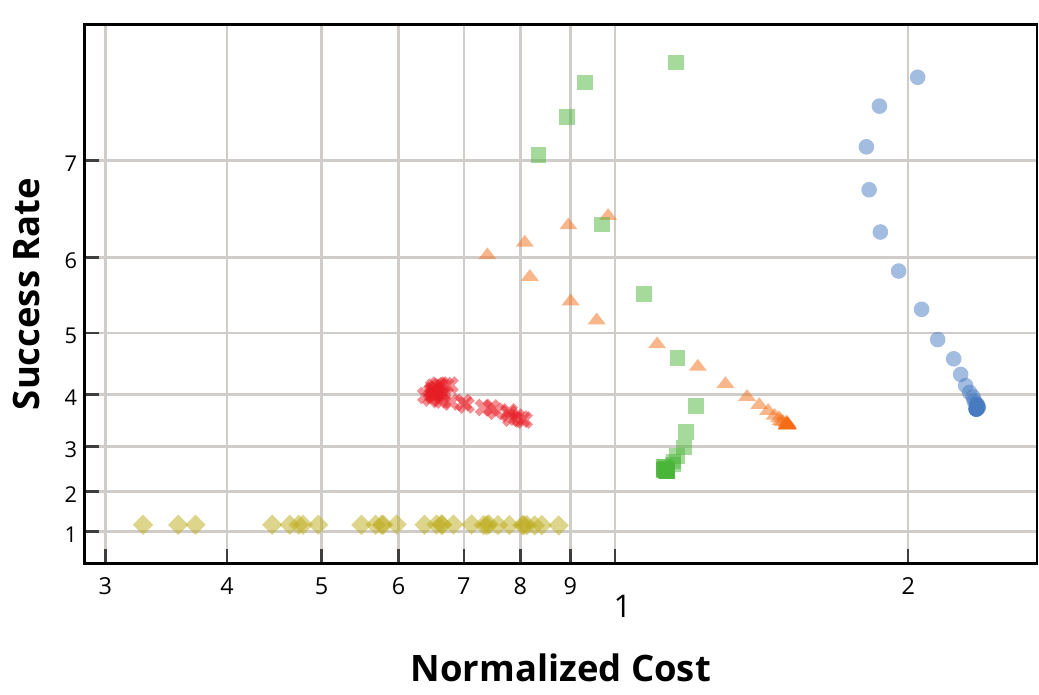}\label{fig:appendix-levy_def_C8_S0_1_Scatter_overall_avg_log_log_rev-Scatter-success_rate-norm_cum_c_t}}
    \subfloat[\levydef{8}{1}]{\includegraphics[width=0.32\textwidth]{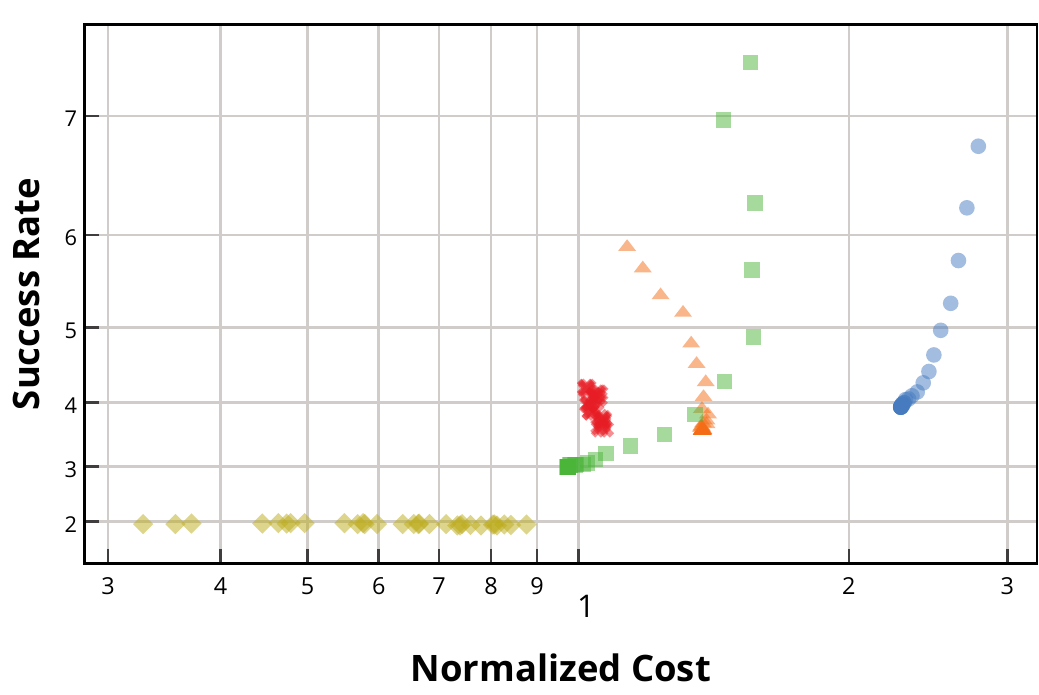}\label{fig:appendix-levy_def_C8_S1_Scatter_overall_avg_log_log_rev-Scatter-success_rate-norm_cum_c_t}}\\
    \caption{Variants of \gpucb which add a constant $C$ to the confidence width in order to defend against the attack.}
    \label{fig:appendix-defend}
\end{figure*}

\begin{figure*}[!ht]
    \centering
    \includegraphics[height=0.25cm]{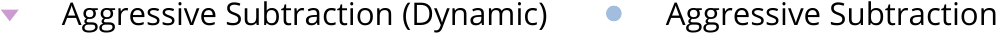} \\
    \subfloat[\synthetic-\dynamic]{\includegraphics[width=0.32\textwidth]{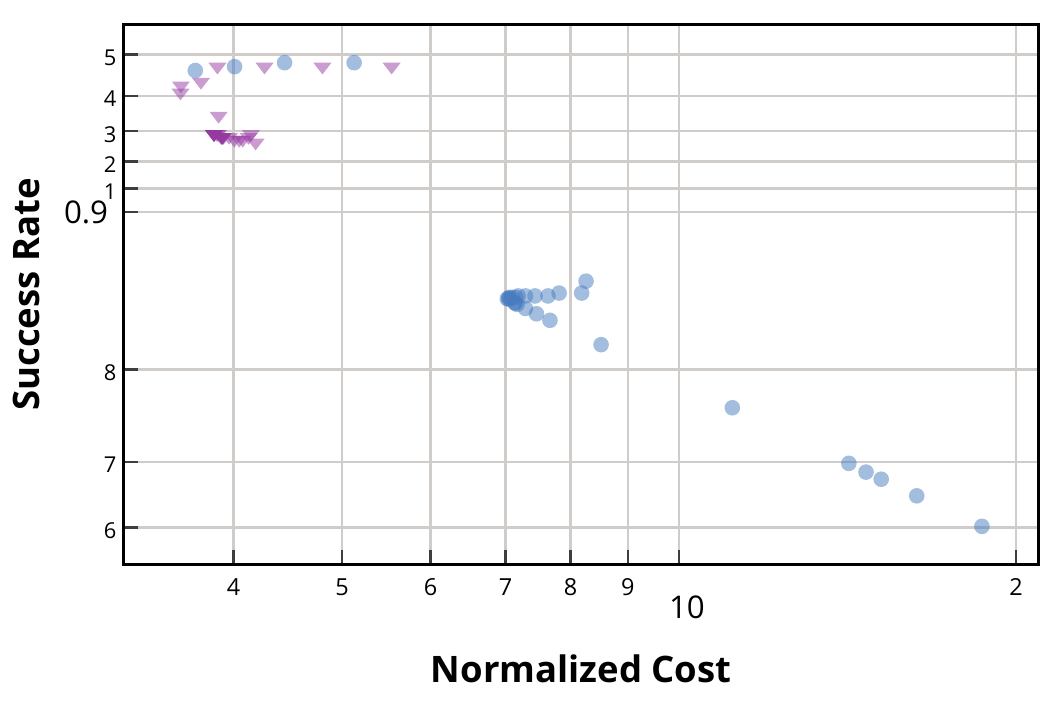}\label{fig:appendix-1d_dynamic_Scatter_dynamic_overall_avg_log_log_rev-Scatter-success_rate-norm_cum_c_t}}
    \subfloat[\forrester-\dynamic]{\includegraphics[width=0.32\textwidth]{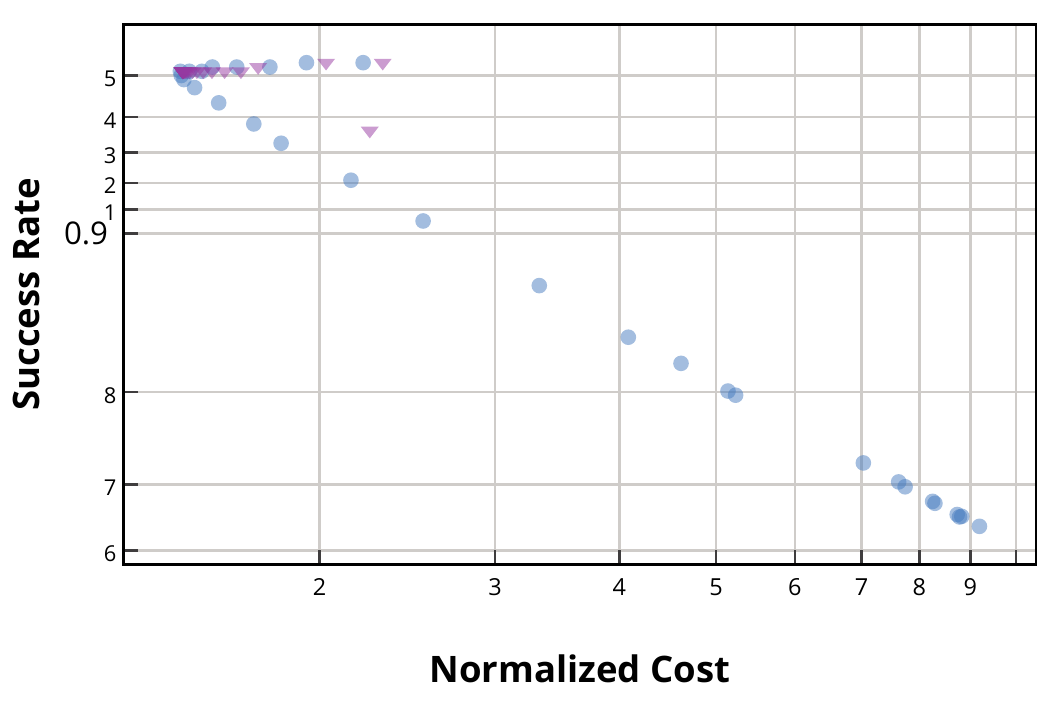}\label{fig:appendix-forrester_dynamic_Scatter_dynamic_overall_avg_log_log_rev-Scatter-success_rate-norm_cum_c_t}}
    \subfloat[\levy-\dynamic]{\includegraphics[width=0.32\textwidth]{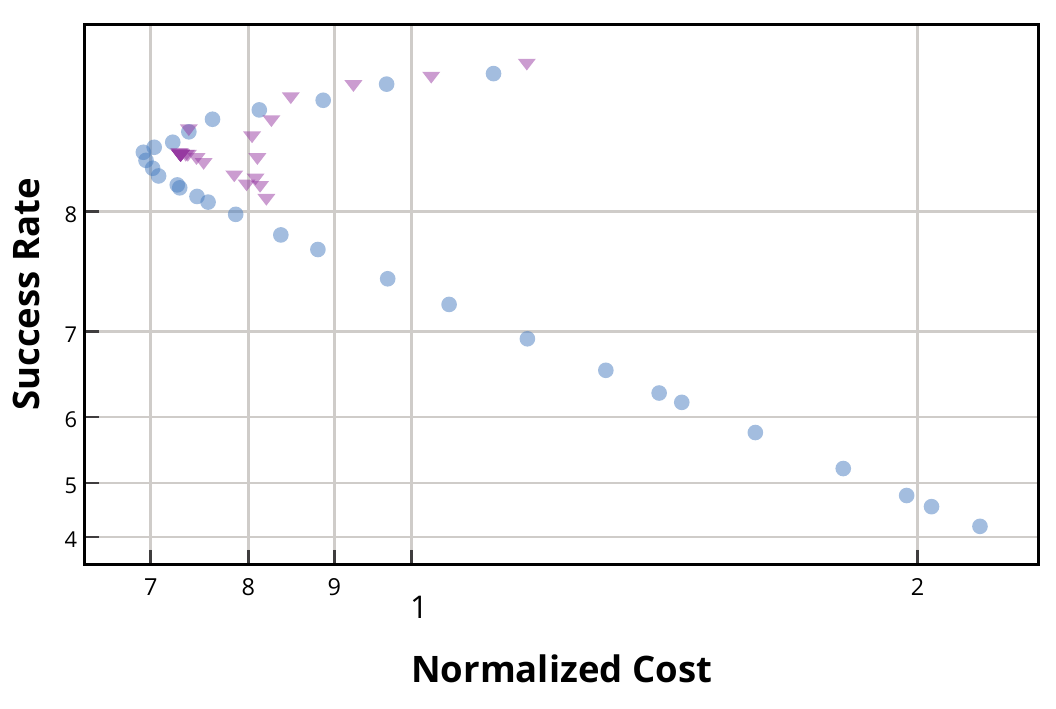}\label{fig:appendix-levy_dynamic_Scatter_dynamic_overall_avg_log_log_rev-Scatter-success_rate-norm_cum_c_t}}\\
    \subfloat[\synthetic-\dynamic]{\includegraphics[width=0.32\textwidth]{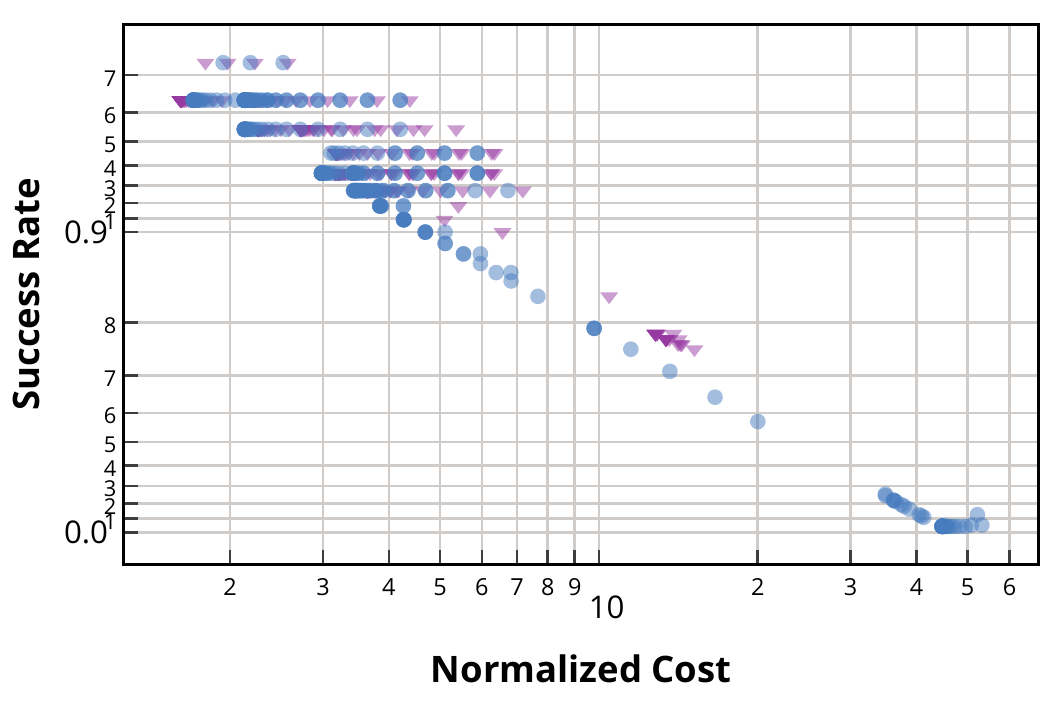}\label{fig:appendix-1d_dynamic_Scatter_dynamic_overall_log_log_rev-Scatter-success_rate-norm_cum_c_t}}
    \subfloat[\forrester-\dynamic]{\includegraphics[width=0.32\textwidth]{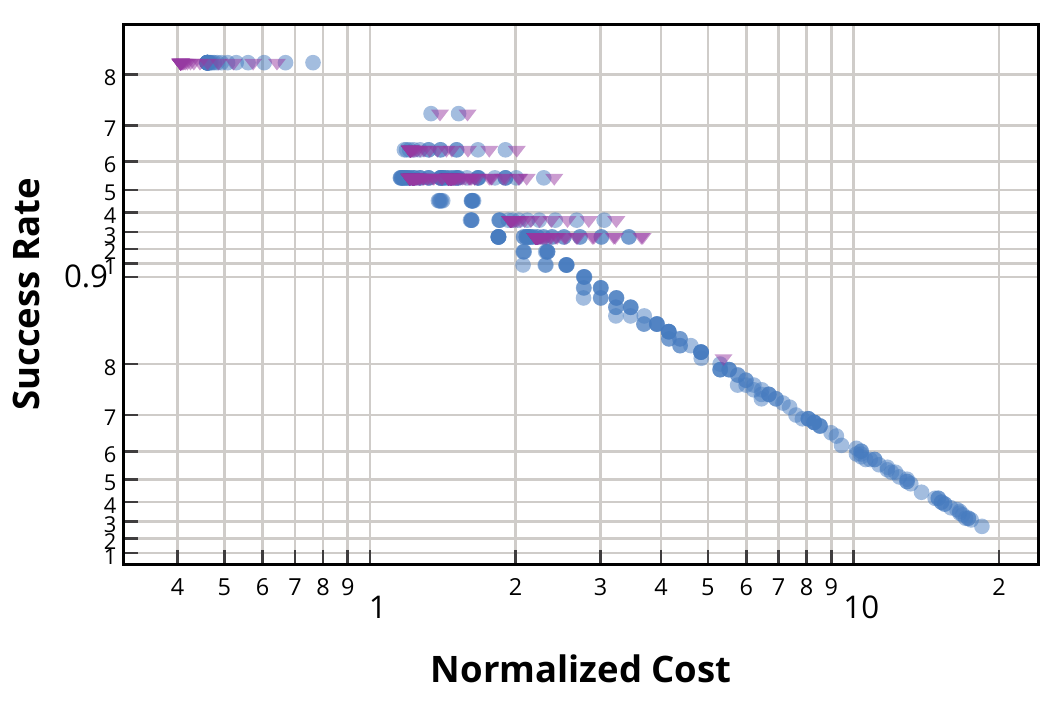}\label{fig:appendix-forrester_dynamic_Scatter_dynamic_overall_log_log_rev-Scatter-success_rate-norm_cum_c_t}}
    \subfloat[\levy-\dynamic]{\includegraphics[width=0.32\textwidth]{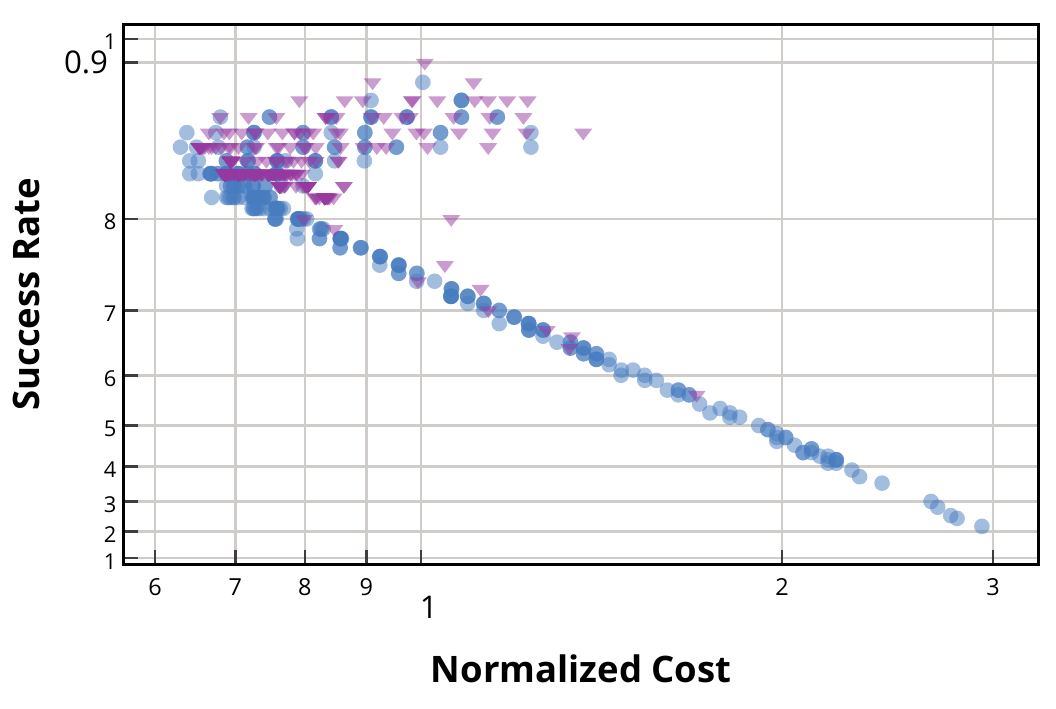}\label{fig:appendix-levy_dynamic_Scatter_dynamic_overall_log_log_rev-Scatter-success_rate-norm_cum_c_t}}\\
    \caption{Experiments applying a simple dynamic hyperparameter strategy for the \aggressivesubtraction attack. The top row averages over random seeds, and the bottom row shows every individual run.}
    \label{fig:appendix-dynamic}
\end{figure*}

\end{document}